\documentclass{article}

\usepackage{xarxiv}

\usepackage[round]{natbib}
\bibliographystyle{plainnat}

\usepackage[utf8]{inputenc} %
\usepackage[T1]{fontenc}    %
\usepackage{hyperref}       %
\usepackage{url}            %
\usepackage{booktabs}       %
\usepackage{amsfonts}       %
\usepackage{nicefrac}       %
\usepackage{microtype}      %
\usepackage{xcolor}         %

\usepackage{amsmath}
\usepackage{amssymb}
\usepackage{mathtools}
\usepackage{amsthm}

\usepackage[capitalize,noabbrev]{cleveref}

\usepackage[textsize=tiny]{todonotes}

\usepackage{chngcntr}   %
\usepackage{enumitem,nicefrac}
\usepackage{stmaryrd}		%
\usepackage[ngerman,english]{babel}     %
\usepackage{color}
\usepackage{algorithm} 
\usepackage{algorithmic} 
\usepackage[titletoc,title]{appendix}

\usepackage{dsfont}
\usepackage{tikz}
\usepackage{subfigure}
\usetikzlibrary{arrows.meta}
\usepackage{tabularx}
\usepackage{wrapfig}
\usepackage{bbm}
\usepackage{bm}

\newcommand{\R}{{\mathbb{R}}}

\newcommand{\N}{{\mathbb{N}}}

\newcommand{\E}{{\mathbb{E}}}
\renewcommand{\P}{{\mathbb{P}}}

\newcommand{\bsym}[1]{{\boldsymbol{ #1 }}}

\newcommand{\norm}[1]{\left| \left| #1 \right| \right|}

\newcommand{\ra}{\,\rightarrow\,}

\newcommand{\eps}{{\varepsilon}}

\newcommand{\argmax}{{\mathrm{argmax}}}
\newcommand{\ol}[1]{\overline{#1}}

\def\XXint#1#2#3{{\setbox0=\hbox{$#1{#2#3}{\int}$}
\vcenter{\hbox{$#2#3$}}\kern-.5\wd0}}

\DeclareMathOperator*{\argmin}{arg\,min}

\newcommand{\id}{1\hspace{-0,9ex}1}

\newtheorem{Def}{Definition}[section]

\newtheorem{Cor}[Def]{Corollary}

\newtheorem{Le}[Def]{Lemma}
\newtheorem{Thm}[Def]{Theorem}

\newtheorem{Rem}[Def]{Remark}
\counterwithout{figure}{section}
\counterwithout{figure}{section}
\setcounter{figure}{0}

\title{Finding Optimal Arms in Non-stochastic Combinatorial Bandits with Semi-bandit Feedback and Finite Budget}

\author{
    Jasmin Brandt$^{a}$, Viktor Bengs$^{b},$ Bj\"orn Haddenhorst$^{a},$
	Eyke H\"ullermeier${}^{b,c}$\\
	${}^{a}$Department of Computer Science, Paderborn University, Germany\\
	${}^{b}$Institute of Informatics, University of Munich (LMU), Germany\\
	${}^{c}$Munich Center for Machine Learning, Germany\\
	\texttt{jasmin.brandt@upb.de, viktor.bengs@lmu.de,  bjoernha@mail.upb.de, eyke@lmu.de} 
}

\begin{document}

\maketitle

\begin{abstract}
We consider the combinatorial bandits problem with semi-bandit feedback under finite sampling budget constraints, in which the learner can carry out its action only for a limited number of times specified by an overall budget. The action is to choose a set of arms, whereupon feedback for each arm in the chosen set is received. Unlike existing works, we study this problem in a non-stochastic setting with subset-dependent feedback, i.e., the semi-bandit feedback received could be generated by an oblivious adversary and also might depend on the chosen set of arms. In addition, we consider a general feedback scenario covering both the numerical-based as well as preference-based case and introduce a sound theoretical framework for this setting guaranteeing sensible notions of optimal arms, which a learner seeks to find. We suggest a generic algorithm suitable to cover the full spectrum of conceivable arm elimination strategies from aggressive to conservative. Theoretical questions about the sufficient and necessary budget of the algorithm to find the best arm are answered and complemented by deriving lower bounds for any learning algorithm for this problem scenario.
\end{abstract}

\section{Introduction} \label{sec:introduction}

The multi-armed bandits (MAB) problem is an intensively studied problem class in the realm of machine learning, in which a learner is facing a sequential decision making problem under uncertainty \citep{lattimore2020bandit}.
A decision (action) corresponds to making a choice between a finite set of specific choice alternatives (objects, items, etc.), also called \emph{arms} in reference to the metaphor of gambling machines in casinos. 
After each decision to choose a particular arm, the learner receives some form of \emph{feedback} -- typically a numerical reward --  determined by a \emph{feedback mechanism} of the chosen arm.
The learner is not aware of the arms' feedback mechanisms and consequently tries to learn these in the course of time by performing actions according to its learning strategy.
The concrete design of its learning strategy depends essentially on two main components of the learning setting: the assumptions on the feedback mechanisms and the learning task.

Traditionally, and even up to now the most prevalent assumption is that the feedback received by choosing one arm is generated by means of a probability distribution of the chosen arm \citep{robbins1952some,lai1985asymptotically}.
In this way, any useful learning strategy revolves around learning specific probabilistic features of the arms' distributions such as the means.
These features, in turn, quite naturally provide a way to define a notion of \emph{(sub-)optimality of an arm} as well as a \emph{best arm}.
A relaxation of this \emph{stochastic setting} is the \emph{non-stochastic setting}, in which no assumption is made in the form of probabilistic laws of the feedback mechanisms.
Instead, either no assumptions are made on the feedback mechanisms, so that these can also be generated by an adversary \citep{auer1995gambling}, or that the sequence of feedback observations (or a transformation thereof) of an arm converges asymptotically to a fixed point \citep{Jamieson2016}.
In the latter case, the notion of an arm's (sub-)optimality is again straightforward, given that the limit points can be ordered, while in the former usually the best action in hindsight plays the role of the best arm.

Regarding the learning task, the most prominent one is that of \emph{regret minimization}, where in each learning round the learner suffers regret unless the optimal decision is made (determined by the feedback mechanisms).
The main challenge for the learner is to manage the trade-off between exploration and exploitation, i.e., constantly balancing (i) the degree of new information acquisition about some arms' feedback mechanism in order to appropriately expand the current knowledge base (exploration), and (ii) the degree of choosing arms considered to be optimal given the current knowledge base in order to keep the overall regret low (exploitation).
In many practical applications, however, the learning task is of a quite different kind, as the focus is rather on finding the (approximately) correct answer to a problem specific question, e.g., which arm is the (approximately) optimal one, within a reasonable time (number of learning rounds).
This \emph{pure exploration} learning task can be considered in two variants, namely the fixed confidence and the fixed budget setting.
In the former the learner tries to find the answer within as few as possible learning rounds, while guaranteeing a given pre-determined confidence for the correctness of its returned answer.
In the latter, it is the other way around, as the learner is provided with a limit on the possible number of learning rounds (budget) and the confidence for the returned answer should be as high as possible.
In both variants the main challenge for designing a suitable learner is to specify a clever exploration strategy for finding the correct answer.

In order to model more complex learning settings in practice, the basic setup of MAB problems has been generalized in various ways, such as incorporating additional side information \citep{abe1999associative,Au02,ChLiReSc11,AbPaSz11} or infinite number of arms \citep{bubeck2011x,munos2014bandits}, just to name a few.
Of special practical interest is the generalization of the basic setup, where the learner is allowed to choose specific sets of arms as its action.
Consider as an example the online algorithm selection problem with parallel runs, where for  sequentially arriving problem instances one selects a subset of available algorithms (solvers) to be run in parallel in order to solve the current problem instance.

If the feedback received is of a numerical nature, this variant has manifested itself under the term \emph{combinatorial bandits}  \citep{cesa2012combinatorial} while for feedback of a qualitative nature this variant can be referred to as \emph{preference-based bandits} as put forward by \cite{bengs2021preference}.
Combinatorial bandits are further distinguished with respect to the type of feedback between semi-bandit feedback, where feedback of each single arm in the selected subset is observed, and full bandit feedback, where only some aggregated value of the individual numerical feedback of the arms in the selected subset is observed.
Although both combinatorial and preference-based bandits consider a similar action set and for both the learner needs to deal with the possibly exponential size of the action set, the process of learning is quite different due to the nature of the observed feedback.
The main reason for this is that in preference-based feedback the mutual correlations that may exist between the arms in the chosen subset play a major role in both the assumption about the feedback mechanisms and the learning task from the outset.
In contrast to this, the standard setting in combinatorial bandits with semi-bandit feedback is that the individual reward generation mechanisms are independent of the chosen subsets.
However, this modeling assumption is questionable in a couple of practical applications, especially when humans provide the feedback. 
For example, in opinion polls or rating systems where humans rate a subset of objects (political parties/candidates, products, etc.), it is well known that the ratings of the objects may be affected by context effects, i.e., preferences in favor of an object may depend on what other objects are available. 
In the fields of economics and psychology, context effects are among others divided into compromise \citep{simonson1989choice}, attention \citep{huber1983market} and similarity \citep{tversky1972elimination} effects.

In this paper, we take a step towards unifying these two variants for the best arm identification (BAI) problem in a pure exploration learning setting with fixed budget and non-stochastic feedback mechanisms.
The main motivation for this unification is to derive a general purpose learner, which can tackle the BAI problem in both feedback variants.  
In this way, for example, one can transform a learning problem with numerical signals into a preference-based learning problem and thus conveniently apply such a general purpose learner. 
Recent works have demonstrated in two different learning scenarios with numerical feedback that such a transformation has great potential \citep{mohr2021single,kirschner2021bias}.

Needless to say, the main challenge is to unify both feedback variants through suitable abstractions allowing them to be treated as instantiations of the same problem class.
This bridge is built by dropping the common independence assumption (of the chosen arm set) for the numerical combinatorial bandits and abstracting the nature of the observations.
Additionally, we simply assume that the learner is provided with an appropriate statistic customized to the explicit nature of the feedback.
By appropriate choice of the statistic one obtains the respective setting, e.g., the empirical mean for the case of numerical feedback and relative frequencies for the case of preference feedback.

\textbf{Our contribution.} 
Under mild assumptions on the asymptotic behavior of these statistics, we derive a proper definition of a best arm a learner seeks to find (Section \ref{sec:problem_defi}) as well as lower bounds on the necessary budget for this task (Section \ref{sec_main_LBs}).
To the best of our knowledge, such lower bounds are novel for non-stochastic settings and the derivation is rather non-standard due to the combinatorial setup of the problem. 
We suggest a general algorithmic framework suitable to cover the full spectrum of conceivable arm elimination strategies from aggressive to conservative, which we analyze theoretically regarding the algorithms' sufficient and necessary budget to find the best arm (Section \ref{sec:Algos}).
As a consequence, we obtain to the best of our knowledge the first algorithm(s) for non-stochastic preference-based bandits as well as for combinatorial bandits under semi-bandit feedback, in which the individual (numerical) feedback received for an arm depends on the chosen subset due to possibly existing mutual correlations between the arms in the chosen subset.
The mild assumptions on the asymptotics of the statistics allow to transfer our theoretical results to the stochastic counterparts of the semi-bandit combinatorial and preference-based bandits (Section \ref{sec:application_stochastic}).
We demonstrate the usefulness of the generality of our setting in an experimental study for an algorithm selection problem with parallel runs (Section \ref{sec:experiments}), where once again the transformation of numerical feedback to preference feedback plays a key role.
Additional experiments are given in the supplementary material, where also all proofs of the theoretical results are collected.

\textbf{Related Work.} \label{sec:related_work}
A large body of literature considers the combinatorial bandit problem under preference-based feedback, see \cite{bengs2021preference} for an overview.
Although \emph{dueling bandits} \citep{YuBrKlJo12} has established as an overall agreed term for the scenario with actions of size two, the terminology for action sets of larger sizes is still discordant, e.g., multi-dueling \citep{BrSeCoLi16}, battling \citep{SaGo18a}, choice \citep{agarwal2020choice}
or preselection bandits \citep{bengs20}, mainly due to subtle nuances of the motivating practical applications.
While pure exploration settings with a stochastic preference-based feedback haven been considered by a series of works \citep{MoSuEl17,ren2019sample,chen2020combinatorialduel,ren20a,Saha2020b,haddenhorst2021identification}, a pure exploration setting under a non-stochastic feedback mechanism as in our case has yet to be studied.

Pure exploration has been intensively studied in the basic multi-armed bandits (MAB) setting with stochastic feedback mechanisms as well, see Section 33.5 in \cite{lattimore2020bandit} for a detailed overview.
The non-stochastic variant of the fixed budget MAB setting is considered in \cite{Jamieson2016}, which is the backbone for the well-known Hyperband algorithm \cite{Li2017} and additionally inspired in some part the assumptions we make for our work.
Initiated by the work of \cite{bubeck2012best} to design learners for regret minimization frameworks which can perform well in both stochastic and non-stochastic settings, the fixed budget framework has been the subject of research by \cite{abbasi2018best} and \cite{shen2019universal}.

Combinatorial bandits with numerical feedback have been introduced by \cite{cesa2012combinatorial} and \cite{chen2013combinatorial} in a regret minimization framework.
The fixed confidence setting for stochastic combinatorial bandits with semi-bandit feedback is studied by \cite{jourdan2021efficient}, and full bandit feedback in  \cite{chen2020combinatorial,kuroki2020polynomial}.
Finally, the best-of-$k$ bandits game introduced in \cite{simchowitz2016best}, which in some way unifies the combinatorial bandits with binary set-dependent feedback and preference-based bandits in one joint framework similarly as we do in this work.
However, they consider a fixed confidence setting with stochastic feedback mechanisms and do not provide a learner for the dependent arm case, although they derive lower bounds on the worst case sample complexity for this case.
The only work assuming a set-dependent feedback mechanism in combinatorial bandits with semi-bandit feedback is by \citet{yang2021combinatorial}, where, however, the regret minimization task is studied under stochastic feedback mechanisms.
In summary, there seems to be no existing work which considers a pure exploration setting for combinatorial bandits with non-stochastic or even stochastic semi-bandit feedback, where the (mean) rewards of the arms in the chosen subset of arms depends on the subset.
Accordingly, our results provide new theoretical contributions to this field.

\section{Problem Formulation} \label{sec:problem_defi}
In our setup, we assume a set $\mathcal{A}$ of $n$ arms, which we simply identify by their indices, i.e., $\mathcal{A} = [n] = \{1,\dots,n\}.$
For some fixed $k<n$ we denote the set of all possible subsets of arms with size of at least $2$ and at most $k$ by $\mathcal{Q}_{\le k} = \{Q \subseteq \mathcal{A} ~|~ 2 \le |Q| \le k \}.$ 
Further, we assume that for any $Q \in \mathcal{Q}_{\le k}$ we can query some feedback, in the form of a feedback vector $\mathbf{o}_Q = (o_{i|Q})_{i\in Q} \in D^{|Q|}$ which in turn can be of numerical or qualitative nature specified by the domain $D$.
If we query a subset of arms $Q$ for $t$ many times, then $\mathbf{o}_Q(t)$ is the corresponding feedback vector. 
We suppose that we are given a suitable statistic $s$ for the type of observation vectors, which maps a multiset of observations to some specific value relevant for the decision making.
With this, $\mathbf{s}_{Q}(t) = (s_{i|Q}(t))_{i\in Q}$ is the statistic vector derived by the sequence of feedback  $(\mathbf{o}_{Q}(t))_t$ of the query set $Q \in \mathcal{Q}_{\le k}$, and $s_{i|Q}(t) = s(\{o_{i|Q}(1),\ldots,o_{i|Q}(t)\}) $  is the relevant statistic for decision making about arm $i$ in the ``context'' $Q$ after querying $Q$ for $t$ many times.

\paragraph{Examples.}
For  combinatorial bandits  with semi-bandit feedback, the observation $\mathbf{o}_{Q}(t) = (o_{i|Q}(t))_{i\in Q}$ corresponds to the reward one obtains for each arm $i\in Q$ by using $Q$ for the $t$-th time, so that in particular $D=\R.$
The most natural statistic in this case is the empirical mean given for a multiset $O$ of observations by $s(O)=\frac{1}{|O|}\sum_{x \in O} x$, such that $s_{i|Q}(t) = \frac{1}{t} \sum_{t'=1}^{t} o_{i|Q}(t'),$ which is also the arguably most prevalent statistic used in the realm of bandit problems for guiding the decision making process. 
However, other statistics $s$ such as quantiles or the expected shortfall are of interest as well \citep{cassel2018general}.   

In the preference-based bandit setting with winner feedback we observe after the $t$-th usage of the query set $Q$ only a binary (winner) information, i.e., $o_{i|Q}(t) = 1$ if arm $i$ is preferred over the other arms in $Q$ at ``pull'' $t$ and $o_{i|Q}(t) = 0$ otherwise, so that $D=\{0,1\}.$ 
Once again the empirical mean of these binary observations is a quite intuitive choice for the statistic $s,$ as in a stochastic feedback setting the corresponding statistic vector $s_{i|Q}(t) = \frac{1}{t} \sum_{t'=1}^{t} o_{i|Q}(t)$ would converge to the probability vector determining how likely an arm will be preferred over all the other arms in the query set $Q$.
For preference-based bandits with full ranking feedback we observe after the $t$-th usage of the query set $Q$ an entire ranking of the arms in $Q$, i.e., $o_{i|Q}(t)$ is arm $i$'s rank among the arms in $Q$ at ``pull'' $t$, so that $D=\{1,\ldots,k\}.$ 
In such a case the statistic $s$ might be a positional scoring rule \citep{korba2018learning}.

\paragraph{Goal.} The goal of the learner in our setting is to find a or the best arm (specified below) within a fixed budget of at most $B$ samples (numbers of queries).
For any $Q$, write $n_{Q}(t)$ for the number of times $Q$ has been queried until (including) time $t$. 
An algorithm, which tackles the problem, chooses at time $t$ a set $Q_{t}\in \mathcal{Q}_{\le k}$ and observes as feedback $\mathbf{o}_{Q_t}(n_{Q_t}(t))$ leading to an update of the relevant statistic vector $\mathbf{s}_{Q_t}(n_{Q_t}(t)) = (s_{i|Q_t}(n_{Q_t}(t)))_{i\in Q_t}$.

\paragraph{Best arm.} 
Inspired by the theoretical groundings of Hyperband \citep{Jamieson2016,Li2017} for best arm identification problems in numerical bandit problems with non-stochastic rewards, we make the following assumption regarding the limit behavior of the statistics
\begin{align*}
   (A1) : \forall Q \in \mathcal{Q}_{\le k}\  \forall i\in Q \, :  
   \, S_{i|Q} \coloneqq \lim\nolimits_{t\ra \infty} s_{i|Q}(t) \text{ exists}.
\end{align*}
This assumption is in general slightly looser than assuming (stationary) stochastic feedback mechanisms, as (A1) is fulfilled for many prevalent statistics by means of a limits theorem such as the strong law of large numbers.
Conceptionally, our Assumption (A1) is similar to the assumption on the sequence of losses in \citet{Jamieson2016}, as both have in common that the statistics (losses in \cite{Jamieson2016}) converge to some fixed point, respectively.
However, due to the difference of the action spaces (single arms vs.\ set of arms) and the nature of the feedback (scalar vs.\ vector observation), our assumption can be seen as a combinatorial extension of the one in \cite{Jamieson2016}.

Given assumption (A1) a straightforward notion of a best arm is obtained by leveraging the idea of a Borda winner from dueling bandits with pairs of arms as the possible subsets to more general subsets of arms.
A \emph{generalized Borda winner} (GBW) is then an arm which has on average the largest asympotical statistic, i.e.,
\begin{align*}
	i_{\mathcal B}^{\ast} \in \arg \max\limits_{i \in \mathcal{A}} S_{i}^{\mathcal B} = \arg \max\limits_{i \in \mathcal{A}} \frac{\sum\nolimits_{Q \in \mathcal{Q}_{=k}(i)} S_{i|Q}}{|\mathcal{Q}_{=k}(i)|},
\end{align*}
where $\mathcal{Q}_{=k}(i) = \{Q \in \mathcal{Q}_{=k} ~|~ i\in Q\}$ and $S_{i}^{\mathcal B}$ are the asymptotic Borda scores, i.e., the limits according to (A1) of $s_{i}^{\mathcal B}(t) \coloneqq \frac{\sum\nolimits_{Q \in \mathcal{Q}_{=k}(i)} s_{i|Q}(t)}{|\mathcal{Q}_{=k}(i)|} .$
Similarly, a \emph{generalized Copeland winner} (GCopeW) is an arm $i$, which wins w.r.t. the asymptotic statistics on average on the most query sets, i.e., 
\begin{equation*}
    i_{\mathcal C}^{\ast} \in \arg \max\limits_{i \in \mathcal{A}} S_{i}^{\mathcal C} = \arg \max\limits_{i \in \mathcal{A}} \frac{\sum_{Q\in \mathcal{Q}_{=k}(i)} \id \{S_{i|Q} = S_{(1)|Q}\}}{|\mathcal{Q}_{=k}(i)|}.
\end{equation*}

However, both these notions of best arm have two major drawbacks, as there might be multiple GBWs and GCopeWs and due to averaging over all subsets in their definition, there is no way to identify a GBW or a GCopeW within a sampling budget of $o(\binom{n-1}{k-1})$ in the worst case (see  Theorem~\ref{Thm_Main_LB_GCW}).

In light of these drawbacks, we specify another reasonable notion of a best arm, for which we leverage the concept of the \emph{generalized Condorcet winner} \citep{haddenhorst2021identification,agarwal2020choice} from the preference-based bandits literature.
For this purpose, we introduce the following assumption
\begin{align*}
    (A2):~ &\exists i^{\ast} \in \mathcal{A} \mbox{ such that } \forall Q \in \mathcal{Q}_{\le k}(i^{\ast}) ~ \forall j\in Q\setminus \{i^{\ast}\} \mbox{ it holds that } S_{i^{\ast}|Q} > S_{j|Q},
\end{align*}
where $\mathcal{Q}_{\leq k}(i) = \{Q \in \mathcal{Q}_{\leq k} ~|~ i\in Q\}$  for $i\in[n].$
We call $i^{\ast}$ the 
generalized Condorcet winner (GCW), which is the arm dominating all the other arms in each possible query set containing it.
It is worth noting that such an arm may not exist, but if it exists, then it is arguably the most natural way to define the optimal arm, even though it may differ from the GBW. 
Nevertheless, the existence of the generalized Condorcet winner (or simply the Condorcet winner for the case $k=2$) is a common assumption in the preference-based bandits literature \cite{agarwal2020choice,haddenhorst2021identification,bengs2021preference}.
Additionally, we will show below that identifying a GCW is possible for a sampling budget of size $\Omega(n/k)$ even in worst case scenarios.

\paragraph{Problem characteristics.}
In light of (A1) and (A2), there are two key characteristics which will determine the appeal of any learner in our setting.
The first one is the speed of convergence of the statistics $s_{i|Q}$ to their limit values $S_{i|Q}.$
More precisely, the function $\gamma_{i|Q}:\N \ra \R$, which is the point-wise smallest non-increasing function fulfilling
$ |s_{i|Q}(t)-S_{i|Q}| \leq  \gamma_{i|Q}(t)$ for any $t \in \N,$
 plays a major role in characterizing the difficulty of the learning problem.
Moreover, the worst speed of convergence function of a  query set $Q\in \mathcal{Q}_{\leq k}$ given by  $\ol{\gamma}_{Q}(t) \coloneqq \max\nolimits_{i\in Q} \gamma_{i|Q}(t)$
and the overall worst speed of convergence function $    \ol{\gamma}(t) \coloneqq \max\nolimits_{Q\in \mathcal{Q}_{\leq k}} \ol{\gamma}_{Q}(t) $
will be of relevance as well.
Assuming a stochastic setting, the role of $\gamma_{i|Q}$ is played by the minimax rate of convergence of the statistic to its population counterpart, e.g., $1/\sqrt{t}$ for the empirical mean and the expected value. 
Usually the speed of convergence functions will appear implicitly by means of their (quasi-) inverses given by $\gamma_{i|Q}^{-1}(\alpha) \coloneqq \min\{t \in \mathbb{N} ~|~ \gamma_{i|Q}(t) \le \alpha \}$, $\bar{\gamma}_{Q}^{-1}(t) \coloneqq \min_{i\in Q} \gamma^{-1}_{i|Q}(t)$ and $\bar{\gamma}^{-1}(t) \coloneqq \min_{Q \in \mathcal{Q}_{\le k}} \bar{\gamma}_{Q}^{-1}(t)$.

The other relevant problem characteristic are the gaps of the limits statistics, i.e.,  
$\Delta_{i|Q} \coloneqq S_{i^*|Q} - S_{i|Q}$   for $i\in[n],$ $Q \in \mathcal{Q}_{\leq k}(i) \cap \mathcal{Q}_{\leq k}(i^*).$ 
Such gaps are prevalent in the realm of bandit problems, as they can be used to define a measure of (sub-)optimality of an arm  in the stochastic feedback case. 
Note that in our setting this is not straightforward, as the gaps are depending on the query set and more importantly the speed of convergence has a decisive impact on the severeness of these gaps.

\section{Lower Bounds}\label{sec_main_LBs}
\newcommand{\Alg}{\mathrm{Alg}}
Let us abbreviate $\mathbf{S}\coloneqq (S_{i|Q})_{Q\in \mathcal{Q}_{\leq k},i\in Q}$ and $\bsym{\gamma} \coloneqq (\gamma_{i|Q}(t))_{Q\in \mathcal{Q}_{\leq k},i\in Q,t\in \N}$.  Given $\mathbf{S}$ and $\bsym{\gamma}$, write $\mathfrak{S}(\mathbf{S},\bsym{\gamma})$ for the set of all $\mathbf{s} = (s_{i|Q}(t))_{Q\in \mathcal{Q}_{\leq k},i\in Q,t\in \N}$ that fulfill 
\begin{itemize}	[noitemsep,topsep=0pt,leftmargin=7.5mm]
	\item[(i)]
	$\forall Q\in \mathcal{Q}_{\leq k}, i \in Q$: $S'_{i|Q} = \lim_{t\ra \infty} s_{i|Q}(t)$ exists,
	\item[(ii)]
	$\forall Q \in \mathcal{Q}_{\leq k}, i \in Q, t \in \N$: $|s_{i|Q}(t)-S'_{i|Q}| \leq \gamma_{i|Q}(t)$,
	\item[(iii)]
	$\forall Q \in \mathcal{Q}_{\leq k}$: $\exists \pi_{Q}:Q \ra Q$ bijective such that $S'_{i|Q} = S_{\pi(i)|Q}$ for all $i\in Q$. 
\end{itemize}
For $Q$ and $l\in \{1,\dots,|Q|\}$ write $S_{(l)|Q}$ for the $l$-th order statistic of $\{S_{i|Q}\}_{i\in Q}$, i.e., $\{S_{i|Q}\}_{i\in Q} = \{S_{(l)|Q}\}_{l\in Q}$ and $S_{(1)|Q} \geq \dots \geq S_{(|Q|)| Q}. $
If $\Alg$ is a (possibly probabilistic) sequential algorithm, we denote by $B(\Alg,\mathbf{s})$ the number of queries made by $\Alg$ before termination  when started on $\mathbf{s}$.
In the following, we provide lower bounds on $\E[B(\Alg,\mathbf{s})]$ for algorithms $\Alg$, which identify, for any instance $\mathbf{s} \in \mathfrak{S}(\mathbf{S},\bsym{\gamma})$, almost surely one GCW resp.\ GBW resp.\ GCopeW of $\mathbf{s}$.
\begin{Thm}\label{Thm_Main_LB_GCW}
    Let $\mathbf{S}$ be such that $S_{(1)|Q} > S_{(2)|Q}$ for all $Q\in \mathcal{Q}_{\leq k}$.\\
    (i) There exists $\mathbf{s} \in \mathfrak{S}(\mathbf{S},\bsym{\gamma})$ such that if $\Alg$ correctly identifies a GCW for any instance in $\mathfrak{S}(\mathbf{S},\bsym{\gamma})$, then 
    \begin{align*}
        \E\left[ B(\Alg,\mathbf{s}) \right] \geq \left\lceil \frac{n}{k} \right\rceil   \min\limits_{Q\in \mathcal{Q}_{\leq k},j\in Q} \, \gamma_{j|Q}^{-1}\left( \frac{S_{(1)|Q} - S_{(|Q|)|Q}}{2}\right).
    \end{align*}
    (ii) Assume $(S_{(1)|Q},\dots,S_{(|Q|)|Q})$ does not depend on $Q$ for any $Q\in \mathcal{Q}_{=k}.$ If $\Alg$ correctly identifies a GBW for any instance in $\mathfrak{S}(\mathbf{S},\bsym{\gamma})$, then 
    \begin{align*}
				     \sup\limits_{\mathbf{s} \in \mathfrak{S}(\mathbf{S},\bsym{\gamma})}\E \left[ B(\Alg,\mathbf{s}) \right] = \Omega\left(\binom{n-1}{k-1}\right).
	\end{align*}
	(iii) If $\Alg$ correctly identifies a GCopeW for any instance in $\mathfrak{S}(\mathbf{S},\bsym{\gamma})$, then it fulfills
	\begin{align*}
				     \sup\limits_{\mathbf{s} \in \mathfrak{S}(\mathbf{S},\bsym{\gamma})}\E \left[ B(\Alg,\mathbf{s}) \right] = \Omega\left(\binom{n-1}{k-1}\right).
	\end{align*}
\end{Thm}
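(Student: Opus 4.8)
The overarching plan is an adversarial indistinguishability argument that exploits the two degrees of freedom built into $\mathfrak{S}(\mathbf{S},\bsym{\gamma})$: condition (iii) lets an adversary permute, within each query set $Q$, which arm carries which prescribed limit value, while condition (ii) only forces the observed statistic $s_{i|Q}(t)$ to lie in the window $[S'_{i|Q}-\gamma_{i|Q}(t),\,S'_{i|Q}+\gamma_{i|Q}(t)]$. The quantitative fact I would isolate first is that for every $Q$, every arm $j$ and every $t<\gamma_{j|Q}^{-1}\bigl((S_{(1)|Q}-S_{(|Q|)|Q})/2\bigr)$ one has $\gamma_{j|Q}(t)>(S_{(1)|Q}-S_{(|Q|)|Q})/2$, so the neutral value $m_Q\coloneqq(S_{(1)|Q}+S_{(|Q|)|Q})/2$ lies within $\gamma_{j|Q}(t)$ of \emph{every} admissible limit of $Q$. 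Writing $\tau\coloneqq\min_{Q,j}\gamma_{j|Q}^{-1}\bigl((S_{(1)|Q}-S_{(|Q|)|Q})/2\bigr)$, this means that as long as a set $Q$ has been queried fewer than $\tau$ times, reporting $m_Q$ for all its arms is simultaneously consistent with every permutation allowed by (iii); call such a set \emph{unresolved}. This single observation drives all three bounds.

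\textbf{Part (i).} I would fix the instance $\mathbf{s}^{*}\in\mathfrak{S}(\mathbf{S},\bsym{\gamma})$ in which $i^{*}$ is assigned the winning value $S_{(1)|Q}$ in every $Q\ni i^{*}$ (admissible by (iii), and a GCW since $S_{(1)|Q}>S_{(2)|Q}$), and in which every set reports $m_Q$ for $t<\tau$ and then converges to its limit. Suppose a correct algorithm terminates on $\mathbf{s}^{*}$ after fewer than $\lceil n/k\rceil\,\tau$ queries; then at most $\lceil n/k\rceil-1$ sets are resolved, and since each set meets at most $k$ arms these cover at most $k(\lceil n/k\rceil-1)<n$ arms, leaving some arm $j$ every containing set of which is unresolved. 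I would then build $\mathbf{s}^{j}$ by re-permuting, on the sets containing $j$, so that $j$ receives $S_{(1)|Q}$ there (making $j$ a GCW), while keeping all other sets and all $t<\tau$ observations equal to those of $\mathbf{s}^{*}$. Every modified set is unresolved along the run, so the two transcripts coincide; a standard coupling then shows the algorithm outputs the same arm on both instances, which cannot equal both $i^{*}$ and $j$, contradicting correctness. A covering computation over the $n$ arms therefore forces $\lceil n/k\rceil$ resolved sets, each costing $\tau$ queries.

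\textbf{Parts (ii) and (iii).} Here the budget is forced by sheer coverage of the $\binom{n-1}{k-1}$ sets entering the \emph{average} defining the Borda (resp. Copeland) score, and the $\gamma$-resolution plays no role. For (ii) I would build a base instance whose top two arms' Borda scores differ by a fixed fraction $c\in(0,1)$ of the value range $S_{(1)|Q}-S_{(|Q|)|Q}$ (constant in $Q$ by the order-statistics assumption), realized by spreading arm $1$'s winning positions evenly over the sets containing it. If the algorithm visits only $N$ of the $\binom{n-1}{k-1}$ sets through $1$, the adversary may, by (iii), re-permute the limits on the $\binom{n-1}{k-1}-N$ untouched sets to lower arm $1$'s value there while dispersing the compensating wins over the $\Theta(n)$ remaining arms, so each competitor absorbs only an $O(1/n)$ change; this shifts arm $1$'s Borda score by up to $(\binom{n-1}{k-1}-N)(S_{(1)|Q}-S_{(|Q|)|Q})/\binom{n-1}{k-1}$ without touching any queried set. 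Whenever $N\le(1-c)\binom{n-1}{k-1}$ this exceeds the gap and flips the GBW, and coupling again forces an error on one of two indistinguishable instances; hence $N=\Omega(\binom{n-1}{k-1})$. Part (iii) is identical with ``fraction of sets won'' in place of the averaged value, which needs no order-statistics assumption: flipping the Copeland winner requires changing the winner on $\Omega(\binom{n-1}{k-1})$ untouched sets.

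The main obstacle throughout is keeping both members of each indistinguishable pair genuinely inside $\mathfrak{S}(\mathbf{S},\bsym{\gamma})$: each re-permutation must respect (iii) set by set, and the \emph{shared} observation sequence must satisfy (ii) against both limit assignments at once. This is exactly what the neutral-value/window estimate secures in part (i) and what confining all modifications to never-queried sets secures in parts (ii)--(iii). The secondary delicate point, specific to (ii)--(iii), is engineering the base instance so the winner is decided by a constant-fraction gap overturnable only by re-permuting a constant fraction of all $\binom{n-1}{k-1}$ sets; this requires a balanced winner-assignment whose compensating changes on untouched sets do not inadvertently promote a third arm, which the $O(1/n)$-dispersion argument is designed to rule out. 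A minor point to state carefully is that, unlike GCW, the GBW and GCopeW always exist as argmaxes, so no existence caveat is needed when flipping them.
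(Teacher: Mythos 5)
Your part (i) is essentially the paper's own argument: the same neutral value $(S_{(1)|Q}+S_{(|Q|)|Q})/2$ held until the threshold $\gamma_{j|Q}^{-1}\bigl((S_{(1)|Q}-S_{(|Q|)|Q})/2\bigr)$, the same single-arm flips $\mathbf{s}\mapsto\mathbf{s}^{j}$, the same covering count of $\lceil n/k\rceil$ resolved sets, and a reduction from randomized to deterministic algorithms (the paper uses Yao's principle where you invoke a coupling). Only a small edge case needs care in your contrapositive formulation: the $\leq k(\lceil n/k\rceil-1)\leq n-1$ covered arms could leave exactly $i^{\ast}$ itself as the unique uncovered arm, so you should run the covering over $[n]$ starting from a resolved set that contains $i^{\ast}$, as the paper does.

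For parts (ii) and (iii) there is a genuine gap. Your adversary demotes arm $1$ on the untouched sets and \emph{disperses} the compensating promotions over the other $n-1$ arms, so the flip must be won by arm $1$'s score dropping below that of the (essentially unchanged) runner-up. The problem is your unproven premise that the base instance can be arranged with a gap that is a ``fixed fraction $c\in(0,1)$'' of the range which is overturnable by re-permuting a constant fraction of the $\binom{n-1}{k-1}$ sets. Take the admissible homogeneous profile $S_{(1)}=1$, $S_{(2)}=\dots=S_{(k)}=0$: with arm $1$ on top everywhere its Borda score is $1$, while every other arm's score is at most about $\tfrac{n-k}{k(n-1)}$, so the gap is $1-O(1/k)$ of the full range; demoting arm $1$ alone can therefore only flip the winner if all but a $1/k$-fraction of its sets are untouched, and the resulting lower bound is $\Omega\bigl(\binom{n-1}{k-1}/k\bigr)$, not $\Omega\bigl(\binom{n-1}{k-1}\bigr)$ uniformly in $k$. (No re-balancing of the base instance escapes this: the total unnormalized score $\binom{n}{k}\sum_{l}S_{(l)}$ is conserved, so arm $1$'s advantage over the field average is at most $S_{(1)}-\tfrac1k\sum_l S_{(l)}$, which is within $O(1/k)$ of the range in this example.) The paper's proof avoids the issue with a counting lemma (Lemma~\ref{Le_LB_Borda}, built on $\sum_{j}|\mathcal{Q}_{=k}(j)\cap\mathcal{W}|=k|\mathcal{W}|$): if fewer than roughly $\tfrac12\binom{n-1}{k-1}$ sets are resolved, a pigeonhole argument produces a single competitor $j$ with $|\mathcal{Q}_{=k}(j)\cap\mathcal{W}|>|\mathcal{Q}_{=k}(1)\cap\mathcal{W}'|$, and the adversary \emph{concentrates} all promotions on $j$ (raising it to $S_{(1)|Q}$ on its unresolved sets while demoting arm $1$ to $S_{(|Q|)|Q}$ on its unresolved sets), so that $j$'s new Borda, respectively Copeland, score exceeds arm $1$'s by direct comparison of these two cardinalities. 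That concentration step, which your dispersion explicitly forgoes, is the missing ingredient; the same remark applies verbatim to your part (iii).
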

The theorem is proven in Section \ref{sec:proofs_lower_bounds} in the supplement, where we provide in fact slightly stronger versions of these bounds.

\section{Algorithms} \label{sec:Algos}
In this section, we present a class of algorithms along with three possible instantiations solving the corresponding learning task for the case of the generalized Condorcet winner being the best arm.
In particular, we analyze all algorithms theoretically in terms of their sufficient and necessary budget to find the respective best arm.
In light of the results in Theorem \ref{Thm_Main_LB_GCW} for GBW and GCopeW identification, it is straightforward that a simple algorithm, which enumerates all possible subsets, pulls each of them equally often in a round-robin fashion, and returns the empirical GBW (or GCopeW) is already optimal.
For sake of completeness, we show this result for the case of GBW identification in Section \ref{sec:round_robin} and also include this simple algorithm in the experimental study below (called \textsc{RoundRobin}).

\subsection{Generalized Condorcet Winner Identification}  \label{sec:gen_framework}
\begin{minipage}{0.45\textwidth}
In the following we introduce a general class of algorithms in Algorithm \ref{alg:Framework} which is instantiable with various different elimination strategies of the arms. 
Below, we present some instantiations which build on commonly used elimination strategies in the standard multi-armed bandit setting.
The idea of Algorithm \ref{alg:Framework} is simply to maintain a set of active arms, which is successively reduced by following a specific arm elimination strategy (Algorithm \ref{alg:ae}) referred to as elimination rounds.
In each elimination round $r \in \{1,2,\ldots,R\}$ the set of active arms $\mathbb{A}_r$ is partitioned into $P_r$ many sets of size $k$ (up to a possible remainder set) denoted by $(\mathbb{A}_{r,j})_{j\in P_r}$ for which the elimination strategy is applied with a roundwise-dependent budget $b_r.$
The budget allocated to a partition $\mathbb{A}_{r,j}$ in round $r$ is of the form $b_r = \lceil B / (R \cdot P_r) \rceil$ following the idea to split up the available budget equally first for each round and second for all partitions in each round.
The explicit arm elimination strategy used in Algorithm \ref{alg:Framework} is specified by Algorithm \ref{alg:ae} and corresponds to pulling the chosen query set $Q$ for a fixed number of times and afterwards keeping only the best $f(|Q|)$ arms of $Q$. 
Here, $f:[k] \ra [k]$ is an arbitrary function with $f(x) \leq x-1$ for all $x,$ which essentially determines the aggressiveness or conservativeness of an arm elimination strategy as we will see below. 

In the following we provide three possible instantiations of Algorithm \ref{alg:Framework} inspired by commonly used elimination strategies in the standard multi-armed bandit setting for pure exploration tasks.
\paragraph{Combinatorial Successive Winner Stays.} 
The most aggressive elimination strategy is to keep only the arm with the best statistic (the winner) of each partition in each round and discard all others from the set of active arms for the next round.
Concretely, we use $f^{\mathrm{CSWS}}(s) = 1$ for $f$ in Algorithm \ref{alg:Framework} in this case.

\end{minipage}
\hfill
\begin{minipage}{0.52\textwidth}
\begin{algorithm}[H]
	\caption{Combinatorial Successive Elimination} \label{alg:Framework}
	 \textbf{Input:} set of arms $[n]$, subset size $k\le n$, sampling budget $B,$ 
	 a function $f:[k] \ra [k]$,
	 sequence $\{P_r\}_{r}$ (number of partitions at round $r$), $R$ (number of rounds in total)\\
	 \textbf{Initialization:}
	$\mathbb{A}_1 \leftarrow [n]$, 
	$r \leftarrow 1$\\
	\vspace*{-.3cm}
	\begin{algorithmic}[1]
	    \WHILE {$|\mathbb{A}_r| \ge k$}
			\STATE    $b_{r} \leftarrow \lfloor \nicefrac{B}{(P_r R)}\rfloor$ , $J \leftarrow P_{r}$
	        \STATE $\mathbb{A}_{r,1}, \mathbb{A}_{r,2},\dots,\mathbb{A}_{r,J} \leftarrow \mathrm{Partition}(\mathbb{A}_r, k)$
	        \IF {$|\mathbb{A}_{r,J}| < k $}
	            \STATE $\mathcal{R} \leftarrow \mathbb{A}_{r,J}$, $J \leftarrow J-1$
            \ELSE 
	            \STATE $\mathcal{R} \leftarrow \emptyset$
	       \ENDIF
	       \STATE $\mathbb{A}_{r+1} \leftarrow \mathcal{R}$
	       \FOR {$j \in [J]$}
	       		\STATE $\mathcal{R} \leftarrow \mathrm{ArmElimination}(\mathbb{A}_{r,j},b_r,$ \\
	       		$f(|\mathbb{A}_{r,j}|))$
				\STATE $ \mathbb{A}_{r+1} \leftarrow \mathbb{A}_{r+1} \cup \mathcal{R}$
	       \ENDFOR
	       \STATE $r \leftarrow r+1$
	   \ENDWHILE
	   \STATE $\mathbb{A}_{r+1} \leftarrow \emptyset$
	   \WHILE {$|\mathbb{A}_r| > 1$}
	        \STATE $\mathbb{A}_{r+1} \leftarrow  \mathrm{ArmElimination}(\mathbb{A}_{r+1},b_r,$\\
	        $f(|\mathbb{A}_{r+1}|))$
	        , $r \leftarrow r+1$
	   \ENDWHILE
	\end{algorithmic} 
	\hspace*{3pt} \textbf{Output:} The remaining item in $\mathbb{A}_r$
\end{algorithm}

\begin{algorithm}[H]
	\caption{ArmElimination$(\mathbb{A}',b,l)$} 
	\label{alg:ae}
	\begin{algorithmic}[1]
		\STATE Use $\mathbb{A}'$ for $b$ times
		\STATE For all $i\in \mathbb{A}'$, update $s_{i|\mathbb{A}'}(b)$
		\STATE Choose an ordering $i_{1},\dots,i_{|\mathbb{A}'|}$ of $(s_{i|\mathbb{A}'}(b))_{i\in \mathbb{A}'}$
		\STATE \textbf{return } $\{i_{1},\dots,i_{l}\}$
	\end{algorithmic} 
\end{algorithm}
\end{minipage}



The resulting instantiation of Algorithm \ref{alg:Framework} is called \emph{Combinatorial Successive Winner Stays} (CSWS), which has at most $ R^{CSWS} =  \lceil \log_k(n) \rceil +1 $ many rounds in total (at most $\lceil \log_k(n) \rceil$ rounds in the first while-loop and at most 1 in the second).
The total number of partitions in round $r$ is at most $ P_{r}^{CSWS} =  \left\lceil \nicefrac{n}{k^r} \right\rceil.$

\paragraph{Combinatorial Successive Reject.}  
On the other extreme regarding the aggressiveness of the arm elimination strategy is to dismiss only the worst arm of each partition in each round and keep all others in the set of active arms for the next round.
More specifically, we use $f^{\mathrm{CSR}}(s)=s-1$ for this variant, which can be seen as a variant of the Successive Reject algorithm \citep{Audibert2010} for best arm identification adopted to the combinatorial bandit problem.
Consequently, we call the resulting instantiation of Algorithm \ref{alg:Framework}  the \emph{Combinatorial Successive Reject} (CSR) algorithm, whose number of rounds in the first while-loop is at most $\lceil\log_{\nicefrac{k-1}{k}}\left(\nicefrac{1}{n}\right)\rceil$ and in the second at most $k-1$. 
Overall, we have a maximal number of rounds $R^{CSR} = \lceil\log_{\nicefrac{k-1}{k}}\left(\nicefrac{1}{n}\right)\rceil+k-1$ and a maximal number of partitions per round $P_{r}^{CSR} = \lceil\nicefrac{n(1-\frac{1}{k})^{(r-1)}}{k}\rceil$.

\paragraph{Combinatorial Successive Halving.} 
As a compromise between the aggressive elimination strategy of CSWS and the conservative elimination strategy of CSR one could discard in every elimination round the worse half of all arms in the partition, i.e., using $f^{\mathrm{CSH}}(s) = \left\lceil \nicefrac{s}{2} \right\rceil$ for $f$ in Algorithm \ref{alg:Framework}.
This can be seen as a generalization of the successive halving algorithm ~\citep{Jamieson2016} adopted to the combinatorial bandit problem we are considering.
Thus, the instantiation of Algorithm \ref{alg:Framework} in this spirit will be called the \emph{Combinatorial Successive Halving} (CSH) algorithm.
Note that we have at most $\lceil \log_2(n) \rceil$ rounds in the first while-loop and additional $\lceil \log_2(k) \rceil$ in the second while-loop resulting in at most $R^{CSH} = \lceil \log_2(n) \rceil + \lceil \log_2(k) \rceil$ many rounds throughout a run of CSH. 
Furthermore, we have at most $P_{r}^{CSH} = \left\lceil \frac{n}{2^{r-1} k} \right\rceil$ partitions in round $r$.

\subsection{Theoretical Guarantees} \label{sec:theoretical_guarantees}
In the following, we derive the sufficient budget for Algorithm \ref{alg:Framework} to return under assumptions (A1) and (A2) the best arm $i^{\ast}$, i.e., the generalized Condorcet winner.
For this purpose, we write $\mathbb{A}_{r}(i^{\ast})$ for the unique set $\mathbb{A}_{r,j} \in \{\mathbb{A}_{r,1},\dots,\mathbb{A}_{r, P_r}\}$ with $i^{\ast} \in \mathbb{A}_{r,j}$ 
and define $\Delta_{(l)|Q} = S_{i^{\ast}|Q} - S_{(l)|Q}$ for any $Q \subseteq[n]$ with $i^{\ast}\in Q.$
\begin{Thm}\label{Thm_generalSufficient}
    Assume $P_{r}$, $R$  are such that Algorithm \ref{alg:Framework} called with $B$ does not exceed $B$ as total budget.
	Under Assumptions (A1) and (A2) Algorithm \ref{alg:Framework} returns $i^{\ast}$ if $B$ is larger than 
	\begin{align*}
	   z&\left(f,R,\{P_{r}\}_{1\leq r\leq R}\right) \coloneqq R \max\nolimits_{r\in [R]} P_{r} \cdot \big\lceil \bar{\gamma}_{\mathbb{A}_{r}(i^{\ast})}^{-1}\big(\nicefrac{\Delta_{(f(|\mathbb{A}_{r}(i^{\ast})|)+1)|\mathbb{A}_{r}(i^{\ast})}}{2}\big) \big\rceil
	\end{align*}	
\end{Thm}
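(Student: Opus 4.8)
The plan is to show that whenever $B$ exceeds $z(f,R,\{P_r\})$, the generalized Condorcet winner $i^{\ast}$ is never discarded, so that it is the unique arm surviving to termination and hence the one returned. By Assumption~(A2), $i^{\ast}$ strictly dominates every arm in every query set containing it, so the only way Algorithm~\ref{alg:Framework} can fail is to drop $i^{\ast}$ in some call to \textsc{ArmElimination} (Algorithm~\ref{alg:ae}). It therefore suffices to prove, by induction over the elimination rounds $r=1,\dots,R$, that $i^{\ast}\in\mathbb{A}_r$ implies $i^{\ast}\in\mathbb{A}_{r+1}$; the base case $i^{\ast}\in\mathbb{A}_1=[n]$ is immediate, and once the loops terminate the active set has been shrunk to $\{i^{\ast}\}$, which is output.

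The heart of the argument is a single-round survival claim: if in round $r$ the arm $i^{\ast}$ lies in a \emph{full} partition $Q\coloneqq\mathbb{A}_r(i^{\ast})$ of size $|Q|\le k$ and the per-partition budget satisfies $b_r\ge\lceil\bar{\gamma}^{-1}_Q(\Delta_{(f(|Q|)+1)|Q}/2)\rceil$, then \textsc{ArmElimination}$(Q,b_r,f(|Q|))$ returns a set containing $i^{\ast}$. Write $l\coloneqq f(|Q|)\le|Q|-1$ and $\Delta\coloneqq\Delta_{(l+1)|Q}=S_{i^{\ast}|Q}-S_{(l+1)|Q}$, which is positive by (A2) since $S_{i^{\ast}|Q}=S_{(1)|Q}$ is the strict maximum. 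The definition of $\bar{\gamma}^{-1}_Q$ together with $\mathbf{s}\in\mathfrak{S}(\mathbf{S},\bsym{\gamma})$ guarantees that after $b_r$ pulls every confidence width satisfies $\gamma_{i|Q}(b_r)\le\Delta/2$ for all $i\in Q$. Hence for any arm $j\in Q$ with $S_{j|Q}\le S_{(l+1)|Q}$,
\begin{align*}
s_{i^{\ast}|Q}(b_r)\ \ge\ S_{i^{\ast}|Q}-\tfrac{\Delta}{2}\ =\ \tfrac{S_{i^{\ast}|Q}+S_{(l+1)|Q}}{2}\ =\ S_{(l+1)|Q}+\tfrac{\Delta}{2}\ \ge\ s_{j|Q}(b_r).
\end{align*}
Thus $i^{\ast}$ dominates every arm whose limit statistic is at most $S_{(l+1)|Q}$; since at most $l$ arms (including $i^{\ast}$) have limit strictly above $S_{(l+1)|Q}$, at most $l-1$ arms other than $i^{\ast}$ can be ranked above it, so $i^{\ast}$ sits among the top $l=f(|Q|)$ arms kept by \textsc{ArmElimination}. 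If instead $i^{\ast}$ lies in the remainder set, it is passed on unchanged and survives trivially.

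It remains to verify that $B>z$ forces the allocated budget $b_r=\lfloor B/(P_rR)\rfloor$ to meet the threshold $\lceil c_r\rceil$ of the claim in every round, where $c_r\coloneqq\bar{\gamma}^{-1}_{\mathbb{A}_r(i^{\ast})}(\Delta_{(f(|\mathbb{A}_r(i^{\ast})|)+1)|\mathbb{A}_r(i^{\ast})}/2)$. The definition of $z$ gives $z\ge R\,P_r\,\lceil c_r\rceil$ for every $r$, so $B>z$ yields $B/(P_rR)>\lceil c_r\rceil$, and as $\lceil c_r\rceil\in\N$ this implies $\lfloor B/(P_rR)\rfloor\ge\lceil c_r\rceil$. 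Feeding this into the single-round claim for each round of both while-loops — the second loop merely reduces a final set of size $<k$ down to $i^{\ast}$ by the same argument — closes the induction, and the feasibility hypothesis on $(P_r,R)$ ensures these $R$ rounds indeed isolate a single arm.

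The main obstacle is the single-round survival claim, specifically identifying the correct order statistic $S_{(f(|Q|)+1)|Q}$ against which $i^{\ast}$ must be compared: $i^{\ast}$ need not beat the $f(|Q|)-1$ runners-up, only the arms actually slated for removal, which is precisely what makes the gap $\Delta_{(f(|Q|)+1)|Q}$ (rather than the smallest gap $\Delta_{(2)|Q}$) the governing quantity and couples the required accuracy $\Delta/2$ to $\bar{\gamma}^{-1}_Q$. A secondary, more routine difficulty is the floor/ceiling budget arithmetic and checking it uniformly over the round-dependent partition counts $P_r$, together with the degenerate cases (remainder set, second while-loop, and possible ties at the elimination threshold) which must be handled so that the ordering chosen in \textsc{ArmElimination} cannot push $i^{\ast}$ out of the top $f(|Q|)$.
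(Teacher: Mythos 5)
Your proposal is correct and is essentially the contrapositive restatement of the paper's own proof: both hinge on the same per-round inequality chain $s_{i^{\ast}|Q}(b_r)\ge S_{i^{\ast}|Q}-\Delta_{(f(|Q|)+1)|Q}/2 = S_{(f(|Q|)+1)|Q}+\Delta_{(f(|Q|)+1)|Q}/2 \ge s_{j|Q}(b_r)$, the same governing gap to the $(f(|Q|)+1)$-th order statistic, and the same floor/ceiling budget arithmetic (the paper assumes elimination of $i^{\ast}$ and derives $B\le z$, whereas you run the argument forward as an induction over rounds). The tie/boundary subtlety you flag at the elimination threshold is present in the paper's argument in exactly the same form (non-strict inequalities in the indicator $\id\{s_{i|\mathbb{A}_{r1}}(b_r)\ge s_{1|\mathbb{A}_{r1}}(b_r)\}$), so it is not a gap relative to the published proof.
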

The following theorem indicates optimality of $z$ in the theorem above (cf.\ Sec.\ \ref{sec_proofs_general_framework} for the proofs).
\begin{Thm}\label{Thm:general_necessity}
    For any distinct asymptotic values $\mathbf{S}$, there exists a family of statistics $\{s_{i|Q}(t)\}_{t\in \N, Q\in \mathcal{Q}_{\le k},i\in Q}$ with $ s_{i|Q}(t) \ra S_{i|Q}$ for all $i\in [n],Q\in \mathcal{Q}_{\le k}$ such that if Algorithm \ref{alg:Framework} is used with a budget $B<z\left(f,R,\{P_{r}\}_{1\leq r\leq R}\right)$
    then it does \textbf{not} return $i^{\ast}$.
\end{Thm}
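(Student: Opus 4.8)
The plan is to certify that the threshold $z$ cannot be lowered by exhibiting, for the prescribed limits $\mathbf{S}$, a single family of statistics that forces Algorithm~\ref{alg:Framework} to drop $i^\ast$ as soon as $B<z$. Fix one non-increasing $\gamma\colon\N\to\R_{>0}$ with $\gamma(t)\to0$ and set
\[ s_{i|Q}(t) := S_{i|Q} + \eps_{i|Q}\,\gamma(t),\qquad \eps_{i|Q} := \begin{cases}-1,& i=i^\ast,\\ +1,& i\ne i^\ast,\end{cases} \]
for all $Q\in\mathcal{Q}_{\le k}$ and $i\in Q$. Then $s_{i|Q}(t)\to S_{i|Q}$, every arm's tightest envelope equals the common $\gamma$ (so $\gamma_{i|Q}=\gamma$ and $\bar\gamma^{-1}_Q=\gamma^{-1}$, which makes $z$ as large as this family allows), and for two non-optimal arms $s_{j|Q}(t)-s_{j'|Q}(t)=S_{j|Q}-S_{j'|Q}$ is independent of $t$. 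Since $\mathbf{S}$ is distinct, the mutual ranking of the $n-1$ non-optimal arms is frozen at the strict order of their limits, so the partition sequence produced by Algorithm~\ref{alg:Framework} is deterministic and, while $i^\ast$ remains active, every surviving set equals $i^\ast$ together with the highest-limit competitors — exactly the configuration underlying the definition of $z$.

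Because $z=R\max_{r\in[R]}P_r c_r$ with $c_r=\lceil\bar\gamma_{\mathbb{A}_r(i^\ast)}^{-1}(\Delta_{(f(|\mathbb{A}_r(i^\ast)|)+1)|\mathbb{A}_r(i^\ast)}/2)\rceil$, pick $r^\ast\in\argmax_r P_r c_r$ and abbreviate $Q^\ast:=\mathbb{A}_{r^\ast}(i^\ast)$, $l:=f(|Q^\ast|)$ and $\Delta^\ast:=\Delta_{(l+1)|Q^\ast}>0$. For every $B<z$ the round-$r^\ast$ budget per partition satisfies $b_{r^\ast}=\lfloor B/(P_{r^\ast}R)\rfloor\le B/(P_{r^\ast}R)<c_{r^\ast}$; as $\bar\gamma^{-1}$ is integer-valued the outer ceiling is inert, whence
\[ b_{r^\ast}<\bar\gamma^{-1}_{Q^\ast}(\Delta^\ast/2)=\gamma^{-1}(\Delta^\ast/2), \]
and by definition of the quasi-inverse $\gamma^{-1}$ this yields $\gamma(b_{r^\ast})>\Delta^\ast/2$.

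Now run Algorithm~\ref{alg:Framework} with an arbitrary $B<z$. If $i^\ast$ is eliminated in some round $r<r^\ast$ there is nothing to prove, so suppose it reaches round $r^\ast$; by the first paragraph its group is then exactly $Q^\ast$. Inside \textsc{ArmElimination} the arms of $Q^\ast$ are ordered by $s_{\cdot|Q^\ast}(b_{r^\ast})$, and for each of the $l$ competitors $(2),\dots,(l+1)$, whose gaps obey $\Delta_{(j)|Q^\ast}\le\Delta^\ast$,
\[ s_{(j)|Q^\ast}(b_{r^\ast})-s_{i^\ast|Q^\ast}(b_{r^\ast})=2\gamma(b_{r^\ast})-\Delta_{(j)|Q^\ast}>\Delta^\ast-\Delta_{(j)|Q^\ast}\ge0. \]
Hence all $l$ competitors strictly outrank $i^\ast$, so $i^\ast$ is not among the $l=f(|Q^\ast|)$ retained arms and is discarded. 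Either way Algorithm~\ref{alg:Framework} does not return $i^\ast$.

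The main obstacle is that $z$ — together with the groups $\mathbb{A}_r(i^\ast)$, the index $l$ and the gap $\Delta^\ast$ — is specified through the idealized run in which $i^\ast$ survives to the end, whereas the realized trajectory and the value $b_{r^\ast}$ both depend on the budget $B$; a single family must nonetheless force failure for every $B<z$ at once. The common additive perturbation $\pm\gamma(t)$ is precisely the device that resolves this: it decouples the budget-sensitive fate of $i^\ast$ from the budget-invariant mutual order of the remaining arms, so the partitions the algorithm actually visits coincide with the reference groups whenever $i^\ast$ is still present. The only routine point left is to confirm that the floor in $b_r$ and the ceiling in $c_r$ do not destroy the strict inequality $b_{r^\ast}<\gamma^{-1}(\Delta^\ast/2)$, which we verified above.
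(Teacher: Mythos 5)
Your proposal is correct and follows essentially the same route as the paper's proof: both construct statistics of the form $S_{i|Q}\pm\gamma(t)$ with the optimal arm approaching its limit from below and all competitors from above, so that an insufficient per-partition budget in the bottleneck round forces the top $f(|Q^\ast|)$ competitors to overtake $i^\ast$ via the quasi-inverse inequality $b_{r}<\gamma^{-1}(\Delta/2)\Rightarrow\gamma(b_{r})>\Delta/2$. Your explicit handling of the run-dependence of the partitions $\mathbb{A}_r(i^\ast)$ is somewhat more careful than the paper's chain of implications, but the underlying construction and argument are the same.
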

By means of Theorem \ref{Thm_generalSufficient}, we can infer the following result regarding the sufficient sampling budget $B$ for the three instantiations to output $i^{\ast}$ (cf.\ Sec.\ \ref{sec_proofs_corollaries} of the appendix for the proof).

\begin{Cor} \label{Cor:CSXsufficiency}
	Under Assumptions (A1) and (A2), $\mathrm{CSX} \in \{\mathrm{CSWS}, \mathrm{CSR}, \mathrm{CSH}\}$ returns $i^{\ast}$ if it is executed with a budget $B \geq z_{\mathrm{CSX}}$, where
		$
		z_{\mathrm{CSX}} \coloneqq z\left(f^{\mathrm{CSX}}, R^{\mathrm{CSX}} , \{ P_{r}^{\mathrm{CSX}} \}_{1\leq r\leq R^{\mathrm{CSX}}}\right).$
\end{Cor}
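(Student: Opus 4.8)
The plan is to obtain Corollary \ref{Cor:CSXsufficiency} as a direct specialization of Theorem \ref{Thm_generalSufficient}, so the only substantive work is to certify that each of CSWS, CSR, CSH is genuinely an instance of Algorithm \ref{alg:Framework} whose parameters match $z_{\mathrm{CSX}}$, after which the result is a plug-in. First I would record the key structural fact that the \emph{number} of elimination rounds and the \emph{number} of partitions per round are completely determined by $n$, $k$ and the chosen rule $f^{\mathrm{CSX}}$, and do not depend on the observed feedback (the feedback only influences \emph{which} arms survive, not how many). Hence these counts equal precisely the quantities $R^{\mathrm{CSX}}$ and $\{P_r^{\mathrm{CSX}}\}$ derived in Section \ref{sec:gen_framework}, and for $\mathrm{CSX}$ the call to Algorithm \ref{alg:Framework} uses $f = f^{\mathrm{CSX}}$, $R = R^{\mathrm{CSX}}$, $P_r = P_r^{\mathrm{CSX}}$. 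Concretely I would reconfirm the bookkeeping of the evolution of $|\mathbb{A}_r|$: for CSWS each partition of size $\le k$ collapses to one survivor, so $|\mathbb{A}_{r+1}| \le \lceil |\mathbb{A}_r|/k\rceil$, giving $\lceil\log_k n\rceil$ first-loop rounds plus one cleanup round; for CSR one arm is dropped per partition, so $|\mathbb{A}_{r+1}| \le |\mathbb{A}_r|(1-1/k)$ up to rounding, yielding the stated $\lceil\log_{(k-1)/k}(1/n)\rceil$ rounds plus the $k-1$ cleanup rounds; and for CSH roughly half of each partition is discarded, yielding $\lceil\log_2 n\rceil + \lceil\log_2 k\rceil$ rounds. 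Each case then produces the corresponding $P_r^{\mathrm{CSX}} = \lceil\cdot\rceil$ expression as the per-round partition count.

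Next I would discharge the budget-consistency hypothesis of Theorem \ref{Thm_generalSufficient}, namely that Algorithm \ref{alg:Framework} started with $B$ never exceeds $B$ in total. Since the per-partition budget in round $r$ is $b_r = \lfloor B/(P_r R)\rfloor$ and there are $P_r$ partitions, the budget consumed in round $r$ is at most $P_r \cdot B/(P_r R) = B/R$, and summing over the $R^{\mathrm{CSX}}$ rounds gives at most $B$; so the hypothesis holds for every instantiation. I would also note that $z(f,R,\{P_r\}) = R \max_{r}\big(P_r \lceil \bar{\gamma}_{\mathbb{A}_r(i^{\ast})}^{-1}(\Delta_{(f(|\mathbb{A}_r(i^{\ast})|)+1)|\mathbb{A}_r(i^{\ast})}/2) \rceil \big)$ is non-decreasing in $R$ and in each $P_r$, because under (A2) every factor $\lceil \bar{\gamma}^{-1}(\cdot) \rceil$ is a positive integer. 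This monotonicity is the safety net handling the ``at most'' slack in the ceilings: whenever the true round or partition count is strictly smaller than the quoted $R^{\mathrm{CSX}}$ or $P_r^{\mathrm{CSX}}$, the bound evaluated at the larger values is still a valid upper bound.

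Combining these, for any $B \ge z_{\mathrm{CSX}} = z(f^{\mathrm{CSX}}, R^{\mathrm{CSX}}, \{P_r^{\mathrm{CSX}}\})$ the hypotheses of Theorem \ref{Thm_generalSufficient} are met with parameters $(f^{\mathrm{CSX}}, R^{\mathrm{CSX}}, \{P_r^{\mathrm{CSX}}\})$, and the theorem then guarantees that $\mathrm{CSX}$ returns the generalized Condorcet winner $i^{\ast}$ under Assumptions (A1) and (A2), which is exactly the claim. I expect the single delicate point to be the round- and partition-counting step rather than any analytic content: for CSR and CSH the survival fractions are not exact powers of $k$ or $2$, so the ceilings must be tracked carefully, and one must verify that the second while-loop, which reduces the final remainder set of size between $2$ and $k-1$ down to a single arm, contributes exactly the extra $+1$, $+(k-1)$, or $+\lceil\log_2 k\rceil$ rounds claimed for CSWS, CSR, and CSH respectively. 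Everything beyond this is a mechanical substitution into the already-established Theorem \ref{Thm_generalSufficient}.
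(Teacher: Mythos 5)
Your proposal is correct and follows essentially the same route as the paper: for each instantiation one verifies the maximal round and partition counts $R^{\mathrm{CSX}}$, $P_r^{\mathrm{CSX}}$, checks that the total budget consumed satisfies $\sum_r P_r \lfloor B/(P_r R)\rfloor \le B$, and then invokes Theorem \ref{Thm_generalSufficient} as a plug-in. Your extra observation that $z$ is monotone in $R$ and the $P_r$ (to absorb the slack when the realized counts fall short of the maxima) is a point the paper leaves implicit, but it does not change the argument.
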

By substituting the concrete values for $P_r$, $R$ and $f$ of the corresponding instantiation into Corollary \ref{Cor:CSXsufficiency} and using a rough estimate for the inverse function of the speed of convergence, we see that all of the resulting sufficient budgets are essentially $\tilde{\mathcal{O}}( \nicefrac{n}{k} )$ (see Table \ref{tab:AlgoComparison_main})  almost\footnote{Here, $\tilde{\mathcal{O}}$ hides logarithmic factors.} matching the dependency on $n$ and $k$ in Theorem \ref{Thm_Main_LB_GCW}. 
If we would allow the special case of singleton sets of arms as query sets, i.e., $k=1,$ the sufficient budget for CSH matches the one derived in \cite{Jamieson2016} for its non-combinatorial counterpart in the special case of numerical feedback.

\begin{table*}   [!ht]
\caption{Sufficient budget for CSWS, CSR and CSH. Here, $\pi$ is as in (iii) in Section \ref{sec_main_LBs}.}
  \label{tab:AlgoComparison_main}
      \centering
		\resizebox{0.95\textwidth}{!}{
    \begin{tabular}[t]{l|l}
        \toprule[1pt]
        \rule{0pt}{20pt} $z_{CSWS} $
        & $ \left\lceil \frac{n}{k} \right\rceil (\lceil \log_k(n) \rceil +1)  \cdot \max_{Q \in \mathcal{Q}_{\leq k} : i^{\ast}\in Q} \max_{i\in Q \setminus \{i^{\ast}\}} \left\lceil \bar{\gamma}^{-1} \left( \frac{S_{i^{\ast}|Q} - S_{i|Q}}{2}\right) \right\rceil$  \\
        \rule{0pt}{20pt} $z_{CSR} $
        & $\left\lceil\frac{n}{k}\right\rceil \left(\left\lceil\log_{1-\frac{1}{k}}\left(\frac{1}{n}\right)\right\rceil+k-1\right)$
        $ \cdot \max_{Q \in \mathcal{Q}_{\leq k}: i^{\ast}\in Q}  \min_{i\in Q\backslash \{i^{\ast}\}} \left\lceil \bar{\gamma}^{-1} \left( \frac{S_{i^{\ast}|Q} - S_{i|Q}}{2}\right) \right\rceil$\\
        \rule{0pt}{20pt} $z_{CSH}$
        & $ \lceil \frac{n}{k} \rceil \left(\left\lceil \log_2(n) \right\rceil + \lceil \log_2(k) \rceil \right)$
        $\cdot\max_{Q \in \mathcal{Q}_{\leq k}: i^{\ast}\in Q} \left\lceil \bar{\gamma}^{-1} \left( \frac{S_{i^{\ast}|Q} - S_{\pi(Q)|Q}}{2}\right) \right\rceil$  \\
         \bottomrule[1pt]
    \end{tabular}}
\end{table*}

Regarding $n$ and $k$ both lower and upper bounds coincide, but the gap-term in the lower bounds include a min-term over $\mathcal{Q}_{\leq k}$, while the gap-term in the upper bound are coming with a max-term over $\mathcal{Q}_{\leq k}$. The difference between these terms depends on the underlying hardness of the bandit problem in terms of $\bar{ \gamma}^{-1}$, i.e., how fast the considered statistics converge to their limit values. Due to the generality of our setting it is difficult to specify this difference more explicitly and it would be worth considering this for special cases, i.e., the numerical bandits or preference-based bandits separately.

Finally, it is worth mentioning that all of the three instantiations of Algorithm \ref{alg:Framework} have only been studied for the case of single arm pulls, but not for pulls of subsets of arms, where additionally a dependency on the set might be present. Thus, the theoretical guarantees are novel in this regard.

\section{Applications to Stochastic Settings} \label{sec:application_stochastic}

\textbf{Numerical feedback.}
In stochastic combinatorial bandits  \citep{chen2013combinatorial}, each arm-query set pair $(i,Q)$ is associated with a probability distribution $\nu_{i|Q}$ and querying $Q$ for the $t$-th time results in the feedback $o_{i|Q}(t) \sim \nu_{i|Q},$ usually referred to as a reward (i.e., $D=\R$).
The sequence of rewards  $\{o_{i|Q}(t)\}_{t}$ is supposed to be independent and the statistic $s$ is the empirical mean such that (A1) holds by the law of large numbers with $S_{i|Q} = \E_{X\sim \nu_{i|Q}}[X].$ 
If the $\nu_{i|Q}$ are sub-Gaussian, an anytime confidence bound by \cite{Jamieson14} based on the law of iterated logarithm ensures $|s_{i|Q}(t) - S_{i|Q}| \leq c_{\delta}(t)$ for all $t \in \N$ with probability at least $1-\delta$ for some appropriate function $c_{\delta}(t) \in \mathcal{O}(\sqrt{t\ln(\ln(t)/\delta)})$. This implies the following result, the proof of which is deferred to Section \ref{Appendix:proofsAppl}.

\begin{Cor}\label{Cor:ApplReward}
    Let $f,R$ and $\{P_{r}\}_{r\in [R]}$ be as in Theorem \ref{Thm_generalSufficient} and suppose the reward distributions $\nu_{i|Q}$ to be $\sigma$-sub-Gaussian and such that their means $S_{i|Q}$ satisfy (A2). There is a function $C(\delta,\eps,k,R, \sigma)$ in $ \mathcal{O}\left( \sigma^2 \eps^{-2}\ln\left( \nicefrac{kR}{\delta}\ln\left(\nicefrac{kR\sigma}{\eps \delta}\right)\right)\right)$
    such that if $i^{\ast}$ is the optimal arm for  $(S_{i|Q})_{Q\in \mathcal{Q}_{\leq k},i\in Q}$ and  $\sup_{Q\in \mathcal{Q}_{\leq k}(i^{\ast})} \Delta_{(f(|Q|)+1)|Q} \leq \eps$, then Algorithm \ref{alg:Framework} used with a budget $B$ larger than $  C(\delta,\eps,k,R,\sigma) \cdot R \max\nolimits_{r\in [R]} P_{r} $
    returns  $i^{\ast}$ with probability at least $ 1-\delta$.
\end{Cor}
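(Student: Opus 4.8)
The plan is to reduce the stochastic statement (Corollary~\ref{Cor:ApplReward}) to the deterministic sufficiency result of Theorem~\ref{Thm_generalSufficient} by conditioning on a high-probability ``good event'' on which all empirical means stay close to their limits. The key observation is that Theorem~\ref{Thm_generalSufficient} is a purely deterministic guarantee: it says that whenever the speed-of-convergence functions $\gamma_{i|Q}$ control the deviations $|s_{i|Q}(t)-S_{i|Q}|$, then a budget exceeding $z(f,R,\{P_r\})$ forces Algorithm~\ref{alg:Framework} to output $i^{\ast}$. Thus the whole task is to manufacture, in the stochastic setting, a single function $\gamma$ that plays the role of the speed-of-convergence bound simultaneously for all the finitely many arm--query-set pairs that the algorithm can possibly touch, and to do so on an event of probability at least $1-\delta$.

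\textbf{Step 1: count the relevant pairs and allocate the confidence budget.} Algorithm~\ref{alg:Framework} runs for at most $R$ rounds, in each round partitions the active arms into at most $\max_r P_r$ blocks, and each block has size at most $k$, so the number of arm--query-set statistics that are ever updated is at most $kR\max_r P_r$; more crudely, the number of \emph{distinct query sets} used is at most $R\max_r P_r$, each carrying at most $k$ arms. I would apply the anytime confidence bound of \cite{Jamieson14} to each such pair with a per-pair failure probability of $\delta/(kR\max_r P_r)$, so that a union bound over all of them leaves total failure probability at most $\delta$. On the resulting good event $\mathcal{E}$, we have simultaneously for every relevant $(i,Q)$ and every $t$ that $|s_{i|Q}(t)-S_{i|Q}|\leq c_{\delta'}(t)$ with $\delta' = \delta/(kR\max_r P_r)$ and $c_{\delta'}(t)\in\mathcal{O}(\sqrt{t\ln(\ln(t)/\delta')})$.

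\textbf{Step 2: invert the confidence bound to read off the needed budget.} On $\mathcal{E}$ the common deviation bound $c_{\delta'}$ serves as a valid speed-of-convergence function $\bar\gamma$, so by Theorem~\ref{Thm_generalSufficient} it suffices that the per-block budget $b_r = \lfloor B/(P_r R)\rfloor$ exceeds $\lceil \bar\gamma^{-1}_{\mathbb{A}_r(i^{\ast})}(\Delta_{(f(|\mathbb{A}_r(i^{\ast})|)+1)|\mathbb{A}_r(i^{\ast})}/2)\rceil$ in every round. Since the gaps satisfy $\sup_{Q}\Delta_{(f(|Q|)+1)|Q}\leq\eps$, it is enough that each block receives enough pulls $m$ to guarantee $c_{\delta'}(m)\leq \eps/2$. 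Solving $c_{\delta'}(m)=\tfrac{\eps}{2}$ for the sub-Gaussian rate $c_{\delta'}(m)=\Theta(\sigma\sqrt{m^{-1}\ln(\ln(m)/\delta')})$ (note the empirical-mean normalization here, as opposed to the unnormalized $\sqrt{t\cdots}$ in the cited law-of-iterated-logarithm bound) yields $m = C(\delta,\eps,k,R,\sigma)$ with
\begin{align*}
  C(\delta,\eps,k,R,\sigma) \in \mathcal{O}\!\left( \sigma^2\eps^{-2}\ln\!\left( \tfrac{kR}{\delta}\ln\!\left(\tfrac{kR\sigma}{\eps\delta}\right)\right)\right),
\end{align*}
where the inner and outer logarithms reflect plugging $\delta'=\delta/(kR\max_r P_r)$ and the approximate inverted value of $m$ back into the iterated logarithm. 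Demanding $b_r\geq C$ for all $r$ amounts to $B \geq C\cdot R\max_r P_r$, which is exactly the stated budget threshold.

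\textbf{Step 3: combine.} On $\mathcal{E}$, whose probability is at least $1-\delta$, the deterministic guarantee of Theorem~\ref{Thm_generalSufficient} applies with the surrogate convergence function $\bar\gamma=c_{\delta'}$, so the algorithm returns $i^{\ast}$; the budget hypothesis $B>C\cdot R\max_r P_r$ ensures each block is pulled often enough to shrink every relevant deviation below $\eps/2\leq\Delta/2$. \textbf{The main obstacle I expect} is the bookkeeping in Step~2: the inverse $c_{\delta'}^{-1}$ of a law-of-iterated-logarithm bound is only implicit, so one must verify that choosing $m$ of the stated order indeed forces $c_{\delta'}(m)\leq\eps/2$, which requires a self-consistent estimate where $\ln\ln m$ is bounded in terms of the leading $\sigma^2\eps^{-2}$ factor and the union-bound parameter $\delta'$. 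Carefully tracking how $kR\max_r P_r$ enters $\delta'$ — and confirming it produces exactly the nested-logarithm form $\ln(\tfrac{kR}{\delta}\ln(\tfrac{kR\sigma}{\eps\delta}))$ rather than a worse dependence — is the delicate part; the union bound and the appeal to Theorem~\ref{Thm_generalSufficient} are otherwise routine.
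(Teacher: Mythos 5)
Your overall strategy coincides with the paper's: build a high-probability event on which an anytime law-of-iterated-logarithm bound controls all relevant deviations $|s_{i|Q}(t)-S_{i|Q}|$, treat that bound as the speed-of-convergence function, invert it, and feed the result into Theorem~\ref{Thm_generalSufficient}. The inversion issue you flag in Step~2 is real but is handled in the paper exactly as you anticipate, via the contrapositive of inequality~(1) in \cite{Jamieson14} applied to a $t$-normalized version of the anytime bound.

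The genuine gap is in Step~1. You union over all $kR\max_{r}P_{r}$ arm--query-set pairs and set the per-pair failure probability to $\delta'=\delta/(kR\max_{r}P_{r})$. Then $\ln(1/\delta')=\ln(kR\max_{r}P_{r}/\delta)$, and since $\max_{r}P_{r}=P_{1}=\lceil n/k\rceil$ for every instantiation considered, your constant acquires an extra additive $\ln(n/k)$ inside the logarithm. This yields $C\in\mathcal{O}\bigl(\sigma^{2}\eps^{-2}\ln\bigl(\tfrac{Rn}{\delta}\ln(\cdots)\bigr)\bigr)$, which depends on $n$ and is therefore not of the claimed form $\mathcal{O}\bigl(\sigma^{2}\eps^{-2}\ln\bigl(\tfrac{kR}{\delta}\ln\bigl(\tfrac{kR\sigma}{\eps\delta}\bigr)\bigr)\bigr)$ with $C$ a function of $(\delta,\eps,k,R,\sigma)$ alone; your parenthetical claim that plugging in $\delta'$ ``produces exactly the nested-logarithm form'' is where this slips through. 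The paper avoids the extra factor by observing that, per the proof of Theorem~\ref{Thm_generalSufficient}, $i^{\ast}$ can only be eliminated inside the blocks $\mathbb{A}_{r}(i^{\ast})$ containing it, so concentration is needed only for the $kR$ pairs $(i,\mathbb{A}_{r}(i^{\ast}))$ with $r\in[R]$, $i\in\mathbb{A}_{r}(i^{\ast})$, giving $\delta'=\delta/(kR)$. Because these blocks are themselves random (they depend on the partitioning and on which arms survive earlier rounds), this restricted union bound cannot be applied unconditionally; the paper conditions on the realized sequence $(\mathbb{A}_{r}(i^{\ast}))_{r\in[R]}$, applies the union bound conditionally, and then sums out over all possible sequences via the law of total probability. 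Your argument, carried out as written, proves a true but quantitatively weaker statement; to obtain the corollary as stated you need this restriction-plus-conditioning step.
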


\textbf{Other statistics for numerical feedback.}
A rich class of statistics can be obtained by applying a linear functional $U(F)= \int r(x) \mathrm{d}F(x)$, where $F$ is a cumulative distribution function (CDF) and $r: \R \rightarrow \mathbb{R}$ some measurable function  \citep{wasserman2013all}, on the empirical CDF, i.e.,  
    $\tilde s(O,x) = |O|^{-1} \sum_{o \in O} \id\{ x\leq o\},$
for any $x \in \R$ and any multiset of (reward) observations $O.$ 
This leads to the statistics 
    $$s_{i|Q}(t) = U(\tilde s(o_{i|Q}(1), \dots, o_{i|Q}(t),\cdot)) =  \sum\nolimits_{s=1}^{t}  \frac{r(o_{i|Q}(s))}{t},$$
which converge to $S_{i|Q} = \E_{X\sim \nu_{i|Q}}[r(X)]$ by the law of large numbers, provided these expected values exist.
For this class of statistics we can show a quite similar result to Corollary \ref{Cor:ApplReward} by generalizing the findings in \cite{Jamieson14}  (see Section \ref{Appendix:proofsAppl}). 
However, the result is in fact more general than Corollary \ref{Cor:ApplReward} as for $r$ being the identity function we can recover the empirical mean.

\textbf{Preference feedback.}
In the preference-based bandits, we observe when querying $Q$ for the $t$-th time a categorical random variable with values in $Q$, i.e., $o_{i|Q}(t) \sim \mathrm{Cat}_{Q}(\mathbf{p}_{Q})$ for some underlying unknown parameter $\mathbf{p}_{Q}=(p_{i|Q})_{i\in Q}$. 
Let $w_{i|Q}(t)\coloneqq \sum_{s\leq t} \id\{ o_{i|Q}(t) = i  \}$ be the number of times arm $i$ has won in the query set $Q$ until the $t$-th pull of $Q$. We consider as the relevant statistics $s_{i|Q}(t) = \frac{w_{i|Q}(t)}{t},$ which 
converge  to $p_{i|Q} \eqqcolon S_{i|Q}$ by the law of large numbers.
The Dvoretzky-Kiefer-Wolfowitz inequality \citep{Dvoretzky1956} ensures a concentration inequality on $ \sup\nolimits_{i\in Q} |s_{i|Q}(t) - S_{i|Q}|,$
 which can be used to deduce the following result (cf.\ Sec.\ \ref{Appendix:proofsAppl} for the proof).

\begin{Cor}\label{Cor:ApplPB}
    Let $f$,$R$ and $\{P_{r}\}_{r\in [R]}$ be as in Theorem \ref{Thm_generalSufficient} and suppose preference-based winner feedback 
    with parameter $(p_{i|Q})_{Q\in \mathcal{Q}_{\leq k},i\in Q}$, which satisfies (A2).
    There is a function $       C(\delta,\eps,k,R) \in\mathcal{O}\left( \eps^{-2}\ln\left( \nicefrac{R}{\delta \eps^{4}}\right)\right)$
    with the following property: If $i^{\ast}$ is the optimal arm and $\sup_{Q\in \mathcal{Q}_{\leq k}(i^{\ast})} \Delta_{(f(|Q|)+1)|Q} \leq \eps$, then Algorithm \ref{alg:Framework} used with a budget $B$ larger than $
        C(\delta,\eps,k,R) \cdot R\max_{r\in [R]} P_{r}$
    returns $i^{\ast}$ with probability at least $1-\delta$.
\end{Cor}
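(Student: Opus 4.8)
The plan is to reduce the statement to the deterministic guarantee of Theorem~\ref{Thm_generalSufficient} by controlling, with high probability, the (now random) speed-of-convergence functions $\gamma_{i|Q}$ of the empirical winning frequencies. Since the samples drawn within each query set are i.i.d.\ and $s_{i|Q}(t)=w_{i|Q}(t)/t\to p_{i|Q}=S_{i|Q}$ by the law of large numbers, assumption (A1) holds, while (A2) is imposed on the $p_{i|Q}$; hence Theorem~\ref{Thm_generalSufficient} applies as soon as I exhibit, on a good event of probability at least $1-\delta$, a deterministic bound $\gamma_{i|Q}(t)\le c_\delta(t)$ that makes the quantity $z(f,R,\{P_r\})$ no larger than $C(\delta,\eps,k,R)\cdot R\max_r P_r$.

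The concentration input is the Dvoretzky--Kiefer--Wolfowitz (DKW) inequality. Because $(s_{i|Q}(t))_{i\in Q}$ is the empirical distribution of a categorical variable, DKW controls $\sup_{i\in Q}|s_{i|Q}(t)-S_{i|Q}|$ through $\mathbb{P}(\sup_{i\in Q}|s_{i|Q}(t)-S_{i|Q}|>\lambda)\le 2\exp(-c\,t\lambda^2)$; crucially this bound is dimension-free, i.e.\ it carries no factor of $|Q|$ or $k$, which is what ultimately keeps $C$ free of $n$ and of $k$ inside the logarithm. Converting this into an anytime statement by a union bound over $t$ with weights $\propto t^{-2}$ (so the failure probabilities sum to $\delta$) yields $c_\delta(t)=\mathcal{O}(\sqrt{\ln(t^2/\delta)/t})$, hence $\gamma_{i|Q}^{-1}(\alpha)=\mathcal{O}(\alpha^{-2}\ln(1/(\delta\alpha^4)))$. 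Evaluating this inverse at the half-gap $\alpha=\Delta_{(f(|Q|)+1)|Q}/2$ and using the assumed relation between the boundary gaps $\Delta_{(f(|Q|)+1)|Q}$ and $\eps$ to replace $\alpha$ by $\eps/2$ uniformly over $Q\in\mathcal{Q}_{\le k}(i^{\ast})$ produces a per-set budget of order $\eps^{-2}\ln(1/(\delta\eps^4))$, that is, exactly the announced $C(\delta,\eps,k,R)\in\mathcal{O}(\eps^{-2}\ln(R/(\delta\eps^4)))$ after absorbing the round count.

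Finally I would assemble the failure probability. It suffices that $i^{\ast}$ survives each of the $R$ elimination rounds, and since $i^{\ast}$ is the GCW it is the true rank-one arm in every set containing it, so it can only be eliminated in round $r$ if the deviation in its (random) partition $\mathbb{A}_r(i^{\ast})$ exceeds $\Delta_{(f(|\mathbb{A}_r(i^{\ast})|)+1)|\mathbb{A}_r(i^{\ast})}/2$. Conditioning on the partition $\mathbb{A}_r(i^{\ast})$---which is formed from the data of earlier rounds and is therefore independent of the fresh round-$r$ pulls---and applying the uniform, dimension-free DKW bound gives a round-$r$ failure probability of at most $2\exp(-c\,b_r\eps^2)$, independent of $n$ and of which set $i^{\ast}$ landed in; a union bound over the $R$ rounds together with the choice $B\ge (C+1)R\max_r P_r$, which guarantees $b_r=\lfloor B/(P_r R)\rfloor\ge C$ for every $r$, then forces the total failure probability below $\delta$, and on the complementary event Theorem~\ref{Thm_generalSufficient} returns $i^{\ast}$.

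The main obstacle I anticipate is keeping $C$ free of $n$ (and of $k$ inside the logarithm): a naive union bound over the $\binom{n}{\le k}$ query sets, or even over the $\sum_r P_r$ sets visited along the run, would reintroduce a $\ln n$ factor. This is circumvented precisely by the two features above---the dimension-freeness of DKW and the conditioning on the random partition $\mathbb{A}_r(i^{\ast})$ so that the union bound ranges only over the $R$ rounds---and the remaining delicate point is the careful justification that the round-$r$ samples are independent of the data-dependent partition structure, so that the conditional DKW bound is legitimate.
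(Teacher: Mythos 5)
Your proposal is correct and follows essentially the same route as the paper's proof: a dimension-free DKW concentration bound for the empirical categorical frequencies, made anytime via a union bound over $t$ with $t^{-2}$ weights and over the $R$ rounds only (conditioning on the realized partition sequence $\{\mathbb{A}_r(i^{\ast})\}_r$ via the law of total probability, exactly to avoid a union bound over all query sets), an inversion of the resulting rate at $\alpha=\eps/2$ yielding $C(\delta,\eps,k,R)\in\mathcal{O}\bigl(\eps^{-2}\ln(R/(\delta\eps^{4}))\bigr)$, and an application of Theorem~\ref{Thm_generalSufficient} on the complementary good event. The "delicate point" you flag is resolved in the paper precisely by the conditioning you describe, so no gap remains.
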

By substituting the concrete values for $P_r$, $R$ and $f$ of the corresponding instantiation of Algorithm \ref{alg:Framework} into the bound on the budget in Corollary \ref{Cor:ApplPB} (compare to Table \ref{tab:AlgoComparison_main}), we see that each of the three resulting bounds almost matches the optimal sample complexity bounds for identifying the (generalized) Condorcet Winner under fixed confidence in preference-based bandits \cite{bengs2021preference,haddenhorst2021identification} indicating near optimality of the algorithms in stochastic settings.
However, since no stochastic counterpart of our combinatorial setting for the numerical case exists, it would be interesting to investigate whether the analogous implication by means of Corollary \ref{Cor:ApplReward} for the three algorithms is nearly optimal as well.
We leave this to future work, as it is beyond the scope of our work.

\section{Experimental Section} \label{sec:experiments}
In this section we present an experimental study for our proposed algorithms %
on an algorithm selection problem.
Further experiments, also on synthetic data and with other statistics are provided in the supplementary material in Section \ref{sec:further_exp}.

\textbf{Setting.}
In the following, we consider an algorithm selection problem, where the goal is to select the most efficient algorithm for solving an instance of a satisfiability (SAT) problem. For this, we randomly chose $n=20$ parameterizations of the SAPS solver \citep{TomHutHoo07} which represent our candidate algorithms and correspond to the arms in our terminology. Our possible problem instances are sampled from the first 5000 problem instances from the sat$\_$SWGCP folder of the AClib\footnote{http://www.aclib.net}.
We compare CSWS, CSR, CSH and \textsc{RoundRobin} on this problem with the Successive Halving (SH) algorithm \cite{Jamieson2016}. 
To the best of our knowledge, there are no algorithms available as baselines, which are designed for the pure exploration problem with finite budget and subsets of arms as the actions, e.g., \cite{agarwal2020choice} investigates a regret minimization problem, while \cite{haddenhorst2021identification} is dealing with a stochastic pure exploration setting with fixed-confidence.
However, Successive Halving serves as a baseline, which we included as a representative for the algorithms dealing with a pure exploration problem with finite budget and single arms as the actions.
In each learning round, we randomly draw a problem instance from the 5000 problem instances without replacement and then start a parallel solution process with the SAPS parameterizations chosen by the corresponding learning algorithm (only one parameterization for the case of SH), where the process is stopped as soon as the first algorithm has solved the current instance.
In particular, one obtains only for the ``finisher'' SAPS parameterization an explicit numerical value (its runtime) among the chosen set of SAPS parameterizations, as the others are right-censored.
Since our proposed algorithms are designed for the case, in which feedback for all arms in the pulled query set is observed, while SH is designed  for the case in which only a single arm is queried resulting in a single feedback, we enlarge the available budget for SH to $k \cdot B$ for a fairer comparison.

\textbf{Instantiation of CSE.}
Although we could consider the negative runtimes of the parameterizations as rewards (i.e., runtimes correspond to losses) and use a statistic suitable for numerical feedback for the combinatorial successive elimination (CSE) approaches, there might be a major disadvantage due to the censored runtimes.
Indeed, in order to apply a statistic suitable for numerical feedback,  some sensible imputation technique is required to deal with the censored observations, which in turn could introduce a severe bias.
However, thanks to the generality of our framework, we can simply interpret the observed feedback as a preference in the sense that the ``finisher'' SAPS parameterization is preferred over the others in the chosen set of parameterizations.
In this way, using a statistic based on preference-based feedback defuses the bias issue. 
Quite naturally, we use the relative frequency statistic for preference feedback as specified in Section \ref{sec:application_stochastic}.

\textbf{Analysis.}
As the best arm (SAPS parameterization) we use the one having the smallest empirical runtime over all problem instances such that \textsc{RoundRobin} and SH will tendentially return this arm if the budget is sufficiently large.
The resulting success rates for our proposed algorithms and SH of identifying the best arm are illustrated in the top panel of Figure \ref{fig:ACSuccessRate}.
One can see, that the algorithms which follow our CSE strategy significantly outperform SH if the budget is sufficiently large. In addition,  CSWS, CSR and CSH identify the best arm more often than \textsc{RoundRobin} if the subset sizes $k$ are small, which is a realistic situation in practice.
Moreover, the bottom panel in Figure \ref{fig:ACSuccessRate} shows the overall runtimes of the algorithms revealing that SH takes much longer than CSWS, CSR and CSH and as expected the difference in the runtimes gets larger with the subset sizes $k$. 
Quite interestingly, even \textsc{RoundRobin} needs a longer runtime than the CSE approaches, although it queries the same number of subsets and also stops the respective run as soon as the ``finisher'' SAPS parameterization is clear. Thus, the differences in the runtimes of \textsc{RoundRobin} and the CSE approaches are only due to the fact that the latter discard the slowest SAPS parameterizations quickly and do not run them again, while \textsc{RoundRobin} uses throughout all subsets the same amount of time, even if they contain only bad performing parameterizations.
In other words, the differences are due to the sophisticated strategies of the CSE approaches.

\begin{figure}[t]
    \centering
    \includegraphics[width=0.87\linewidth]{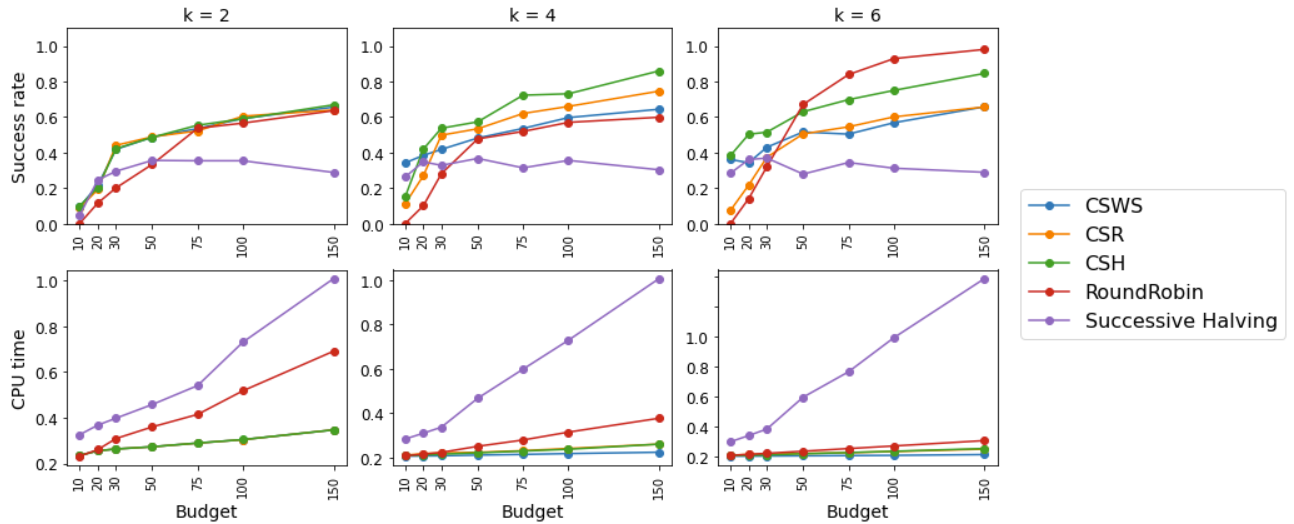}
    \caption{Success rates and runtimes for different subset sizes $k$ and budgets $B$.}
    \label{fig:ACSuccessRate}
\end{figure}

\section{Future Work}

For future work, it would be interesting to investigate whether switching the elimination strategy during the learning process leads to any performance improvements both theoretically and empirically.
A similar question could be asked regarding the considered statistic for numerical feedback variants.
Further, the goal of identifying the best set of arms in our scenario would also be interesting. 
However, in the case where the observations depend on the chosen set of arms, it is far from obvious how to define a suitable optimality term (cf. Sec.\ 6.3.2 in \cite{bengs2021preference}).
Finally, a more extensive experimental study would definitely be a relevant future direction of research, especially for hyperparameter optimization problems with possible parallelization options such as in \citep{li2020system} or for more general algorithm configuration problems \cite{schede2022AC}.

\section*{Acknowledgments and Disclosure of Funding}

This work was partially supported by the German Research Foundation (DFG) within the project ``Online Preference Learning with Bandit Algorithms'' (project no. 317046553)
and by the research training group ``Dataninja'' (Trustworthy AI for Seamless Problem Solving: Next Generation Intelligence Joins Robust Data Analysis) funded by the German federal state of North Rhine-Westphalia.

\bibliography{references}

\begin{thebibliography}{52}
\providecommand{\natexlab}[1]{#1}
\providecommand{\url}[1]{\texttt{#1}}
\expandafter\ifx\csname urlstyle\endcsname\relax
  \providecommand{\doi}[1]{doi: #1}\else
  \providecommand{\doi}{doi: \begingroup \urlstyle{rm}\Url}\fi

\bibitem[Abbasi-Yadkori et~al.(2011)Abbasi-Yadkori, P\'{a}l, and
  Szepesv\'{a}ri]{AbPaSz11}
Yasin Abbasi-Yadkori, D\'{a}vid P\'{a}l, and Csaba Szepesv\'{a}ri.
\newblock {Improved Algorithms for Linear Stochastic Bandits}.
\newblock In \emph{Proceedings of Advances in Neural Information Processing
  Systems (NeurIPS)}, pages 2312--2320, 2011.

\bibitem[Abbasi-Yadkori et~al.(2018)Abbasi-Yadkori, Bartlett, Gabillon, Malek,
  and Valko]{abbasi2018best}
Yasin Abbasi-Yadkori, Peter Bartlett, Victor Gabillon, Alan Malek, and Michal
  Valko.
\newblock Best of both worlds: Stochastic \& adversarial best-arm
  identification.
\newblock In \emph{Proceedings of Annual Conference on Learning Theory
  {(COLT)}}, pages 918--949, 2018.

\bibitem[Abe and Long(1999)]{abe1999associative}
Naoki Abe and Philip~M Long.
\newblock Associative reinforcement learning using linear probabilistic
  concepts.
\newblock In \emph{Proceedings of the International Conference on Machine
  Learning (ICML)}, pages 3--11, 1999.

\bibitem[Agarwal et~al.(2020)Agarwal, Johnson, and Agarwal]{agarwal2020choice}
Arpit Agarwal, Nicholas Johnson, and Shivani Agarwal.
\newblock Choice bandits.
\newblock In \emph{Proceedings of Advances in Neural Information Processing
  Systems (NeurIPS)}, 2020.

\bibitem[Audibert et~al.(2010)Audibert, Bubeck, and Munos]{Audibert2010}
Jean~Yves Audibert, S{\'{e}}bastien Bubeck, and R{\'{e}}mi Munos.
\newblock {Best arm identification in multi-armed bandits}.
\newblock In \emph{Proceedings of Annual Conference on Learning Theory
  {(COLT)}}, pages 41--53, 2010.

\bibitem[Auer(2002)]{Au02}
Peter Auer.
\newblock Using confidence bounds for exploitation-exploration trade-offs.
\newblock \emph{Journal of Machine Learning Research}, 3\penalty0
  (Nov):\penalty0 397--422, 2002.

\bibitem[Auer et~al.(1995)Auer, Cesa-Bianchi, Freund, and
  Schapire]{auer1995gambling}
Peter Auer, Nicolo Cesa-Bianchi, Yoav Freund, and Robert~E Schapire.
\newblock Gambling in a rigged casino: The adversarial multi-armed bandit
  problem.
\newblock In \emph{Proceedings of IEEE 36th Annual Foundations of Computer
  Science}, pages 322--331. IEEE, 1995.

\bibitem[Bengs and H{\"u}llermeier(2020)]{bengs20}
Viktor Bengs and Eyke H{\"u}llermeier.
\newblock Preselection bandits.
\newblock In \emph{Proceedings of the International Conference on Machine
  Learning (ICML)}, pages 778--787, 2020.

\bibitem[Bengs et~al.(2021)Bengs, Busa-Fekete, El~Mesaoudi-Paul, and
  H{\"u}llermeier]{bengs2021preference}
Viktor Bengs, R{\'o}bert Busa-Fekete, Adil El~Mesaoudi-Paul, and Eyke
  H{\"u}llermeier.
\newblock Preference-based online learning with dueling bandits: A survey.
\newblock \emph{Journal of Machine Learning Research}, 22:\penalty0 1--108,
  2021.

\bibitem[Brost et~al.(2016)Brost, Seldin, Cox, and Lioma]{BrSeCoLi16}
Brian Brost, Yevgeny Seldin, Ingemar~J. Cox, and Christina Lioma.
\newblock Multi-dueling bandits and their application to online ranker
  evaluation.
\newblock In \emph{Proceedings of ACM International Conference on Information
  and Knowledge Management {(CIKM)}}, pages 2161--2166, 2016.

\bibitem[Bubeck and Slivkins(2012)]{bubeck2012best}
S{\'e}bastien Bubeck and Aleksandrs Slivkins.
\newblock The best of both worlds: Stochastic and adversarial bandits.
\newblock In \emph{Proceedings of Annual Conference on Learning Theory
  {(COLT)}}, pages 42.1--42.23, 2012.

\bibitem[Bubeck et~al.(2011)Bubeck, Munos, Stoltz, and
  Szepesv{\'a}ri]{bubeck2011x}
S{\'e}bastien Bubeck, R{\'e}mi Munos, Gilles Stoltz, and Csaba Szepesv{\'a}ri.
\newblock X-armed bandits.
\newblock \emph{Journal of Machine Learning Research}, 12\penalty0 (5), 2011.

\bibitem[Cassel et~al.(2018)Cassel, Mannor, and Zeevi]{cassel2018general}
Asaf Cassel, Shie Mannor, and Assaf Zeevi.
\newblock A general approach to multi-armed bandits under risk criteria.
\newblock In \emph{Proceedings of Annual Conference on Learning Theory
  {(COLT)}}, pages 1295--1306, 2018.

\bibitem[Cesa-Bianchi and Lugosi(2012)]{cesa2012combinatorial}
Nicol\`{o} Cesa-Bianchi and G{\'a}bor Lugosi.
\newblock Combinatorial bandits.
\newblock \emph{Journal of Computer and System Sciences}, 78\penalty0
  (5):\penalty0 1404--1422, 2012.

\bibitem[Chen et~al.(2013)Chen, Wang, and Yuan]{chen2013combinatorial}
Wei Chen, Yajun Wang, and Yang Yuan.
\newblock Combinatorial multi-armed bandit: General framework and applications.
\newblock In \emph{Proceedings of the International Conference on Machine
  Learning (ICML)}, pages 151--159, 2013.

\bibitem[Chen et~al.(2020)Chen, Du, Huang, and Zhao]{chen2020combinatorialduel}
Wei Chen, Yihan Du, Longbo Huang, and Haoyu Zhao.
\newblock Combinatorial pure exploration for dueling bandit.
\newblock In \emph{Proceedings of the International Conference on Machine
  Learning (ICML)}, pages 1531--1541. PMLR, 2020.

\bibitem[Chu et~al.(2011)Chu, Li, Reyzin, and Schapire]{ChLiReSc11}
Wei Chu, Lihong Li, Lev Reyzin, and Robert~E. Schapire.
\newblock Contextual bandits with linear payoff functions.
\newblock In \emph{Proceedings of International Conference on Artificial
  Intelligence and Statistics {(AISTATS)}}, pages 208--214, 2011.

\bibitem[Du et~al.(2021)Du, Kuroki, and Chen]{chen2020combinatorial}
Yihan Du, Yuko Kuroki, and Wei Chen.
\newblock Combinatorial pure exploration with full-bandit or partial linear
  feedback.
\newblock In \emph{Proceedings of the {AAAI} Conference on Artificial
  Intelligence {(AAAI)}}, pages 7262--7270, 2021.

\bibitem[Dvoretzky et~al.(1956)Dvoretzky, Kiefer, and Wolfowitz]{Dvoretzky1956}
Aryeh Dvoretzky, Jack Kiefer, and Jacob Wolfowitz.
\newblock Asymptotic minimax character of the sample distribution function and
  of the classical multinomial estimator.
\newblock \emph{The Annals of Mathematical Statistics}, 27\penalty0
  (3):\penalty0 642 -- 669, 1956.

\bibitem[Garivier and Kaufmann(2016)]{Garivier2016}
Aur{\'e}lien Garivier and Emilie Kaufmann.
\newblock Optimal best arm identification with fixed confidence.
\newblock In \emph{Proceedings of Annual Conference on Learning Theory
  {(COLT)}}, pages 998--1027, 2016.

\bibitem[Haddenhorst et~al.(2021)Haddenhorst, Bengs, and
  H{\"u}llermeier]{haddenhorst2021identification}
Bj{\"o}rn Haddenhorst, Viktor Bengs, and Eyke H{\"u}llermeier.
\newblock Identification of the generalized {C}ondorcet winner in multi-dueling
  bandits.
\newblock \emph{Proceedings of Advances in Neural Information Processing
  Systems (NeurIPS)}, 34, 2021.

\bibitem[Huber and Puto(1983)]{huber1983market}
Joel Huber and Christopher Puto.
\newblock Market boundaries and product choice: Illustrating attraction and
  substitution effects.
\newblock \emph{Journal of Consumer Research}, 10\penalty0 (1):\penalty0
  31--44, 1983.

\bibitem[Hutter et~al.(2002)Hutter, Tompkins, and Hoos]{TomHutHoo07}
Frank Hutter, Dave~AD Tompkins, and Holger~H Hoos.
\newblock Scaling and probabilistic smoothing: Efficient dynamic local search
  for {SAT}.
\newblock In \emph{International Conference on Principles and Practice of
  Constraint Programming}, pages 233--248. Springer, 2002.

\bibitem[Jamieson and Talwalkar(2016)]{Jamieson2016}
Kevin Jamieson and Ameet Talwalkar.
\newblock Non-stochastic best arm identification and hyperparameter
  optimization.
\newblock In \emph{Proceedings of International Conference on Artificial
  Intelligence and Statistics {(AISTATS)}}, pages 240--248, 2016.

\bibitem[Jamieson et~al.(2014)Jamieson, Malloy, Nowak, and Bubeck]{Jamieson14}
Kevin Jamieson, Matthew Malloy, Robert Nowak, and S\'ebastien Bubeck.
\newblock lil'{UCB} : An optimal exploration algorithm for multi-armed bandits.
\newblock In \emph{Proceedings of Annual Conference on Learning Theory
  {(COLT)}}, volume~35, pages 423--439, 2014.

\bibitem[Jourdan et~al.(2021)Jourdan, Mutn{\`y}, Kirschner, and
  Krause]{jourdan2021efficient}
Marc Jourdan, Mojm{\'\i}r Mutn{\`y}, Johannes Kirschner, and Andreas Krause.
\newblock Efficient pure exploration for combinatorial bandits with semi-bandit
  feedback.
\newblock In \emph{Proceedings of the International Conference on Algorithmic
  Learning Theory {(ALT)}}, pages 805--849, 2021.

\bibitem[Kirschner and Krause(2021)]{kirschner2021bias}
Johannes Kirschner and Andreas Krause.
\newblock Bias-robust {B}ayesian optimization via dueling bandits.
\newblock In \emph{Proceedings of the International Conference on Machine
  Learning (ICML)}, pages 5595--5605, 2021.

\bibitem[Korba(2018)]{korba2018learning}
Anna Korba.
\newblock \emph{Learning from ranking data: theory and methods}.
\newblock PhD thesis, Universit{\'e} Paris-Saclay (ComUE), 2018.

\bibitem[Kosorok(2008)]{Kosorok2008}
Michael~R. Kosorok.
\newblock \emph{Introduction to Empirical Processes and Semiparametric
  Inference}.
\newblock Springer, New York, USA, 2008.

\bibitem[Kuroki et~al.(2020)Kuroki, Xu, Miyauchi, Honda, and
  Sugiyama]{kuroki2020polynomial}
Yuko Kuroki, Liyuan Xu, Atsushi Miyauchi, Junya Honda, and Masashi Sugiyama.
\newblock Polynomial-time algorithms for multiple-arm identification with
  full-bandit feedback.
\newblock \emph{Neural Computation}, 32\penalty0 (9):\penalty0 1733--1773,
  2020.

\bibitem[Lai and Robbins(1985)]{lai1985asymptotically}
Tze~Leung Lai and Herbert Robbins.
\newblock Asymptotically efficient adaptive allocation rules.
\newblock \emph{Advances in Applied Mathematics}, 6\penalty0 (1):\penalty0
  4--22, 1985.

\bibitem[Lattimore and Szepesv{\'a}ri(2020)]{lattimore2020bandit}
Tor Lattimore and Csaba Szepesv{\'a}ri.
\newblock \emph{Bandit Algorithms}.
\newblock Cambridge University Press, 2020.

\bibitem[Li et~al.(2020)Li, Jamieson, Rostamizadeh, Gonina, Ben-tzur, Hardt,
  Recht, and Talwalkar]{li2020system}
Liam Li, Kevin Jamieson, Afshin Rostamizadeh, Ekaterina Gonina, Jonathan
  Ben-tzur, Moritz Hardt, Benjamin Recht, and Ameet Talwalkar.
\newblock A system for massively parallel hyperparameter tuning.
\newblock \emph{Proceedings of Machine Learning and Systems (MLSys)},
  2:\penalty0 230--246, 2020.

\bibitem[Li et~al.(2017)Li, Jamieson, DeSalvo, Rostamizadeh, and
  Talwalkar]{Li2017}
Lisha Li, Kevin Jamieson, Giulia DeSalvo, Afshin Rostamizadeh, and Ameet
  Talwalkar.
\newblock Hyperband: A novel bandit-based approach to hyperparameter
  optimization.
\newblock \emph{Journal of Machine Learning Research}, 18\penalty0
  (1):\penalty0 6765–6816, 2017.

\bibitem[Massart(1990)]{Massart1990}
Pascal Massart.
\newblock The tight constant in the {D}voretzky{-K}iefer{-W}olfowitz
  inequality.
\newblock \emph{The Annals of Probability}, 18\penalty0 (3):\penalty0 1269 --
  1283, 1990.

\bibitem[Mohajer et~al.(2017)Mohajer, Suh, and Elmahdy]{MoSuEl17}
Soheil Mohajer, Changho Suh, and Adel Elmahdy.
\newblock Active learning for top-$k$ rank aggregation from noisy comparisons.
\newblock In \emph{Proceedings of International Conference on Machine Learning
  {(ICML)}}, pages 2488--2497, 2017.

\bibitem[Mohr et~al.(2021)Mohr, Bengs, and H{\"u}llermeier]{mohr2021single}
Felix Mohr, Viktor Bengs, and Eyke H{\"u}llermeier.
\newblock Single player {Monte-Carlo} tree search based on the {Plackett-Luce
  Model}.
\newblock In \emph{Proceedings of the {AAAI} Conference on Artificial
  Intelligence {(AAAI)}}, volume~35, pages 12373--12381, 2021.

\bibitem[Munos(2014)]{munos2014bandits}
R{\'e}mi Munos.
\newblock From bandits to {Monte-Carlo} tree search: {T}he optimistic principle
  applied to optimization and planning.
\newblock \emph{Foundations and Trends{\textregistered} in Machine Learning},
  7\penalty0 (1):\penalty0 1--129, 2014.

\bibitem[Ren et~al.(2019)Ren, Liu, and Shroff]{ren2019sample}
Wenbo Ren, Jia Liu, and Ness Shroff.
\newblock On sample complexity upper and lower bounds for exact ranking from
  noisy comparisons.
\newblock In \emph{Proceedings of Advances in Neural Information Processing
  Systems (NeurIPS)}, pages 10014--10024, 2019.

\bibitem[Ren et~al.(2020)Ren, Liu, and Shroff]{ren20a}
Wenbo Ren, Jia Liu, and Ness Shroff.
\newblock The sample complexity of best-$k$ items selection from pairwise
  comparisons.
\newblock In \emph{Proceedings of the International Conference on Machine
  Learning (ICML)}, pages 8051--8072, 2020.

\bibitem[Robbins(1952)]{robbins1952some}
Herbert Robbins.
\newblock Some aspects of the sequential design of experiments.
\newblock \emph{Bulletin of the American Mathematical Society}, 58\penalty0
  (5):\penalty0 527--535, 1952.

\bibitem[Saha and Gopalan(2018)]{SaGo18a}
Aadirupa Saha and Aditya Gopalan.
\newblock Battle of bandits.
\newblock In \emph{Proceedings of Conference on Uncertainty in Artificial
  Intelligence {(UAI)}}, pages 805--814, 2018.

\bibitem[Saha and Gopalan(2020)]{Saha2020b}
Aadirupa Saha and Aditya Gopalan.
\newblock From {PAC} to instance{-}optimal sample complexity in the
  {P}lackett{-}{L}uce model.
\newblock In \emph{Proceedings of International Conference on Machine Learning
  {(ICML)}}, pages 8367--8376, 2020.

\bibitem[Schede et~al.(2022)Schede, Brandt, Tornede, Wever, Bengs,
  H{\"{u}}llermeier, and Tierney]{schede2022AC}
Elias Schede, Jasmin Brandt, Alexander Tornede, Marcel Wever, Viktor Bengs,
  Eyke H{\"{u}}llermeier, and Kevin Tierney.
\newblock A survey of methods for automated algorithm configuration.
\newblock \emph{Journal of Artificial Intelligence Research}, 75, 2022.

\bibitem[Shen(2019)]{shen2019universal}
Cong Shen.
\newblock Universal best arm identification.
\newblock \emph{IEEE Transactions on Signal Processing}, 67\penalty0
  (17):\penalty0 4464--4478, 2019.

\bibitem[Simchowitz et~al.(2016)Simchowitz, Jamieson, and
  Recht]{simchowitz2016best}
Max Simchowitz, Kevin Jamieson, and Benjamin Recht.
\newblock Best-of-k-bandits.
\newblock In \emph{Proceedings of Annual Conference on Learning Theory
  {(COLT)}}, pages 1440--1489, 2016.

\bibitem[Simonson(1989)]{simonson1989choice}
Itamar Simonson.
\newblock Choice based on reasons: The case of attraction and compromise
  effects.
\newblock \emph{Journal of Consumer Research}, 16\penalty0 (2):\penalty0
  158--174, 1989.

\bibitem[Tversky(1972)]{tversky1972elimination}
Amos Tversky.
\newblock Elimination by aspects: A theory of choice.
\newblock \emph{Psychological Review}, 79\penalty0 (4):\penalty0 281, 1972.

\bibitem[Wasserman(2013)]{wasserman2013all}
Larry Wasserman.
\newblock \emph{All of Statistics: A concise Course in Statistical Inference}.
\newblock Springer Science \& Business Media, 2013.

\bibitem[Yang et~al.(2021)Yang, Ren, Dhillon, and
  Sanghavi]{yang2021combinatorial}
Shuo Yang, Tongzheng Ren, Inderjit~S Dhillon, and Sujay Sanghavi.
\newblock Combinatorial bandits without total order for arms.
\newblock \emph{arXiv preprint arXiv:2103.02741}, 2021.

\bibitem[Yao(1977)]{yao1977probabilistic}
Andrew Chi-Chin Yao.
\newblock Probabilistic computations: Toward a unified measure of complexity.
\newblock In \emph{18th Annual Symposium on Foundations of Computer Science
  {(SFCS)}}, pages 222--227, 1977.

\bibitem[Yue et~al.(2012)Yue, Broder, Kleinberg, and Joachims]{YuBrKlJo12}
Yisong Yue, Josef Broder, Robert Kleinberg, and Thorsten Joachims.
\newblock The $k$-armed dueling bandits problem.
\newblock \emph{Journal of Computer and System Sciences}, 78\penalty0
  (5):\penalty0 1538--1556, 2012.

\end{thebibliography}

\clearpage

\appendix
\onecolumn

\thispagestyle{empty}

\section{List of Symbols}
The following table contains a list of symbols that are frequently used in the main paper as well as in the following supplementary material. \\ \medskip
\small
\begin{tabularx}{\columnwidth}{lX}
    \hline
	\multicolumn{2}{c}{\textbf{Basics}} \\
	\hline
	$\id\{ \cdot  \}$ & indicator function \\
	$\N$  & set of natural numbers (without 0), i.e., $\N = \{1,2,3,\dots\}$ \\
	$\R$ & set of real numbers\\
	$D$ & observation domain (categorical or numerical)\\
	$\mathcal{A} = [n]$ & set of arms \\
    $n$ & number of arms\\
    $k$ & maximal possible subset size \\
    $B$ & budget for the learner \\ 
    $\mathcal{Q}_{\leq k}$ & all subsets of $\mathcal{A}$ of size $\leq k$: %
    $\{Q \subseteq \mathcal{A} ~|~ 2 \leq |Q| \leq k \}$\\
    $\mathcal{Q}_{\leq k}(i)$ & all subsets in $\mathcal{Q}_{\leq k }$ which contain arm $i$: $\{ Q \in \mathcal{Q}_{\leq k} ~|~ i \in Q\}$\\
    $\mathcal{Q}_{= k}$ & all subsets of $\mathcal{A}$ of size $k:$ $\{Q \subseteq \mathcal{A} ~|~ |Q| = k \}$\\
    $\mathcal{Q}_{= k}(i)$ & all subsets of $\mathcal{A}$ of size $k$ which contain arm $i:$ $\{ Q \in \mathcal{Q}_{= k} ~|~ i \in Q\}$\\
    $\mathbf{o}_Q(t)$ & observed feedback vector by querying $Q$ for the $t$-th time \\
    \hline
	\multicolumn{2}{c}{\textbf{Modelling related}} \\
	\hline
	$s$ & relevant statistic for the decision making process \\
    $s_{i|Q}(t)$ & statistics for arm $i \in Q$ derived by the observed feedback at the $t$-th usage of query set $Q$\\
    $\mathbf{s}_Q(t)$ & vector of statistics for all arms in the query set $Q$ after its $t$-th usage: $(s_{i|Q})_{i\in Q}(t)$\\
    $S_{i|Q}$ & limit of the statistics for arm $i$ in query set $Q$: %
    $\lim_{t\rightarrow \infty} s_{i|Q}(t)$\\
    $i^{\ast}$ & best arm or generalized Condorcet winner: $\forall Q \in \mathcal{Q}_{\leq k}$ with $i^{\ast} \in Q$ it holds that $S_{i^{\ast}|Q} > S_{j|Q} $ for any $j\in Q\backslash \{i^{\ast}\} $ \\
    $s_{i}^{\mathcal B}(t)$ & Borda score of arm $i$ at time $t$:  $\nicefrac{\sum\nolimits_{Q \in \mathcal{Q}_{=k}(i)} s_{i|Q}(t)}{|\mathcal{Q}_{=k}(i)|}$ \\
    $S_{i}^{\mathcal B}$ & limit Borda score of arm $i$: $ \lim_{t \to \infty} s_{i}^{\mathcal B}(t)$ \\
    $i_{\mathcal B}^{\ast}  $ & generalized Borda winner: $i_{\mathcal B}^{\ast}   \in \arg \max_{i \in \mathcal{A}} S_{i}^{\mathcal B}$\\
    $n_Q(t)$ & number of times query set $Q$ was used until time $t$\\
    $\gamma_{i|Q}(t)$ & point-wise smallest non-increasing function bounding the difference $|s_{i|Q}(t) - S_{i|Q}|$ (rate of convergence)\\
    $\bar{\gamma}_Q(t)$ & maximal $\gamma_{i|Q}(t)$ over all $i\in Q$\\
    $\bar{\gamma}(t)$ & maximal $\bar{\gamma_Q(t)}$ over all $Q \in \mathcal{Q}_{\leq k }$\\
    $\gamma_{i|Q}^{-1}(\alpha)$ & quasi-inverse of $\gamma_{i|Q}:$ $\min\{ t\in \mathbb{N} ~|~ \gamma_{i|Q}(t) \leq \alpha \}$\\
    $\bar{\gamma}_Q^{-1}(t)$ & minimal $\gamma_{i|Q}(t)$ over all $i\in Q$\\
    $\bar{\gamma}^{-1}(t)$ & minimal $\gamma_Q(t)$ over all $Q \in \mathcal{Q}_{\leq k}$ \\
    $\hat{\gamma}_{i}(t)$ & rate of convergence of the Borda score for arm $i$: $ \frac{1}{|\mathcal{Q}_{=k}(i)|} \sum_{Q \in \mathcal{Q}_{=k}(i)} \gamma_{i|Q}(t)$ \\
    $\hat{\gamma}_{i,j}^{\max}(t)$ & $\max\{\hat{\gamma}_{i}(t), \hat{\gamma}_{j}(t)\}$. \\
    $\Delta_{i|Q}$ & gap of the limit statistic of arm $i \in Q$ to the limit statistic of the generalized Condorcet winner: $|S_{i^{\ast}|Q} - S_{i|Q}|$ for any $Q \in \mathcal{Q}_{\leq k}(i) \cap \mathcal{Q}_{\leq k}(i^{\ast})$ \\
    $S_{(l)|Q}$,  $\Delta_{(l)|Q}$   & $l$-th order statistic of $\{S_{i|Q}\}_{i\in Q}$ for $l\in\{1,2,\ldots,|Q|\}$ and its  gap $\Delta_{(l)|Q} = S_{i^{\ast}|Q} - S_{(l)|Q}$ \\
    \hline
	\multicolumn{2}{c}{\textbf{Algorithm related}} \\
	\hline
    $f$ & function from $[k]$ to $[k]$ specifying the nature of the arm elimination strategy \\
    $R, R^{\mathbb{A}}$ & number of rounds of the learning algorithm ($\mathbb{A}$) \\
    $P_r,P_r^{\mathbb{A}}$ & number of partitions of the learning algorithm  ($\mathbb{A}$) in round $r$ \\
    $\mathbb{A}_{r,j}$ & $j$-th partition in round $r$ \\
    $\mathbb{A}_{r}(i^*)$ & the partition in round $r$ containing $i^*$ (emptyset otherwise)\\
    $b_r$ & budget used in round $r$ for a partition \\
    $z_{\mathbb{A}}$ & sufficient budget for learning algorithm $\mathbb{A}$ to return  $i^{\ast}$ (or $i_{\mathcal B}^{\ast}  $ if $\mathbb{A}$ is \textsc{RoundRobin}) \\
    \textsc{RoundRobin} & the na\"ive algorithm introduced in Section \ref{sec:round_robin}  \\
    \textsc{CSE} & the generic \emph{combinatorial successive elimination} algorithm (Algorithm \ref{alg:Framework}) \\
    \textsc{CSWS} & the \emph{combinatorial successive winner stays} algorithm resulting by using $f(x)=1$ in \textsc{CSE} \\
    \textsc{CSR} & the \emph{combinatorial successive rejects}  algorithm resulting by using $f(x)=x-1$ in \textsc{CSE}\\
    \textsc{CSH} & the \emph{combinatorial successive halving} algorithm resulting by using $f(x)=\lceil x/2 \rceil $ in \textsc{CSE} \\
    \textsc{SH} & the \emph{successive halving} algorithm for pure exploration settings in standard multi-armed bandits (cf.\ \cite{Jamieson2016}) \\
    GBW & Generalized Borda winner \\
    GCW & Generalized Condorcet winner \\
\end{tabularx}
\normalsize
\clearpage

\section{Proofs for Section~\ref{sec_main_LBs}} \label{sec:proofs_lower_bounds}
In this section, we prove the general lower bounds on the necessary budget for identifying the generalized Condorcet winner (GCW), the generalized Borda winner (GBW) or the generalized Copeland winner (GCopeW). For this purpose, let us first fix some further notation.
If $\Alg$ is a possibly probabilistic algorithm and $\mathbf{s}$ is fixed, we write $\Alg(\mathbf{s})$ for the output of $\Alg$ executed on the instance $\mathbf{s}$. 
We restrict ourselves only to algorithms whose output is solely determined by the sequence of observations it has received as well as the corresponding statistics.
Moreover, for $Q\in \mathcal{Q}_{\leq k}$, we write $B_{Q}(\Alg,\mathbf{s}) \in \N \cup \{\infty\}$ for the number of times $\Alg$ queries $Q$ when started on instance $\mathbf{s}$. Note that $\Alg(\mathbf{s})$ as well as $B_{Q}(\Alg,\mathbf{s})$ and $B(\Alg,\mathbf{s}) = \sum_{Q\in \mathcal{Q}_{\leq k}} B_{Q}(\Alg,\mathbf{s})$ are random variables, because they depend on the innate randomness of $\Alg$.\\
Given $\mathbf{s}$, let us write $\mathrm{GCW}(\mathbf{s})$, $\mathrm{GBW}(\mathbf{s})$ and  $\mathrm{GCopeW}(\mathbf{s})$ for the set of all GCWs, GBWs and GCopeWs of $\mathbf{s}$, respectively. In case $|\mathrm{GCW}(\mathbf{s})| = 1$, $|\mathrm{GBW}(\mathbf{s})| = 1$ resp. $|\mathrm{GCopeW}(\mathbf{s})|=1$,  with a slight abuse of notation, we may denote by $\mathrm{GCW}(\mathbf{s})$, $\mathrm{GBW}(\mathbf{s})$ resp. $\mathrm{GCopeW}(\mathbf{s})$ simply the only GCW, GBW resp. GCopeW of $\mathbf{s}$.
Recall that the  GCW, the GBWs and the GCopeWs of $\mathbf{s}$ only depend on the limits $\mathbf{S} = (S_{i|Q})_{Q\in \mathcal{Q}_{\leq k},i\in Q}$ with $S_{i|Q} = \lim_{t\ra \infty} s_{i|Q}(t)$.\\
\begin{Def}
    Let $\Alg$ be a (possibly probabilistic) sequential algorithm.
    \begin{itemize}
        \item[(i)]
        $\Alg$ solves  $\mathcal{P}_{\mathrm{GCW}}(\mathbf{S},\bsym{\gamma})$ if $\P(\Alg(\mathbf{s}) \in  \mathrm{GCW}(\mathbf{s})) =1$ for any $\mathbf{s}$ in $\mathfrak{S}(\mathbf{S},\bsym{\gamma})$.
        \item[(ii)]
         $\Alg$ solves $\mathcal{P}_{\mathrm{GBW}}(\mathbf{S},\bsym{\gamma})$ if $\P(\Alg(\mathbf{s}) \in \mathrm{GBW}(\mathbf{s})) =1$ for any $\mathbf{s}$ in $\mathfrak{S}(\mathbf{S},\bsym{\gamma})$.
         \item[(iii)]
         $\Alg$ solves $\mathcal{P}_{\mathrm{GCopeW}}(\mathbf{S},\bsym{\gamma})$ if $\P(\Alg(\mathbf{s}) \in \mathrm{GCopeW}(\mathbf{s})) =1$ for any $\mathbf{s}$ in $\mathfrak{S}(\mathbf{S},\bsym{\gamma})$.
    \end{itemize}
\end{Def}

\subsection{Proof of Theorem~\ref{Thm_Main_LB_GCW} (i): Lower Bound for GCW Identification}\label{sec_proof_Thm_Main_LB_GCW}

The proof of (i) in Theorem \ref{Thm_Main_LB_GCW} is prepared with the next lemma.

\begin{Le}\label{Le_help}
		Let $\Alg$ be a deterministic solution to $\mathcal{P}_{\mathrm{GCW}}(\mathbf{S},\bsym{\gamma})$ and $\mathbf{s},\mathbf{s'} \in \mathfrak{S}(\mathbf{S},\bsym{\gamma})$.
		\begin{itemize}
			\item[(i)]
			If $\Alg(\mathbf{s}) \not= \Alg(\mathbf{s'})$, then 
			\begin{equation*}
			\exists Q \in \mathcal{Q}_{\leq k} ,  i\in Q, \, t\in \{1,\dots, \min\{ B_{Q}(\Alg,\mathbf{s}), B_{Q}(\Alg,\mathbf{s'}) \}\} \, : \, s_{i|Q}(t) \not= s'_{i|Q}(t).
			\end{equation*}
			\item[(ii)]
			If $\mathbf{s}$ and $\mathbf{s'}$ coincide on $\{t<B'\}$ and on $\widetilde{\mathcal{Q}} \subseteq \mathcal{Q}_{\leq k}$ in the sense that 
			\begin{equation}\label{eq_a} \forall Q\in \mathcal{Q}_{\leq k}, \forall i\in Q, \forall t<B' \, :  \, s_{i|Q}(t) =
			 s'_{i|Q}(t)
			\end{equation}
			and 
			\begin{equation}\label{eq_b}
			\forall Q\in \widetilde{\mathcal{Q}}, \forall i\in Q, \forall t\in \N \, :  \, s_{i|Q}(t) = s'_{i|Q}(t),
			\end{equation}
			then $\Alg(\mathbf{s}) \not= \Alg(\mathbf{s'})$ implies 
			\begin{equation*}
	\exists Q\in \mathcal{Q}_{\leq k} \setminus \widetilde{\mathcal{Q}} \, : \, \min \{ B_{Q}(\Alg,\mathbf{s}), B_{Q}(\Alg,\mathbf{s'}) \} \geq B'.
\end{equation*}
		\end{itemize}
\end{Le}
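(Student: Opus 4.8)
The plan is to establish (i) first, by contraposition, and then obtain (ii) as an immediate corollary. For (i), I would assume the negation of the displayed conclusion, namely that for every $Q \in \mathcal{Q}_{\leq k}$, every $i \in Q$, and every $t \in \{1,\dots,\min\{B_{Q}(\Alg,\mathbf{s}),B_{Q}(\Alg,\mathbf{s'})\}\}$ one has $s_{i|Q}(t) = s'_{i|Q}(t)$, and then deduce that necessarily $\Alg(\mathbf{s}) = \Alg(\mathbf{s'})$. The key device is that $\Alg$ is deterministic and, by the standing restriction, its behaviour is a function only of the statistics it has observed so far; since the instance $\mathbf{s}$ directly specifies the value $s_{i|Q}(t)$ returned upon the $t$-th query of $Q$, two instances agreeing on all values actually observed will drive $\Alg$ identically. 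Concretely, I would run the two executions in parallel and argue by induction on the query index $m=1,2,\dots$ that both runs issue the same $m$-th query and receive the same $m$-th observation.

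For the inductive step, suppose the first $m-1$ queries and observations coincide across the two runs. Then determinism applied to the identical histories forces the same $m$-th query $Q^{(m)}$ in both runs. Let $t_{m}$ be the number of times $Q^{(m)}$ has been queried through step $m$ inclusive in this common history. Because the histories agree up to step $m$, the set $Q^{(m)}$ is queried at least $t_{m}$ times in each \emph{full} run, so $t_{m} \leq \min\{B_{Q^{(m)}}(\Alg,\mathbf{s}),B_{Q^{(m)}}(\Alg,\mathbf{s'})\}$; the assumed agreement then yields $s_{i|Q^{(m)}}(t_{m}) = s'_{i|Q^{(m)}}(t_{m})$ for all $i\in Q^{(m)}$, i.e.\ the $m$-th observation is identical. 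This closes the induction, so the two trajectories agree in full, including the stopping time and the returned arm; hence $\Alg(\mathbf{s}) = \Alg(\mathbf{s'})$, and contraposition gives (i).

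For (ii), I would simply invoke (i): from $\Alg(\mathbf{s}) \neq \Alg(\mathbf{s'})$ it produces some $Q\in\mathcal{Q}_{\leq k}$, $i\in Q$, and $t\leq \min\{B_{Q}(\Alg,\mathbf{s}),B_{Q}(\Alg,\mathbf{s'})\}$ with $s_{i|Q}(t)\neq s'_{i|Q}(t)$. Hypothesis \eqref{eq_b} rules out any disagreement on sets in $\widetilde{\mathcal{Q}}$, so this witnessing $Q$ must lie in $\mathcal{Q}_{\leq k}\setminus\widetilde{\mathcal{Q}}$; hypothesis \eqref{eq_a} rules out any disagreement at times $t<B'$, so the witnessing time satisfies $t\geq B'$. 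Combining $t\geq B'$ with $t\leq \min\{B_{Q}(\Alg,\mathbf{s}),B_{Q}(\Alg,\mathbf{s'})\}$ gives $\min\{B_{Q}(\Alg,\mathbf{s}),B_{Q}(\Alg,\mathbf{s'})\}\geq B'$ for a set $Q\notin\widetilde{\mathcal{Q}}$, which is exactly the assertion.

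The main obstacle I anticipate is the bookkeeping in the induction for (i): one must justify the bound $t_{m}\leq \min\{\dots\}$ \emph{before} knowing that the two full runs coincide, which threatens circularity. This is resolved by noting that the bound uses only the portion of each run already proven identical — the partial count $t_{m}$ of queries to $Q^{(m)}$ is a lower bound for the total count $B_{Q^{(m)}}(\Alg,\cdot)$ in either run — so the argument is self-contained. Beyond this, one only needs to state the ``output depends solely on observed statistics'' restriction precisely enough that identical observation histories force an identical next action; the rest is routine.
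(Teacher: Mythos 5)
Your proposal is correct and follows essentially the same route as the paper: part (i) by contraposition via the observation that a deterministic algorithm fed identical observation histories produces identical trajectories (the paper packages your query-by-query induction into a short claim that the per-set budgets coincide), and part (ii) as a direct corollary of (i) combined with \eqref{eq_a} and \eqref{eq_b}. Your explicit handling of the apparent circularity in bounding $t_m$ by the partial query count is a sound and slightly more careful rendering of the same argument.
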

\begin{proof}
	\begin{itemize}
	\item[(i)]
	To prove the contraposition, suppose that 
	\begin{equation}\label{eq_proof_Le_help}
		\forall  Q \in \mathcal{Q}_{\leq k} ,  i\in Q, \, t\in \{1,\dots, \min\{ B_{Q}(\Alg,\mathbf{s}), B_{Q}(\Alg,\mathbf{s'}) \}\} \, : \, s_{i|Q}(t) = s'_{i|Q}(t)
	\end{equation}
	holds.\\
	\noindent \textbf{Claim 1:} $B_{Q}(\Alg,\mathbf{s}) = B_{Q}(\Alg,\mathbf{s'})$ for any $\mathcal{Q} \in \mathcal{Q}_{\leq k}$.\\
	\noindent \textbf{Proof:} Assume this was not the case. Let $Q\in \mathcal{Q}_{\leq k}$ be the first set, for which $\Alg$ exceeds its budget on  one of $\mathbf{s}$, $\mathbf{s'}$ but does not reach it on the other instance, and suppose w.l.o.g.\ $B_{Q}(\Alg,\mathbf{s}) > B_{Q}(\Alg,\mathbf{s'})$. Since $\Alg$ has observed until this point exactly the same feedback on $\mathbf{s}$ as on $\mathbf{s'}$, this is a contradiction as $\Alg$ is deterministic.
	\hfill $\blacksquare$\\ 
	\noindent Combining Claim 1 and  \eqref{eq_proof_Le_help} yields that $\Alg$ observes on $\mathbf{s}$ exactly the same feedback as on $\mathbf{s'}$ until its termination. Since $\Alg$ is deterministic, this implies $\Alg(\mathbf{s}) = \Alg(\mathbf{s'})$. 
	\item[(ii)]
	If $\Alg(\mathbf{s}) \not= \Alg(\mathbf{s'})$, then (i) together with \eqref{eq_b} yields
	\begin{equation*}
		\exists Q \in \mathcal{Q}_{\leq k} \setminus \widetilde{\mathcal{Q}}, i\in Q, t\leq \min \{B_{Q}(\Alg,\mathbf{s}),B_{Q}(\Alg,\mathbf{s'})\} \, : \, 
	s_{i|Q}(t) \not= s'_{i|Q}(t),
	\end{equation*}
	and thus \eqref{eq_a} implies 
	\begin{equation*}
		\exists Q\in \mathcal{Q}_{\leq k} \setminus \widetilde{\mathcal{Q}} \, : \, \min \{ B_{Q}(\Alg,\mathbf{s}), B_{Q}(\Alg,\mathbf{s'}) \} \geq B'.
	\end{equation*}
	\end{itemize}
\end{proof}

Lemma \ref{Le_help} is the main ingredient for the proof of Theorem \ref{Thm_Main_LB_GCW}, as we first analyze the lower bound for deterministic algorithms and then apply Yao's minimax principle \citep{yao1977probabilistic} to infer the lower bound for any randomized algorithm.

\begin{proof}[Proof of Theorem \ref{Thm_Main_LB_GCW} (i)]
    We split the proof into two parts.\\[0.5em]
    \noindent \textbf{Part 1: The statement holds in case $\Alg$ is a deterministic algorithm.}\\
    \noindent 
		Abbreviate $B'\coloneqq \min_{Q\in \mathcal{Q}_{\leq k}} \min_{j\in Q} \gamma_{j|Q}^{-1}\left( \frac{S_{(1)|Q} - S_{(|Q|)|Q}}{2}\right)$. 
		Fix a family $\{\pi_{Q}\}_{Q\in \mathcal{Q}_{\leq k}}$ of permutations $\pi_{Q} : Q \mapsto Q$ such that $S_{\pi_{Q}(1)|Q} = S_{(1)|Q}$ holds for any $Q\in \mathcal{Q}_{\leq k}(1)$, and define $\mathbf{s} = (s_{i|Q}(t))_{Q\in \mathcal{Q}_{\leq k},i\in Q, t\in \N}$ via 
		\begin{equation*}
			s_{i|Q}(t) \coloneqq \begin{cases} \frac{S_{(1)|Q} + S_{(|Q|)|Q}}{2},\quad &\text{if } t<B', \\ S_{\pi_{Q}(i)|Q}, \quad &\text{if } t\geq B'. \end{cases}
		\end{equation*}
	    Regarding our assumption on $\mathbf{S}$, $\mathrm{GCW}(\mathbf{s}) = 1$ holds by construction. For $t<B' \leq  \gamma_{i|Q}^{-1}\left(\frac{S_{(1)|Q}-S_{(|Q|)|Q}}{2}\right)$, which implies $\gamma_{i|Q}(t) \geq \frac{S_{(1)|Q}-S_{(|Q|)|Q}}{2}$, 
	we have due to $S_{(1)|Q} \geq S_{i|Q} \geq S_{(|Q|)|Q}$  the inequality 
	\begin{align*}
		&\left| s_{i|Q}(t) - \lim\nolimits_{t\ra \infty} s_{i|Q}(t) \right| = \left| \frac{S_{(1)|Q}+S_{(|Q|)|Q}}{2} - S_{i|Q} \right| \\
		&\leq 
		\max \left\{ S_{(1)|Q} - \frac{S_{(1)|Q}+S_{(|Q|)|Q}}{2} ,  \frac{S_{(1)|Q}+S_{(|Q|)|Q}}{2} - S_{(|Q|)|Q}\right\}   \\
		&= \frac{S_{(1)|Q}-S_{(|Q|)|Q}}{2} \\
		&\leq \gamma_{i|Q}(t)
	\end{align*}
	for any $i\in Q$.
	This shows $\mathbf{s} \in \mathfrak{S}(\mathbf{S},\bsym{\gamma})$.\par 
		For any $l\in \{2,\dots,n\}$ define an instance $\mathbf{s}^{l} = (s^{l}_{i|Q}(t))_{Q\in \mathcal{Q}_{\leq k},i\in Q, t\in \N}$ such that $s^{l}_{\cdot|Q}(\cdot) = s_{\cdot|Q}(\cdot)$ for any $Q\in \mathcal{Q}_{\leq k}$ with $l\not\in Q$ and 
		\begin{equation*}
			s^{l}_{i|Q}(t) \coloneqq \begin{cases} \frac{S_{(1)|Q}+S_{(|Q|)|Q}}{2}, \quad &\text{ if } t<B', \\
			S_{(1)|Q}, 
			\quad &\text{ if } t\geq B' \text{ and } i=l,\\
			S_{l|Q}, \quad &\text{ if } t\geq B' \text{ and } i=\argmax_{j\in Q} S_{j|Q} \\
			s_{i|Q}(t), \quad &\text{ else},
			\end{cases}
		\end{equation*}
		for all $Q\in \mathcal{Q}_{\leq k}(l)$, $i\in Q$ and $t\in \N$. According to its definition, we have $\mathrm{GCW}(\mathbf{s}^{l}) = l$, and similarly as above one may check $\mathbf{s}^{l} \in \mathfrak{S}(\mathbf{S},\bsym{\gamma})$.\par 
		Since $\Alg$ solves $\mathcal{P}_{\mathrm{GCW}}(\mathbf{S},\bsym{\gamma})$, it satisfies $\Alg(\mathbf{s}) = 1 \not= 2 = \Alg(\mathbf{s}^{2})$. Regarding that  $\mathbf{s}$ and $\mathbf{s}^{2}$ coincide on $\{t< B'\}$ and on $\{Q \in \mathcal{Q}_{\leq k} \, | \,  1\not\in Q \mbox{ or }  2\not\in Q\}$ in the sense of \eqref{eq_a} and \eqref{eq_b}, Lemma \ref{Le_help} (ii) assures the existence of some $Q_{1} \in \mathcal{Q}_{\leq k}$ with $1\in Q_{1}$ and $i_{1} \coloneqq 2 \in Q_{1}$ such that $B_{Q_{1}}(\Alg,\mathbf{s}) \geq \min \{ B_{Q_{1}}(\Alg,\mathbf{s}), B_{Q_{1}}(\Alg,\mathbf{s}^{i_{1}})\} \geq B'$.
		Let $F_{1} \coloneqq [n] \setminus Q_{1}$ and fix an arbitrary $i_{2} \in F_{1}$. 
		Then, $\Alg(\mathbf{s}) = 1 \not= i_{2} = \Alg(\mathbf{s}^{i_{2}})$
		and since $\mathbf{s}$ and $\mathbf{s}^{i_{2}}$ coincide on $\{t<B'\}$ and $\{Q\in \mathcal{Q}_{\leq k} \, | \,  i_{2} \not\in Q\}$, Lemma \ref{Le_help} (ii) yields the existence of some $Q_{2} \in \mathcal{Q}_{\leq k}$ with $i_{2} \in Q_{2}$ such that $B_{Q_{2}}(\Alg,\mathbf{s}) \geq \min \{ B_{Q_{2}}(\Alg,\mathbf{s}), B_{Q_{2}}(\Alg,\mathbf{s}^{i_{2}})\} \geq B'$. From $i_{2} \in F_{1} = [n]\setminus Q_{1}$ and $i_{2} \in Q_{2}$ we infer $Q_{1} \not= Q_{2}$.
		With this, we define $F_{2} \coloneqq F_{1} \setminus Q_{2} = [n] \setminus (Q_{1} \cup Q_{2})$.\par 
		Inductively, whenever $F_{l} \not= \emptyset$, we may select an element $i_{l+1} \in  F_{l}$ and infer from Lemma \ref{Le_help} (ii), due to $\Alg(\mathbf{s}) = 1 \not= i_{l+1} = \Alg(\mathbf{s}^{i_{l+1}})$ and the similarity of $\mathbf{s}$ and $\mathbf{s}^{i_{l+1}}$ on $\{t<B'\}$ and $\{Q\in \mathcal{Q}_{\leq k} | i_{l+1} \not\in Q\}$,  the existence of a set $Q_{l+1} \in \mathcal{Q}_{\leq k}$ with $i_{l+1} \in Q_{l+1}$ such that $B_{Q_{l+1}}(\Alg,\mathbf{s}) \geq B'$, and define $F_{l+1} \coloneqq F_{l} \setminus Q_{l+1}$. Then, $i_{l+1} \in F_{l} = [n] \setminus (Q_{1} \cup \dots \cup Q_{l})$ and $i_{l+1} \in Q_{l+1}$ assure $Q_{l+1} \not\in \{Q_{1},\dots,Q_{l}\}$. This procedure terminates at the smallest $l'$ such that $F_{l'} = \emptyset$, and $Q_{1},\dots,Q_{l'}$ are distinct.  Regarding that $|F_{l+1}|-|F_{l}| \leq |Q_{l}| \leq k$ for all $l\in \{1,\dots,l'-1\}$, we have $l' \geq \lceil \frac{n}{k} \rceil $. Consequently,
		\begin{align*}
			B(\Alg,\mathbf{s}) \geq \sum\limits_{l=1}^{l'} B_{Q_{l}}(\Alg,\mathbf{s}) \geq \left\lceil \frac{n}{k} \right\rceil B' 
		\end{align*}
		holds, which shows the claim for deterministic algorithms with regard to the definition of $B'$.\\[0.5em]
        \noindent \textbf{Part 2: The statement holds for arbitrary $\Alg$.}\\
        \noindent 
		Let $\mathfrak{A}$ be the set of all deterministic algorithms\footnote{At any time $t\in \N$, a deterministic algorithm $\Alg \in \mathfrak{A}$ may either make a query $Q\in \mathcal{Q}_{\leq k}$ or terminate with a decision $\mathrm{X} \in \{1,\dots,n\}$. Thus, $\mathfrak{A}$ is a countable set.} and $\mathbf{s}$ be the instance from the first part. Write $\delta_{\mathbf{s}}$ for the  probability distribution on $\{\mathbf{s}\}$, which assigns $\mathbf{s}$ probability one, i.e., the Dirac measure on $\mathbf{s}$. Note that for any randomized algorithm $\Alg$ there exists a probability distribution $P$ %
		on $\mathfrak{A}$ such that $\Alg \sim P$. 
		By applying Yao's minimax principle \citep{yao1977probabilistic} and using part one we conclude 
		\begin{align*}
			\E [B(\Alg,\mathbf{s})] = \E_{\Alg' \sim P} [B(\Alg',\mathbf{s})] &\geq \inf\nolimits_{\Alg \in \mathfrak{A}} \E_{\mathbf{s'} \sim \delta_{\mathbf{s}}} [B(\Alg,\mathbf{s'})] \\
			&= \inf\nolimits_{\Alg \in \mathfrak{A}} B(\Alg,\mathbf{s}) \geq \left\lceil \frac{n}{k} \right\rceil B',
		\end{align*}
		where $B'$ is as in part one.
\end{proof}

\begin{Rem}\label{Rem_Main_LB_GCW}
\begin{itemize}
    \item[(i)]
The above proof reveals even a stronger version of Theorem \ref{Thm_Main_LB_GCW} (i). 
Indeed, in the proof we explicitly construct $n$ distinct instances $\mathbf{s}^{1}\coloneqq \mathbf{s},\dots,\mathbf{s}^{n} \in \mathfrak{S}(\mathbf{S},\bsym{\gamma})$ with $\mathrm{GCW}(\mathbf{s}^{l}) = l$ for all $l\in [n]$, and in fact show:
Any (possibly random) algorithm $\Alg$, which is able to correctly identify the best arm for any $\mathbf{s}' \in \{\mathbf{s}_{1},\dots,\mathbf{s}_{n}\}$ (i.e., $\Alg$ does not necessarily have to solve $\mathcal{P}_{\mathrm{GCW}}(\mathbf{S},\bsym{\gamma})$) fulfills
\begin{equation*}
    \E\left[ B(\Alg,\mathbf{s}) \right] \geq   \left\lceil \frac{n}{k} \right\rceil \min_{Q\in \mathcal{Q}_{\leq k}} \min_{j\in Q} \gamma_{j|Q}^{-1}\left( \frac{S_{(1)|Q} - S_{(|Q|)|Q}}{2}\right).
\end{equation*}
\item[(ii)]
Condition (iii) in the definition of $\mathfrak{S}(\mathbf{S},\bsym{\gamma})$ assures that the term $S_{(1)|Q}$ resp. $S_{(|Q|)|Q}$ in our lower bound from Theorem \ref{Thm_Main_LB_GCW} coincides with $S'_{(1)|Q}$ resp. $S'_{(|Q|)|Q}$, when $S'_{i|Q} \coloneqq \lim_{t\ra \infty} s_{i|Q}(t)$ for $\mathbf{s} \in \mathfrak{S}(\mathbf{S},\bsym{\gamma})$.
\end{itemize}
\end{Rem}

\subsection{Proof of Theorem~\ref{Thm_Main_LB_GCW} (ii): Lower Bound for GBW Identification}\label{sec_proof_Thm_Main_LB_BW}

Recall that $\mathrm{GBW}(\mathbf{s})$ is the set of elements $i\in [n]$, 
for which the limits $S_{i|Q} =   \lim_{t\ra \infty} s_{i|Q}(t)$ have the highest Borda score 
\begin{equation*}
	S_{i}^{\mathcal B} = \frac{\sum_{Q\in \mathcal{Q}_{=k}(i)} S_{i|Q}}{|\mathcal{Q}_{=k}(i)|} = \frac{\sum_{Q\in \mathcal{Q}_{=k}(i)} S_{i|Q}}{\binom{n-1}{k-1}}.
\end{equation*}
We call $\mathbf{S} = (S_{i|Q})_{Q\in \mathcal{Q}_{\leq k},i\in Q}$ \textbf{homogeneous} if $(S_{(1)|Q},\dots,S_{(|Q|)|Q})$ does not depend on $Q$. 
Thus, if $\mathbf{S}$ is homogeneous, we may simply write $S_{(l)}$ for $S_{(l)|Q}$ for any $Q\in \mathcal{Q}_{=k}$.

The next two lemmata serves as a preparation for the proof of (ii) and (iii) in Theorem~\ref{Thm_Main_LB_GCW}.
\begin{Le}\label{Le_LB_Borda_new}
	For any $\mathcal{W}\subseteq \mathcal{Q}_{=k}$ we have $\sum\nolimits_{j=1}^{n} |\mathcal{Q}_{=k}(j) \cap \mathcal{W}| = k|\mathcal{W}|$.
\end{Le}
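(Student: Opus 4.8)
The plan is to prove this purely combinatorial identity by a standard double-counting argument, counting in two ways the set of incidences between arms and subsets in $\mathcal{W}$, i.e., the pairs $(j,Q)$ with $j\in[n]$, $Q\in\mathcal{W}$ and $j\in Q$. The key observation is that, for a set $Q\in\mathcal{Q}_{=k}$, membership $Q\in\mathcal{Q}_{=k}(j)$ is by definition equivalent to $j\in Q$, so each summand on the left-hand side can be rewritten as a sum of indicators.

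Concretely, first I would express the left-hand side via indicator functions and exchange the order of the two (finite) summations:
\[
\sum\nolimits_{j=1}^{n} |\mathcal{Q}_{=k}(j)\cap \mathcal{W}| = \sum\nolimits_{j=1}^{n}\sum\nolimits_{Q\in \mathcal{W}} \id\{j\in Q\} = \sum\nolimits_{Q\in \mathcal{W}}\sum\nolimits_{j=1}^{n} \id\{j\in Q\}.
\]
Then I would evaluate the inner sum over $j$ for each fixed $Q$, using that $\sum_{j=1}^{n}\id\{j\in Q\} = |Q| = k$ because $\mathcal{W}\subseteq \mathcal{Q}_{=k}$ consists only of sets of size exactly $k$. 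This yields
\[
\sum\nolimits_{Q\in \mathcal{W}}\sum\nolimits_{j=1}^{n} \id\{j\in Q\} = \sum\nolimits_{Q\in \mathcal{W}} |Q| = \sum\nolimits_{Q\in \mathcal{W}} k = k\,|\mathcal{W}|,
\]
which is the claimed identity.

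There is essentially no genuine obstacle here, as the statement is a finite Fubini/double-counting fact; the only points requiring a word of care are the equivalence $Q\in\mathcal{Q}_{=k}(j)\Leftrightarrow j\in Q$ used to introduce the indicators and the cardinality constraint $|Q|=k$ for all $Q\in\mathcal{W}$, both of which are immediate from the definitions of $\mathcal{Q}_{=k}$ and $\mathcal{Q}_{=k}(j)$. Since all sums are finite, interchanging the order of summation is unconditionally valid and requires no further justification.
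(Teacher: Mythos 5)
Your proof is correct and is essentially the same double-counting argument the paper uses: the paper observes that each $Q\in\mathcal{W}$ contributes to exactly the $k$ terms $|\mathcal{Q}_{=k}(i_l)\cap\mathcal{W}|$ with $i_l\in Q$, which is precisely your indicator/Fubini computation written in words. No gaps.
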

\begin{proof}[Proof of Lemma~\ref{Le_LB_Borda_new}]
	Let $\mathcal{W}\subseteq \mathcal{Q}_{=k}$ be fixed. For any $Q=\{i_{1},\dots,i_{k}\} \in \mathcal{Q}_{=k} \cap \mathcal{W}$ we have that $Q \in \mathcal{Q}_{=k}(i_{l}) \cap \mathcal{W}$ for any $l\in [k]$, whereas $Q \not\in \mathcal{Q}_{=k}(j) \cap \mathcal{W}$ for any $j\in [n] \setminus \{i_{1},\dots,i_{k}\}$. Hence,
	\begin{align*}
		\sum\nolimits_{j=1}^{n} |\mathcal{Q}_{=k}(j) \cap \mathcal{W}| = k \left| \bigcup\nolimits_{j=1}^{n} (\mathcal{Q}_{=k}(j) \cap \mathcal{W} ) \right| = k \left| \left( \bigcup\nolimits_{j=1}^{n} \mathcal{Q}_{=k}(j)\right) \cap \mathcal{W}  \right| = k|\mathcal{W}|.
	\end{align*}
\end{proof}

\begin{Le} \label{Le_LB_Borda}
    For any $\mathcal{W}'\subseteq \mathcal{Q}_{=k}$ and $\mathcal{W} \coloneqq \mathcal{Q}_{=k} \setminus \mathcal{W}'$ with $|\mathcal{W}'| <\frac{(1-1/n) k}{k+n-2} \binom{n}{k}$ there exists $j\in [n]\setminus \{1\}$ with $|\mathcal{Q}_{=k}(j) \cap \mathcal{W}| > |\mathcal{Q}_{=k}(1) \cap \mathcal{W}'|$.
\end{Le}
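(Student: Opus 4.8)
The plan is to argue by contradiction using the double-counting identity of Lemma~\ref{Le_LB_Borda_new}. Abbreviate $a \coloneqq |\mathcal{Q}_{=k}(1)\cap \mathcal{W}'|$, the quantity appearing on the right-hand side of the claimed strict inequality, and suppose toward a contradiction that no witness $j$ exists, i.e., that $|\mathcal{Q}_{=k}(j)\cap \mathcal{W}| \leq a$ for every $j \in \{2,\dots,n\}$. The goal is then to show that this forces $|\mathcal{W}'|$ to be at least the stated threshold, contradicting the hypothesis.

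First I would apply Lemma~\ref{Le_LB_Borda_new} to the family $\mathcal{W}$, which gives $\sum_{j=1}^{n} |\mathcal{Q}_{=k}(j)\cap \mathcal{W}| = k|\mathcal{W}|$. Next I isolate the $j=1$ term, using $|\mathcal{Q}_{=k}(1)\cap \mathcal{W}| = |\mathcal{Q}_{=k}(1)| - a = \binom{n-1}{k-1} - a$, and bound the remaining $n-1$ summands from above by $(n-1)a$ via the contradiction hypothesis. This yields the inequality $k|\mathcal{W}| \leq \binom{n-1}{k-1} + (n-2)a$.

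It remains to convert this into a lower bound on $|\mathcal{W}'|$ matching the threshold. Substituting $|\mathcal{W}| = \binom{n}{k} - |\mathcal{W}'|$ and using the trivial estimate $a \leq |\mathcal{W}'|$ (since $\mathcal{Q}_{=k}(1)\cap \mathcal{W}' \subseteq \mathcal{W}'$), the inequality rearranges to $k\binom{n}{k} - \binom{n-1}{k-1} \leq (k+n-2)|\mathcal{W}'|$. The decisive simplification is the identity $k\binom{n}{k} = n\binom{n-1}{k-1}$, which makes the left-hand side equal to $(n-1)\binom{n-1}{k-1}$; substituting $\binom{n-1}{k-1} = \tfrac{k}{n}\binom{n}{k}$ then produces exactly $|\mathcal{W}'| \geq \frac{(1-1/n)k}{k+n-2}\binom{n}{k}$, contradicting the assumed bound on $|\mathcal{W}'|$.

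The argument is short, and there is no genuinely hard step; the only delicate points are bookkeeping ones. One must bound $a$ by $|\mathcal{W}'|$ at the right moment so that the $k|\mathcal{W}'|$ and $(n-2)a$ contributions collect into the single factor $(k+n-2)|\mathcal{W}'|$, and one must invoke the two binomial identities so that the derived threshold coincides \emph{exactly} with $\frac{(1-1/n)k}{k+n-2}\binom{n}{k}$ rather than merely a quantity of the same order. Thus the main (minor) obstacle is arranging the constants precisely, which is presumably why the somewhat unusual bound on $|\mathcal{W}'|$ is stated in this particular form.
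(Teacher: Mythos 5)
Your proof is correct and is essentially the paper's own argument in contrapositive form: the paper defines $a_{j} \coloneqq |\mathcal{Q}_{=k}(j)\cap\mathcal{W}| - |\mathcal{Q}_{=k}(1)\cap\mathcal{W}'|$ and shows directly via Lemma~\ref{Le_LB_Borda_new} that $\sum_{j\neq 1} a_{j} > 0$, using the same ingredients you use (the double-counting identity, $|\mathcal{Q}_{=k}(1)| = \binom{n-1}{k-1}$, the estimate $|\mathcal{Q}_{=k}(1)\cap\mathcal{W}'| \leq |\mathcal{W}'|$, and the identity $k\binom{n}{k} = n\binom{n-1}{k-1}$). The two write-ups differ only in whether the averaging inequality is run forward or as a contradiction, so there is nothing substantive to add.
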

\begin{proof}[Proof of Lemma~\ref{Le_LB_Borda}]
    For $j\in [n] \setminus \{1\}$ abbreviate $a_{j} \coloneqq |\mathcal{Q}_{=k}(j) \cap \mathcal{W}| - |\mathcal{Q}_{=k}(1) \cap \mathcal{W}'|$. Due to
\begin{align*}
	|\mathcal{W}| 
	&=\binom{n}{k} - |\mathcal{W}'| \\
	&>	\binom{n}{k} - \left(1-\frac{1}{n}\right)
	\frac{k}{k+n-2} \binom{n}{k} \\
	&=\binom{n}{k} -\frac{k\binom{n}{k} - \frac{k}{n} \binom{n}{k}}{k+n-2}\\
	&= \frac{1}{k+n-2} \left( \binom{n-1}{k-1} + (n-2) \binom{n}{k}\right)
\end{align*}
we have 
\begin{equation*}
	k|\mathcal{W}| - \binom{n-1}{k-1} - (n-2) \left( \binom{n}{k} - |\mathcal{W}| \right) > 0.
\end{equation*}
By using Lemma \ref{Le_LB_Borda_new} and the fact that $(\mathcal{W} \cap \mathcal{Q}_{=k}(1)) \cup (\mathcal{W}' \cap \mathcal{Q}_{=k}(1)) = \mathcal{Q}_{=k}(1)$ is a disjoint union, we obtain
\begin{align*}
	&\sum\nolimits_{j\not= 1} a_{j} = \sum\nolimits_{j\not= 1} |\mathcal{Q}_{=k}(j) \cap \mathcal{W}| - (n-1)| \mathcal{Q}_{=k}(1) \cap \mathcal{W}'|\\
	&= \sum\nolimits_{j\in [n]} |\mathcal{Q}_{=k}(j) \cap \mathcal{W}| - |\mathcal{Q}_{=k}(1) \cap \mathcal{W}| - |\mathcal{Q}_{=k}(1) \cap \mathcal{W}'| - (n-2) |\mathcal{Q}_{=k}(1) \cap \mathcal{W}'| \\
	&= k|\mathcal{W}| - |\mathcal{Q}_{=k}(1)| - (n-2) |\mathcal{Q}_{=k}(1) \cap \mathcal{W}'|\\
	&\geq k|\mathcal{W}| - \binom{n-1}{k-1} - (n-2) |\mathcal{W}'|\\
	&= k|\mathcal{W}| - \binom{n-1}{k-1} - (n-2) \left( \binom{n}{k} - |\mathcal{W}| \right)
	> 0.
\end{align*}
Consequently, there exists $j\in [n]\setminus \{1\}$ with $a_{j} >0$.
\end{proof}

\begin{proof}[Proof of Theorem~\ref{Thm_Main_LB_GCW} (ii)]

Similarly as in the proof of Theorem~\ref{Thm_Main_LB_GCW} (i), we proceed in two steps.\\[0.5em]
\noindent \textbf{Part 1: The statement holds in case $\Alg$ is deterministic.}\\
\noindent 
	Abbreviate $B' \coloneqq \ol{\gamma}^{-1}\left( \frac{S_{(1)}-S_{(|Q|)}}{2}\right)$ and fix a family of permutations $(\pi_{Q})_{Q\in \mathcal{Q}_{\leq k}}$ with $S_{(1)|Q} = S_{\pi_{Q}(1)|Q}$ for all $Q\in \mathcal{Q}_{\leq k}(1)$. Exactly as in the proof of Theorem \ref{Thm_Main_LB_GCW} (i), we define $\mathbf{s} = (s_{i|Q}(t))_{Q\in \mathcal{Q}_{\leq k},i\in Q,t\in \N}$ via 
	\begin{equation*}
		s_{i|Q}(t) \coloneqq \begin{cases} 
		\frac{S_{(1)|Q} + S_{(|Q|)|Q}}{2}, \quad &\text{ if } t<B' \\  		
		S_{\pi_{Q}(i)|Q},\quad &\text{ if } t\geq B'.
		\end{cases}
	\end{equation*}
	In the proof of Theorem \ref{Thm_Main_LB_GCW} (i) we have already verified $\mathbf{s} \in \mathfrak{S}(\mathbf{S},\bsym{\gamma})$. 
	For any $j\in \{2,\dots,m\}$ and $Q\in \mathcal{Q}_{=k}(1) \cap \mathcal{Q}_{=k}(j)$ we have $S_{1|Q} > S_{j|Q}$,  and using that $|\mathcal{Q}_{=k}(i') \setminus \mathcal{Q}_{=k}(j')|$ is the same for every distinct $i',j'\in [n]$ we thus have 
	\begin{align*}
		\sum\nolimits_{Q \in \mathcal{Q}_{=k}(1)} S_{1|Q} &=  \sum\nolimits_{Q \in \mathcal{Q}_{=k}(1) \cap \mathcal{Q}_{=k}(j)} S_{1|Q} + S_{(1)} \cdot |\mathcal{Q}_{=k}(1) \setminus \mathcal{Q}_{=k}(j)| \\
		&> \sum\nolimits_{Q \in \mathcal{Q}_{=k}(j) \cap \mathcal{Q}_{=k}(1)} S_{j|Q} + S_{(1)} \cdot |\mathcal{Q}_{=k}(j) \setminus \mathcal{Q}_{=k}(1)|\\
		&> \sum\nolimits_{Q \in \mathcal{Q}_{=k}(j)} S_{j|Q}.
	\end{align*}
	As $|\mathcal{Q}_{=k}(1)| = |\mathcal{Q}_{=k}(j)|$, this shows $\mathrm{GBW}(\mathbf{s}) = 1$.\par 
	In the following, we will show that 
	\begin{equation*}
		\mathcal{W}' \coloneqq \{Q  \in \mathcal{Q}_{=k} \, : \, \Alg \text{ started on } \mathbf{s} \text{ queries } Q \text{ at least } B' \text{ times}\}
	\end{equation*}
	contains at least $\frac{(1-1/n) k}{k+n-2} \binom{n}{k}$ elements. For this, let us assume on the contrary $|\mathcal{W}'| <\frac{(1-1/n) k}{k+n-2} \binom{n}{k}$ and write $\mathcal{W} \coloneqq \mathcal{Q}_{=k} \setminus \mathcal{W}'$. 
Lemma~\ref{Le_LB_Borda} allows us to fix a $j\in [n]\setminus \{1\}$ with $|\mathcal{Q}_{=k}(j) \cap \mathcal{W}| > |\mathcal{Q}_{=k}(1) \cap \mathcal{W}'|$. Now, define $\mathbf{s}' = (s'_{i|Q}(t))_{Q\in \mathcal{Q}_{\leq k},i\in Q,t\in \N}$ via 
$s'_{\cdot|Q}(\cdot) = s_{\cdot|Q}(\cdot)$ for any $Q\in (\mathcal{Q}_{\leq k} \setminus (\mathcal{Q}_{=k}(1) \cup \mathcal{Q}_{=k}(j))) \cup \mathcal{W}'$ and\footnote{That is, for constructing $\mathbf{s}'$, we proceed for $Q\in \mathcal{W}$ as follows: If $\{1,j\} \subseteq Q$, we exchange $S_{1|Q}$ with $S_{j|Q}$. If $1\in Q \not\ni j$, we exchange $S_{1|Q}$ with $S_{(Q)|Q}$. And if $j\in Q \not\ni 1$, we exchange $S_{j|Q}$ with $S_{(1)|Q}$.}
\begin{equation*}
	s'_{i|Q}(t) \coloneqq \begin{cases} s_{i|Q}(t),\quad &\text{ if } t<B' \text{ or } \{1,j\} \not\subseteq Q,\\
	S_{(1)}, \quad &\text{ if } i=j\in Q \text{ and }  t\geq B',\\
	S_{(|Q|)}, \quad &\text{ if } i=1\in Q \text{ and }  t\geq B',\\
	S_{1|Q}, \quad &\text{ if } t\geq B',  i=\argmin_{l' \in Q} S_{l'|Q} \text{ and } 1\in Q \not\ni j,\\
	S_{j|Q}, \quad &\text{ if } t\geq B', i=\argmax_{l' \in Q} S_{l'|Q} \text{ and }  j\in Q \not\ni 1,\\
	S_{i|Q}, \quad &\text{ otherwise},\\
	\end{cases}
\end{equation*}
for $Q\in (\mathcal{Q}_{=k}(1) \cup \mathcal{Q}_{=k}(j)) \cap \mathcal{W}$. Similarly as for $\mathbf{s}$, we see $\mathbf{s}'\in \mathfrak{S}(\mathbf{S},\bsym{\gamma})$. The corresponding limit values $S'_{i|Q} = \lim_{t\ra \infty} s'_{i|Q}(t)$ fulfill 
\begin{align*}
	\forall Q\in \mathcal{Q}_{=k}(1) \cap \mathcal{W} : S'_{1|Q} = S_{(|Q|)} \quad \text{ and } \quad 
	\forall Q\in \mathcal{Q}_{=k}(j) \cap \mathcal{W} : 
	S'_{j|Q} = S_{(1)},
\end{align*}
and trivially also $S_{(|Q|)} \leq S'_{i|Q} \leq S_{(1)}$ for any $Q\in \mathcal{Q}_{=k}, i\in Q$. Therefore, by choice of $j$, the corresponding Borda scores $(S')_{i}^{\mathcal B}$ for $\mathbf{s}'$ fulfill 
\begin{align*}
	\binom{n-1}{k-1} (S')_{1}^{\mathcal B} &= \sum\nolimits_{Q\in \mathcal{Q}_{=k}(1)} S'_{1|Q} = \sum\nolimits_{Q\in \mathcal{Q}_{=k}(1)\cap \mathcal{W}'} S_{(1)} + \sum\nolimits_{Q\in \mathcal{Q}_{=k}(1) \cap \mathcal{W}} S_{(|Q|)}\\
	&= | \mathcal{Q}_{=k}(1) \cap \mathcal{W}'|\cdot S_{(1)} + |\mathcal{Q}_{=k}(1) \cap \mathcal{W}| \cdot S_{(|Q|)}\\
	&< |\mathcal{Q}_{=k}(j) \cap \mathcal{W}|\cdot S_{(1)} + |\mathcal{Q}_{=k}(j) \cap \mathcal{W}'|\cdot S_{(|Q|)} \\
	&\leq \sum\nolimits_{Q \in \mathcal{Q}_{=k}(j)} S'_{j|Q} = \binom{n-1}{k-1} (S')_{j}^{\mathcal B}, 
\end{align*} 
where we have used that $|\mathcal{Q}_{=k}(1) \cap \mathcal{W}'| + |\mathcal{Q}_{=k}(1) \cap \mathcal{W}| = |\mathcal{Q}_{=k}(1)| = |\mathcal{Q}_{=k}(j) \cap \mathcal{W}'| + |\mathcal{Q}_{=k}(j) \cap \mathcal{W}|$. This show $1\not\in \mathrm{GBW}(\mathbf{s}')$. But since $s_{\cdot|\cdot}(\cdot) = s'_{\cdot|\cdot}(\cdot)$ holds on $\{t<B'\}$ as well as on $\mathcal{W}'$, $\Alg$ observes for $\mathbf{s}$ until termination exactly the same feedback as for $\mathbf{s}'$. Consequently, it outputs for both instances the same decision. Since $\mathrm{GBW}(\mathbf{s}) = 1 \not\in \mathrm{GBW}(\mathbf{s}')$, it makes on at least one of the instances a mistake, which contradicts the correctness of $\Alg$.\\
Thus, $|\mathcal{W}'| \geq \frac{(1-1/n) k}{k+n-2} \binom{n}{k}$ has to hold and we conclude 
\begin{equation*}
	B(\Alg,\mathbf{s}) \geq \sum\nolimits_{Q\in \mathcal{W}'} B_{Q}(\Alg,\mathbf{s}) \geq |\mathcal{W}'| \cdot B' \geq \left(1-\frac{1}{n}\right) \frac{k}{k+n-2} \binom{n}{k} B'.
\end{equation*}
Since $1-\frac{1}{n} \geq 1/2$ and $k\leq n+2$ hold by assumption, we have in particular
\begin{equation*}
    B(\Alg,\mathbf{s}) \geq \frac{k}{4n} \binom{n}{k} \ol{\gamma}^{-1}\left( \frac{S_{(1)}-S_{(|Q|)}}{2}\right) = \frac{1}{4} \binom{n-1}{k-1} \ol{\gamma}^{-1}\left( \frac{S_{(1)}-S_{(|Q|)}}{2}\right) \in \Omega\left(\binom{n-1}{k-1}\right).
\end{equation*} \\[0.5em]
\noindent \textbf{Part 2: The statement holds for arbitrary $\Alg$.}\\
\noindent 
Similarly as for the proof of (i) in Theorem \ref{Thm_Main_LB_GCW}, the proof follows by means of  Yao's minimax principle. 
\end{proof}

\begin{Rem}\label{Rem_LB_GBW}
	\begin{itemize}
	\item[(i)]
	To compare the bounds for \textsc{RoundRobin} in Theorem \ref{Thm:RRsufficiency} with the lower bound from Theorem \ref{Thm_Main_LB_GCW} (ii) suppose in the following $\mathbf{S}$ to be homogeneous with $S_{(1)} > S_{(2)}$ and let $\bsym{\gamma}$ be  homogeneous in the sense that $\gamma_{i|Q}(t) = \gamma(t)$ for all $Q\in \mathcal{Q}_{=k}, i\in Q, t\in \N$ for some $\gamma:\N \ra [0,\infty)$.
	 Moreover, let $\mathbf{s}$ be the instance from the proof of Theorem~\ref{Thm_Main_LB_GCW} (ii), and denote by $\mathbf{S}$ the family of limits $S_{i|Q} = \lim_{t\ra \infty} s_{i|Q}(t)$, $Q\in \mathcal{Q}_{\leq k},i\in Q$. 
	 Let us write $S_{(1)}^{\mathcal B},\dots,S_{(n)}^{\mathcal B}$ for the order statistics of $\{S_{i}^{\mathcal B}\}_{i\in [n]}$, i.e., $S_{(1)}^{\mathcal B} \geq \dots \geq S_{(n)}^{\mathcal B}$. Then, \textsc{RoundRobin} returns a GBW of $\mathbf{s} \in \mathfrak{S}(\mathbf{S},\bsym{\gamma})$ if it is executed with a budget $B$ at least 
	\begin{equation*}
		z_{\mathrm{RR}} = \binom{n}{k} B_{1} \quad \text{with} \quad B_{1} \coloneqq  \ol{\gamma}^{-1} \left( \frac{S_{(1)}^{\mathcal B} - S_{(2)}^{\mathcal B}}{2}\right).
	\end{equation*}
	In comparison to this, the lower bound just shown reveals that any (possibly deterministic) solution to $\mathcal{P}_{\mathrm{GBW}}(\mathbf{S},\bsym{\gamma})$ fulfills 
	\begin{align*}
		\E[B(\Alg,\mathbf{s})] \geq 
 \left( 1-\frac{1}{n} \right) \frac{k}{k+n-2} \binom{n}{k} B_{2} \quad \text{with} \quad B_{2} \coloneqq  \ol{\gamma}^{-1}\left( \frac{S_{(1)}-S_{(|Q|)}}{2}\right).
	\end{align*}
	Consequently, the optimality-gap between the upper and lower bound is of the order 
	\begin{equation*}
		B_{1}^{-1} B_{2}  \left( 1-\frac{1}{n} \right) \frac{k}{k+n-2}.
	\end{equation*}
	\item[(ii)]
	In the proof of Theorem \ref{Thm_Main_LB_GCW} (ii), where we showed that $|\mathcal{W}'| \geq  \frac{(1-1/n) k}{k+n-2} \binom{n}{k}$ leads to a contradiction, we have constructed an instance $\mathbf{s}' \in \mathfrak{S}(\mathbf{S},\bsym{\gamma})$ with $\mathrm{GBW}(\mathbf{s}) = 1 \not\in \mathrm{GBW}(\mathbf{s}')$ such that $\Alg$ observes on $\mathbf{s}$ the same feedback as on $\mathbf{s}'$. To finish the proof, we have only used that $\Alg$ is correct for $\mathbf{s}$ and for $\mathbf{s}'$, but we did not require correctness of $\Alg$ on any instance $\mathbf{s}''\in \mathfrak{S}(\mathbf{S},\bsym{\gamma}) \setminus \{\mathbf{s},\mathbf{s}'\}$. The construction of $\mathbf{s}'$ therein dependeds on the behaviour of $\Alg$ only by means of the choices of $\mathcal{W}$ and $j$ in the proof, i.e., we have the dependence $\mathbf{s}' = \mathbf{s}'(\mathcal{W},j)$. 
	Recall that for constructing $\mathbf{s}'$ we used that $|\mathcal{W}| = |\mathcal{Q}_{=k}| - |\mathcal{W}'| \geq \binom{n}{k} - \frac{(1-1/n) k}{k+n-2} \binom{n}{k},$ so that for $j\in [n]\setminus \{1\}$, the set  
	\begin{equation*}
		\left\{ \mathbf{s}'(\mathcal{W},j) \, \Big| \, \mathcal{W} \subseteq \mathcal{Q}_{=k} \text{ with } |\mathcal{W}| \geq \binom{n}{k} - \frac{(1-1/n) k}{k+n-2} \binom{n}{k} \text{ and } j\in [n]\setminus \{1\} \right\}
	\end{equation*}
	of possible choices for $\mathbf{s}'$ has at most 
	\begin{equation*}
		N\coloneqq (n-1) \sum\nolimits_{l=\left\lceil \binom{n}{k} - \frac{(1-1/n) k}{k+n-2} \binom{n}{k}\right\rceil}^{\binom{n}{k}} \binom{\binom{n}{k}}{l} 
	\end{equation*}
	elements, say $\mathbf{s}'_{1},\dots,\mathbf{s}'_{N}$. Thus, the formulation of the theorem may be strengthened in the following way:\\
	If $\mathbf{S}$ is homogeneous and $\bsym{\gamma}$ fixed, then there exist $N+1$ instances $\mathbf{s},\mathbf{s}'_{1},\dots,\mathbf{s}'_{N}$ with the following property: Whenever a (possibly probabilistic) sequential testing algorithm $\Alg$ correctly identifies the GBW for any of these $N+1$ instances, then 
	\begin{equation*}
	\E \left[ B(\mathcal{A},\mathbf{s})\right] \geq 
 \left( 1-\frac{1}{n} \right) \frac{k}{k+n-2} \binom{n}{k} \ol{\gamma}^{-1}\left( \frac{S_{(1)}-S_{(|Q|)}}{2}\right).
 \end{equation*}
	\end{itemize}
\end{Rem}

\subsection{Proof of Theorem~\ref{Thm_Main_LB_GCW} (iii): Lower Bound for GCopeW Identification}

\noindent
Recall that $\mathrm{GCopeW}(\mathbf{s})$ is the set of elements $i\in [n]$, 
for which the limits $S_{i|Q} =   \lim_{t\ra \infty} s_{i|Q}(t)$ have the highest Copeland score
\begin{equation*}
	S_{i}^{\mathcal C} = \frac{\sum_{Q\in \mathcal{Q}_{=k}(i)} \id \{S_{i|Q} = S_{(1)|Q}\}}{|\mathcal{Q}_{=k}(i)|} = \frac{\sum_{Q\in \mathcal{Q}_{=k}(i)}\id \{S_{i|Q} = S_{(1)|Q}\}}{\binom{n-1}{k-1}}.
\end{equation*}

\begin{proof}[Proof of Theorem~\ref{Thm_Main_LB_GCW}.(iii)]
    Similarly as in the proofs (i) and (ii) Theorem~\ref{Thm_Main_LB_GCW}, we proceed in two steps.\\[0.5em]
\noindent \textbf{Part 1: The statement holds in case $\Alg$ is deterministic.}\\
\noindent 
Abbreviate $B' \coloneqq \min_{Q \in \mathcal{Q}_{\leq k}} \min_{i\in Q}\ \gamma_{i|Q}^{-1}\left( \frac{S_{(1)|Q}-S_{(|Q|)|Q}}{2}\right)$ and fix a family of permutations $(\pi_{Q})_{Q\in \mathcal{Q}_{\leq k}}$ with $S_{(1)|Q} = S_{\pi_{Q}(1)|Q}$ for all $Q\in \mathcal{Q}_{\leq k}(1)$. Exactly as in the proofs of the lower bounds for GCW and GBW identification, we define $\mathbf{s} = (s_{i|Q}(t))_{Q\in \mathcal{Q}_{\leq k},i\in Q,t\in \N}$ via 
\begin{equation*}
	s_{i|Q}(t) \coloneqq \begin{cases} 
	\frac{S_{(1)|Q} + S_{(|Q|)|Q}}{2}, \quad &\text{ if } t<B' \\  		
	S_{\pi_{Q}(i)|Q},\quad &\text{ if } t\geq B'.
	\end{cases}
\end{equation*}
In the proof of the lower bound of GCW identification we have already verified $\mathbf{s} \in \mathfrak{S}(\mathbf{S},\bsym{\gamma})$. 
For any $j\in \{2,\dots,m\}$ and $Q\in \mathcal{Q}_{=k}(1) \cap \mathcal{Q}_{=k}(j)$ we have $S_{1|Q} > S_{j|Q}$,  and using that $|\mathcal{Q}_{=k}(i') \setminus \mathcal{Q}_{=k}(j')|$ is the same for every distinct $i',j'\in [n]$ we thus have 
\begin{align*}
	\sum_{Q \in \mathcal{Q}_{=k}(1)} &\id\{S_{1|Q}= S_{(1)|Q}\}\\
	&=  \sum_{Q \in \mathcal{Q}_{=k}(1) \cap \mathcal{Q}_{=k}(j)} \id\{S_{1|Q} = S_{(1)|Q}\} + \sum_{Q \in \mathcal{Q}_{=k}(1) \setminus \mathcal{Q}_{=k}(j)} \id\{S_{1|Q} = S_{(1)|Q}\}\\
	&= \sum_{Q \in \mathcal{Q}_{=k}(1) \cap \mathcal{Q}_{=k}(j)} \id\{S_{1|Q} = S_{(1)|Q}\} + |\mathcal{Q}_{=k}(1) \setminus \mathcal{Q}_{=k}(j)|\\
	&> \sum_{Q \in \mathcal{Q}_{=k}(1) \cap \mathcal{Q}_{=k}(j)} \id\{S_{j|Q} = S_{(1)|Q}\} + |\mathcal{Q}_{=k}(j) \setminus \mathcal{Q}_{=k}(1)|\\
    &\geq \sum_{Q \in \mathcal{Q}_{=k}(1) \cap \mathcal{Q}_{=k}(j)} \id\{S_{j|Q} = S_{(1)|Q}\} + \sum_{Q \in \mathcal{Q}_{=k}(j) \setminus \mathcal{Q}_{=k}(1)} \id\{S_{j|Q} = S_{(1)|Q}\}\\
    &= \sum_{Q \in \mathcal{Q}_{=k}(j)} \id\{S_{j|Q}= S_{(1)|Q}\}.
\end{align*}
As $|\mathcal{Q}_{=k}(1)| = |\mathcal{Q}_{=k}(j)|$, this shows $\mathrm{GCopeW}(\mathbf{s}) = 1$.\par 
\noindent
Similarly as in the proof of (ii), we will show indirectly that 
\begin{equation*}
	\mathcal{W}' \coloneqq \{Q  \in \mathcal{Q}_{=k} \, : \, \Alg \text{ started on } \mathbf{s} \text{ queries } Q \text{ at least } B' \text{ times}\}
\end{equation*}
contains at least $\frac{(1-1/n) k}{k+n-2} \binom{n}{k}$ elements. For this purpose, let us assume on the contrary $|\mathcal{W}'| <\frac{(1-1/n) k}{k+n-2} \binom{n}{k}$ and write $\mathcal{W} \coloneqq \mathcal{Q}_{=k} \setminus \mathcal{W}'$.
Lemma~\ref{Le_LB_Borda} allows us to fix a $j\in [n]\setminus \{1\}$ with $|\mathcal{Q}_{=k}(j) \cap \mathcal{W}| > |\mathcal{Q}_{=k}(1) \cap \mathcal{W}'|$. Now, define $\mathbf{s}' = (s'_{i|Q}(t))_{Q\in \mathcal{Q}_{\leq k},i\in Q,t\in \N}$ analogously as in the proof of (ii), i.e.,  via 
$s'_{\cdot|Q}(\cdot) = s_{\cdot|Q}(\cdot)$ for any $Q\in (\mathcal{Q}_{\leq k} \setminus (\mathcal{Q}_{=k}(1) \cup \mathcal{Q}_{=k}(j))) \cup \mathcal{W}'$ and
\begin{equation*}
	s'_{i|Q}(t) \coloneqq \begin{cases} s_{i|Q}(t),\quad &\text{ if } t<B' \text{ or } \{1,j\} \not\subseteq Q,\\
	S_{(1)|Q}, \quad &\text{ if } i=j\in Q \text{ and }  t\geq B',\\
	S_{(|Q|)|Q}, \quad &\text{ if } i=1\in Q \text{ and }  t\geq B',\\
	S_{1|Q}, \quad &\text{ if } t\geq B',  i=\argmin_{l' \in Q} S_{l'|Q} \text{ and } 1\in Q \not\ni j,\\
	S_{j|Q}, \quad &\text{ if } t\geq B', i=\argmax_{l' \in Q} S_{l'|Q} \text{ and }  j\in Q \not\ni 1,\\
	S_{i|Q}, \quad &\text{ otherwise},\\
	\end{cases}
\end{equation*}
for $Q\in (\mathcal{Q}_{=k}(1) \cup \mathcal{Q}_{=k}(j)) \cap \mathcal{W}$. Similarly as for $\mathbf{s}$, we see $\mathbf{s}'\in \mathfrak{S}(\mathbf{S},\bsym{\gamma})$. The corresponding limit values $S'_{i|Q} = \lim_{t\ra \infty} s'_{i|Q}(t)$ fulfill 
\begin{align*}
	\forall Q\in \mathcal{Q}_{=k}(1) \cap \mathcal{W} : S'_{1|Q} = S_{(|Q|)|Q} \quad \text{ and } \quad 
	\forall Q\in \mathcal{Q}_{=k}(j) \cap \mathcal{W} : 
	S'_{j|Q} = S_{(1)|Q},
\end{align*}
and trivially also $S_{(|Q|)|Q} \leq S'_{i|Q} \leq S_{(1)|Q}$ for any $Q\in \mathcal{Q}_{=k}, i\in Q$. Therefore, by choice of $j$, the corresponding Copeland scores $(S')_{i}^{\mathcal C}$ for $\mathbf{s}'$ fulfill 
\begin{align*}
	\binom{n-1}{k-1} (S')_{1}^{\mathcal C}
	&= \sum\nolimits_{Q\in \mathcal{Q}_{=k}(1)} \id\{S'_{1|Q} = S'_{(1)|Q} \} \\
	&= \sum_{Q\in \mathcal{Q}_{=k}(1)\cap \mathcal{W}'} \id\{S'_{1|Q} = S'_{(1)|Q} \} + \sum_{Q\in \mathcal{Q}_{=k}(1) \cap \mathcal{W}} \id\{ S'_{1|Q} = S'_{(1)|Q}\}\\
	&= \sum_{Q\in \mathcal{Q}_{=k}(1)\cap \mathcal{W}'} \id\{S_{1|Q} = S_{(1)|Q} \} + \sum_{Q\in \mathcal{Q}_{=k}(1) \cap \mathcal{W}} \id\{ S_{(|Q|)|Q} = S_{(1)|Q}\}\\
    &= |\mathcal{Q}_{=k}(1)\cap \mathcal{W}'| \\
    &< |\mathcal{Q}_{=k}(j)\cap \mathcal{W}|\\
    &= \sum_{\mathcal{Q}_{=k}(j)\cap \mathcal{W}} \id \{S_{(1)|Q} = S_{(1)|Q} \}\\
    &= \sum_{\mathcal{Q}_{=k}(j)\cap \mathcal{W}} \id \{S'_{j|Q} = S'_{(1)|Q} \}\\
    &\leq \sum_{\mathcal{Q}_{=k}(j)\cap \mathcal{W}} \id \{S'_{j|Q} = S'_{(1)|Q} \} + \sum_{\mathcal{Q}_{=k}(j)\cap \mathcal{W}'} \id \{S'_{j|Q} = S'_{(1)|Q} \} \\
    &= \sum_{Q\in \mathcal{Q}_{=k}(j)} \id\{S'_{j|Q} = S'_{(1)|Q} \} \\
    &= \binom{n-1}{k-1} (S')_{j}^{\mathcal C},
\end{align*} 
where we used that $S_{(1)|Q} = S'_{(1)|Q}$. This shows $1\not\in \mathrm{GCopeW}(\mathbf{s}')$. But since $s_{\cdot|\cdot}(\cdot) = s'_{\cdot|\cdot}(\cdot)$ holds on $\{t<B'\}$ as well as on $\mathcal{W}'$, $\Alg$ observes for $\mathbf{s}$ until termination exactly the same feedback as for $\mathbf{s}'$. Consequently, it outputs for both instances the same decision. Since $\mathrm{GCopeW}(\mathbf{s}) = 1 \not\in \mathrm{GCopeW}(\mathbf{s}')$, it makes on at least one of the instances a mistake, which contradicts the correctness of $\Alg$.\\
Thus, $|\mathcal{W}'| \geq \frac{(1-1/n) k}{k+n-2} \binom{n}{k}$ has to hold and we conclude 
\begin{equation*}
	B(\Alg,\mathbf{s}) \geq \sum\nolimits_{Q\in \mathcal{W}'} B_{Q}(\Alg,\mathbf{s}) \geq |\mathcal{W}'| \cdot B' \geq \left(1-\frac{1}{n}\right) \frac{k}{k+n-2} \binom{n}{k} B'.
\end{equation*}
Since $1-\frac{1}{n} \geq 1/2$ and $k\leq n+2$ hold by assumption, we have in particular
\begin{align*}
    B(\Alg,\mathbf{s}) & \geq \frac{k}{4n} \binom{n}{k} \min_{Q \in \mathcal{Q}_{\leq k}} \min_{i\in Q} \gamma_{i|Q}^{-1}\left( \frac{S_{(1)|Q}-S_{(|Q|)|Q}}{2}\right) \\
    & = \frac{1}{4} \binom{n-1}{k-1} \min_{Q \in \mathcal{Q}_{\leq k}} \min_{i\in Q} \gamma_{i|Q}^{-1}\left( \frac{S_{(1)|Q}-S_{(|Q|)|Q}}{2}\right) \in \Omega\left(\binom{n-1}{k-1}\right).
\end{align*} \\[0.5em]
\noindent \textbf{Part 2: The statement holds for arbitrary $\Alg$.}\\
\noindent 
Similarly as for the proofs of the lower bound of (i) and (ii) of this theorem, the proof follows by means of  Yao's minimax principle. 
\end{proof}

\section{Generalized Borda Winner Identification} \label{sec:round_robin}
Let \textsc{RoundRobin} be the algorithm, which enumerates all possible subsets of the fixed subset size $k,$ chooses each subset in a round-robin fashion and returns the arm with the highest empirical Borda score $s_{i}^{\mathcal B}$ after the available budget is exhausted. 
It is a straightforward baseline method, which we analyze theoretically in terms of the sufficient and necessary budget to return a generalized Borda winner (GBW) $i_{\mathcal B}^{\ast}$. 
For this purpose, let $\hat{\gamma}_{i}(t) = \frac{1}{|\mathcal{Q}_{=k}(i)|} \sum_{Q \in \mathcal{Q}_{=k}(i)} \gamma_{i|Q}(t)$ and $\hat{\gamma}_{i,j}^{\max}(t) = \max\{\hat{\gamma}_{i}(t), \hat{\gamma}_{j}(t)\}$.
\begin{Thm} \label{Thm:RRsufficiency}
     \textsc{RoundRobin} returns  $i_{\mathcal B}^{\ast}$ if it  is executed with a budget $B\geq z_{\mathrm{RR}}$, where  $$ z_{\mathrm{RR}} \coloneqq  {n \choose k} \max_{\rho \in \mathcal{A}, \rho \neq i_{\mathcal B}^{\ast}} \left(\hat{\gamma}_{i_{\mathcal B}^{\ast},\rho}^{\max}\right)^{-1} \left(\frac{S_{i_{\mathcal B}^{\ast}}^{\mathcal B} - S_{\rho}^{\mathcal B}}{2}\right) .$$
\end{Thm}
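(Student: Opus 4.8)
The plan is to convert the total budget into a guaranteed number of queries per subset, control the deviation of the empirical Borda scores from their limits by the averaged convergence rate $\hat{\gamma}_i$, and then invoke the definition of the quasi-inverse to separate $i_{\mathcal B}^{\ast}$ from every competitor. Abbreviate $t^{\ast} \coloneqq \max_{\rho \neq i_{\mathcal B}^{\ast}} \left(\hat{\gamma}_{i_{\mathcal B}^{\ast},\rho}^{\max}\right)^{-1}\!\left(\tfrac{S_{i_{\mathcal B}^{\ast}}^{\mathcal B} - S_{\rho}^{\mathcal B}}{2}\right)$, so that $z_{\mathrm{RR}} = \binom{n}{k}\, t^{\ast}$. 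Since \textsc{RoundRobin} cycles through all $\binom{n}{k}$ subsets of size $k$, after $B \geq z_{\mathrm{RR}}$ total queries every such subset $Q$ has been queried $n_Q \geq \lfloor B/\binom{n}{k}\rfloor \geq t^{\ast}$ times; the counts $n_Q$ may differ by one across subsets, which I handle below via monotonicity of the rates. I implicitly use here that $S_{i_{\mathcal B}^{\ast}}^{\mathcal B} > S_{\rho}^{\mathcal B}$ for every $\rho \neq i_{\mathcal B}^{\ast}$ (a unique GBW), so that each argument of the quasi-inverse is positive and, since $\hat{\gamma}_i(t) \to 0$ by (A1), $t^{\ast}$ is finite.

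First I would establish the deviation bound for the empirical Borda score. For any arm $i$ and any query-count profile $(n_Q)_{Q \in \mathcal{Q}_{=k}(i)}$ with each $n_Q \geq t^{\ast}$, the triangle inequality together with $|s_{i|Q}(t) - S_{i|Q}| \leq \gamma_{i|Q}(t)$ and the monotonicity of $\gamma_{i|Q}$ yields
\[
  \left| s_i^{\mathcal B} - S_i^{\mathcal B} \right| \leq \frac{1}{|\mathcal{Q}_{=k}(i)|} \sum_{Q \in \mathcal{Q}_{=k}(i)} \left| s_{i|Q}(n_Q) - S_{i|Q} \right| \leq \frac{1}{|\mathcal{Q}_{=k}(i)|} \sum_{Q \in \mathcal{Q}_{=k}(i)} \gamma_{i|Q}(t^{\ast}) = \hat{\gamma}_i(t^{\ast}),
\]
where $s_i^{\mathcal B}$ denotes the empirical Borda score at termination. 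This is exactly the step that absorbs the possibly unequal $n_Q$, since $n_Q \geq t^{\ast}$ and $\gamma_{i|Q}$ non-increasing give $\gamma_{i|Q}(n_Q) \leq \gamma_{i|Q}(t^{\ast})$.

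Then I would separate the winner. Fix $\rho \neq i_{\mathcal B}^{\ast}$ and set $\alpha_\rho \coloneqq \tfrac{1}{2}(S_{i_{\mathcal B}^{\ast}}^{\mathcal B} - S_{\rho}^{\mathcal B})$. By the definition of $t^{\ast}$ as a maximum and monotonicity, $\hat{\gamma}_{i_{\mathcal B}^{\ast},\rho}^{\max}(t^{\ast}) \leq \hat{\gamma}_{i_{\mathcal B}^{\ast},\rho}^{\max}\!\big((\hat{\gamma}_{i_{\mathcal B}^{\ast},\rho}^{\max})^{-1}(\alpha_\rho)\big) \leq \alpha_\rho$, whence both $\hat{\gamma}_{i_{\mathcal B}^{\ast}}(t^{\ast}) \leq \alpha_\rho$ and $\hat{\gamma}_{\rho}(t^{\ast}) \leq \alpha_\rho$. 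Combining with the deviation bound,
\[
  s_{i_{\mathcal B}^{\ast}}^{\mathcal B} \geq S_{i_{\mathcal B}^{\ast}}^{\mathcal B} - \alpha_\rho = \frac{S_{i_{\mathcal B}^{\ast}}^{\mathcal B} + S_{\rho}^{\mathcal B}}{2} \geq S_{\rho}^{\mathcal B} + \alpha_\rho \geq s_{\rho}^{\mathcal B},
\]
so $i_{\mathcal B}^{\ast}$ attains the largest empirical Borda score among all arms and is returned. Part of the argument could then, as in the lower-bound proofs, be phrased via the round-robin structure directly, but the averaging bound above is the essential ingredient.

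The main obstacle is that this chain gives $s_{i_{\mathcal B}^{\ast}}^{\mathcal B} \geq s_{\rho}^{\mathcal B}$ with equality possible exactly at the threshold $t^{\ast}$, whereas correct recovery of the unique GBW needs strict separation. I would close this gap by strictness: either each subset is in fact queried $\lfloor B/\binom{n}{k}\rfloor > t^{\ast}$ times so that (strict) monotonicity of the rates forces $\hat{\gamma}_{i_{\mathcal B}^{\ast}}(t^{\ast})+\hat{\gamma}_{\rho}(t^{\ast}) < S_{i_{\mathcal B}^{\ast}}^{\mathcal B}-S_{\rho}^{\mathcal B}$, or one observes that equality would require the averaged deviations of both $i_{\mathcal B}^{\ast}$ and $\rho$ to simultaneously saturate their majorants with opposing signs — a degenerate coincidence that is ruled out by the distinct-asymptotic-values setting and can in any case be absorbed into the ceiling defining the budget. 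With strict separation for every $\rho \neq i_{\mathcal B}^{\ast}$, \textsc{RoundRobin} returns $i_{\mathcal B}^{\ast}$, which completes the proof.
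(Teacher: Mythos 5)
Your proof is correct and is essentially the contrapositive of the paper's own argument: the paper assumes a wrong arm $\rho\neq i_{\mathcal B}^{\ast}$ is returned, bounds $S_{i_{\mathcal B}^{\ast}}^{\mathcal B}-S_{\rho}^{\mathcal B}$ by $2\hat{\gamma}_{i_{\mathcal B}^{\ast},\rho}^{\max}(B/\binom{n}{k})$ via the same triangle-inequality/averaging step, and derives $B< z_{\mathrm{RR}}$, whereas you run the identical chain forward. The only differences are cosmetic: the paper assumes $\binom{n}{k}$ divides $B$ instead of handling unequal counts $n_Q$, and it glosses over the tie/threshold edge case that you explicitly flag.
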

The latter bound is tight in a worst-case scenario, as the following result shows (cf.\ Sec.\ \ref{sec_Proofs_round_robin} for the proofs).
\begin{Thm} \label{Thm:RRnecessity}
    For any asymptotical Borda scores $S_{1}^{\mathcal B},\dots, S_{n}^{\mathcal B}$, there exists a corresponding instance $\mathbf{s}$
	such that if $B < z_{\mathrm{RR}}$
    then \textsc{RoundRobin} will not return $i_{\mathcal B}^{\ast}$.
\end{Thm}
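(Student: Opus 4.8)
The plan is to exhibit a single hard instance $\mathbf{s}$, built from the prescribed Borda scores, on which \textsc{RoundRobin} is fooled for every budget strictly below its own $z_{\mathrm{RR}}$. The first step is to translate the budget constraint into a per-subset query count. Since \textsc{RoundRobin} cycles through the $\binom{n}{k}$ subsets in $\mathcal{Q}_{=k}$ and pulls each equally often, a budget $B$ leaves every subset queried $m \coloneqq \lfloor B/\binom{n}{k}\rfloor$ times, so the returned arm is $\argmax_{i} s_i^{\mathcal B}(m)$. Writing the (to-be-constructed) bound as $z_{\mathrm{RR}} = \binom{n}{k} T^{\ast}$, the hypothesis $B < z_{\mathrm{RR}}$ is exactly equivalent to $m < T^{\ast}$, which is the regime in which I will force a wrong answer.

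For the construction, let $\rho^{\ast} \in \argmax_{\rho \neq i_{\mathcal B}^{\ast}} S_\rho^{\mathcal B}$ be a runner-up and set $\Delta \coloneqq \tfrac{1}{2}\big(S_{i_{\mathcal B}^{\ast}}^{\mathcal B} - S_{\rho^{\ast}}^{\mathcal B}\big)$. I fix the limits $S_{i|Q} \coloneqq S_i^{\mathcal B}$ for every $Q \in \mathcal{Q}_{=k}(i)$ (which trivially realises the prescribed Borda scores), a target time $T^{\ast} \in \N$, and a constant $c > \Delta$, and define a non-increasing plateau perturbation $g(t) \coloneqq c\,\id\{t < T^{\ast}\}$. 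The statistics are then obtained by pushing $i_{\mathcal B}^{\ast}$ down and $\rho^{\ast}$ up inside every subset containing them, leaving all other arms at their limits: $s_{i_{\mathcal B}^{\ast}|Q}(t) \coloneqq S_{i_{\mathcal B}^{\ast}}^{\mathcal B} - g(t)$, $s_{\rho^{\ast}|Q}(t) \coloneqq S_{\rho^{\ast}}^{\mathcal B} + g(t)$, and $s_{i|Q}(t) \coloneqq S_i^{\mathcal B}$ otherwise. The second step is the bookkeeping that this instance is admissible and has the advertised rate: since $g$ is non-increasing with $g(t) \to 0$, the smallest non-increasing envelope is $\gamma_{i_{\mathcal B}^{\ast}|Q} = \gamma_{\rho^{\ast}|Q} = g$ and all remaining $\gamma_{i|Q} \equiv 0$, whence $\hat{\gamma}_{i_{\mathcal B}^{\ast}} = \hat{\gamma}_{\rho^{\ast}} = g$. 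Because $\hat{\gamma}_{i_{\mathcal B}^{\ast}}$ enters every pairwise maximum, $\hat{\gamma}^{\max}_{i_{\mathcal B}^{\ast},\rho} = g$ for all $\rho$; as $g^{-1}$ is non-increasing in its argument, the maximum defining $z_{\mathrm{RR}}$ is attained at the smallest gap, i.e. at $\rho^{\ast}$, so $z_{\mathrm{RR}} = \binom{n}{k}\, g^{-1}(\Delta) = \binom{n}{k} T^{\ast}$ by the choice $c > \Delta$.

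Finally, the failure is immediate: when each subset has been queried $m < T^{\ast}$ times, $g(m) = c$, so $s_{\rho^{\ast}}^{\mathcal B}(m) - s_{i_{\mathcal B}^{\ast}}^{\mathcal B}(m) = \big(S_{\rho^{\ast}}^{\mathcal B} - S_{i_{\mathcal B}^{\ast}}^{\mathcal B}\big) + 2c = 2(c - \Delta) > 0$, hence $i_{\mathcal B}^{\ast}$ is not the empirical Borda winner and \textsc{RoundRobin} returns a different arm. I expect the only genuinely delicate point to be the self-referential bookkeeping of the middle step: the hard instance must be designed so that its own convergence rates reproduce exactly the threshold $z_{\mathrm{RR}}$ at which it is fooled. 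This forces the perturbation to be supported on the pair $(i_{\mathcal B}^{\ast}, \rho^{\ast})$ realising the maximum in $z_{\mathrm{RR}}$ and forces the argument to run through the envelope (the $\max$) of the two rates rather than their sum — which is precisely why pushing \emph{both} arms by the same amount $g$, instead of only one, makes the factor $\tfrac12$ in $\tfrac{1}{2}(S_{i_{\mathcal B}^{\ast}}^{\mathcal B} - S_\rho^{\mathcal B})$ match on the nose. The remaining care is the boundary rounding, handled by phrasing everything through $m = \lfloor B/\binom{n}{k}\rfloor$ rather than $B$ directly.
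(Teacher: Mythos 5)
Your proof is correct and follows essentially the same route as the paper's: both construct a single adversarial instance by shifting the optimal arm's statistics down and a competitor's up by a common vanishing perturbation, verify that the instance's own $z_{\mathrm{RR}}$ equals the critical per-subset query count, and conclude that below it the competitor has the strictly larger empirical Borda score. The only cosmetic differences are that the paper perturbs \emph{all} suboptimal arms by an arbitrary decreasing $\beta(t)$ rather than only the runner-up by a plateau, and it sidesteps your rounding discussion by assuming throughout that $\binom{n}{k}$ divides $B$.
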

Thus, \textsc{RoundRobin} is already nearly-optimal (up to a factor $\mathcal{O}(n/k)$) with respect to worst-case scenarios due to Theorem \ref{Thm_Main_LB_GCW} (see Rem.~\ref{Rem_LB_GBW} for a more detailed discussion.). 

\section{Proofs of Section \ref{sec:Algos}} \label{proofs}
In this section we provide the detailed proofs of Section \ref{sec:Algos}.
We assume throughout that $\frac{B}{{n\choose k}}$ is a natural number, i.e., the budget is a multiple of $n \choose k.$
\subsection{Proof of Theorems \ref{Thm:RRsufficiency} and \ref{Thm:RRnecessity}} \label{sec_Proofs_round_robin}
\begin{proof}[Proof of Theorem \ref{Thm:RRsufficiency}]
    After relabeling the arms in round $r$ we may assume w.l.o.g. $i_{\mathcal B}^{\ast}   = 1$. We will prove the theorem by contradiction and therefore assume
    \begin{align*}
        & \rho = \argmax_{i \in \mathcal{A}}~ s_{i}^{\mathcal B}\left(\frac{B}{{n\choose k}}\right) \neq 1 \\
        \Rightarrow &~ s_{1}^{\mathcal B}\left(\frac{B}{{n\choose k}}\right) < \max_{j=2,\dots n} s_{j}^{\mathcal B}\left(\frac{B}{{n\choose k}}\right) = s_{\rho}^{\mathcal B}\left(\frac{B}{{n\choose k}}\right)\\
        \Rightarrow &~ S_{1}^{\mathcal B} - S_{\rho}^{\mathcal B} < s_{\rho}^B\left(\frac{B}{{n\choose k}}\right) - S_{\rho}^{\mathcal B} + S_{1}^{\mathcal B} - s_{1}^{\mathcal B}\left(\frac{B}{{n\choose k}}\right)\\
        &~ =\frac{1}{|\mathcal{Q}_{=k}(\rho)|} \sum_{Q \in \mathcal{Q}_{=k}(\rho)} \left( s_{\rho|Q}\left(\frac{B}{{n\choose k}}\right) - S_{\rho|Q} \right) + \frac{1}{|\mathcal{Q}_{=k}(1)|} \sum_{Q \in \mathcal{Q}_{=k}(1)} \left( S_{1|Q} - s_{1|Q}\left(\frac{B}{{n\choose k}}\right) \right)\\
        \Rightarrow &~ S_{1}^{\mathcal B} - S_{\rho}^{\mathcal B} < \hat{\gamma}_{\rho}\left(\frac{B}{{n\choose k}}\right) + \hat{\gamma}_{1}\left(\frac{B}{{n\choose k}}\right)\\
        \Rightarrow &~ S_{1}^{\mathcal B} - S_{\rho}^{\mathcal B} < 2 \cdot \hat{\gamma}_{1,\rho}^{\max}\left(\frac{B}{{n\choose k}}\right),
    \end{align*}
    where $\hat{\gamma}_{i}(t) = \frac{1}{|\mathcal{Q}_{=k}(i)|} \sum_{Q \in \mathcal{Q}_{=k}(i)} \gamma_{i|Q}(t)$ and $\hat{\gamma}_{i,j}^{\max}(t) = \max\{\hat{\gamma}_{i}(t), \hat{\gamma}_{j}(t)\}$. 
    With this, however, we can derive 
    \begin{align*}
        \Rightarrow ~  z_{RR} = \left(\hat{\gamma}_{1,\rho}^{\max}\right)^{-1} \left(\frac{S_{1}^{\mathcal B} - S_{\rho}^{\mathcal B}}{2}\right){n \choose k} \ge B,
    \end{align*}
    which contradicts the assumption we make on the budget $B.$
    Thus, it holds that the returned arm is $\rho=1.$ 
\end{proof}
\begin{proof}[Proof of Theorem \ref{Thm:RRnecessity}]
    Let $\beta(t)$ be an arbitrary, monotonically decreasing function of $t$ with $\lim_{t\rightarrow \infty} \beta(t) = 0$. 
    We define for all $j\in\mathcal{A}$ with $j \neq i_{\mathcal B}^{\ast}  $ the empirical Borda scores to be $s_{j}^{\mathcal B}(t) = S_{j}^{\mathcal B}+ \beta(t)$ and $s_{i_{\mathcal B}^{\ast}  }^{\mathcal B}(t) = S_{i_{\mathcal B}^{\ast}  }^{\mathcal B} - \beta(t),$ where $(S_{i}^{\mathcal B})_{i \in [n]}$ are arbitrary real values such that $S_{i_{\mathcal B}^{\ast}  }^{\mathcal B}$ is the unique maximum for some $i_{\mathcal B}^{\ast}  \in [n].$
    We can again assume after relabeling all arms that w.l.o.g.\ that $i_{\mathcal B}^{\ast}   = 1$ and $\argmax_{j=2,\dots,n} S_{j}^{\mathcal B}  = 2$. 
    Note that $\hat{\gamma}_{i}(t) = \beta(t)$ for all $i\in \mathcal{A}$. 
    In light of these considerations, \textsc{RoundRobin} returns $1$ as the best arm if and only if
    \begin{align*}
         s_{1}^{\mathcal B}\left(\frac{B}{{n\choose k}}\right) > \max_{j=2,\dots,n} s_{j}^{\mathcal B}\left(\frac{B}{{n\choose k}}\right) \quad 
        \Leftrightarrow &~ S_{1}^{\mathcal B} - \hat{\gamma}_{1}\left(\frac{B}{{n\choose k}}\right) > \max_{j=2,\dots,n} S_{j}^{\mathcal B} + \hat{\gamma}_{j}\left(\frac{B}{{n\choose k}}\right) \\
        \Leftrightarrow &~ S_{1}^{\mathcal B} - \hat{\gamma}_{1}\left(\frac{B}{{n\choose k}}\right) > S_{2}^{\mathcal B} + \hat{\gamma}_{2}\left(\frac{B}{{n\choose k}}\right) \\
        \Leftrightarrow &~ \hat{\gamma}_{1}\left(\frac{B}{{n\choose k}}\right) + \hat{\gamma}_{2}\left(\frac{B}{{n\choose k}}\right) < S_{1}^{\mathcal B} - S_{2}^{\mathcal B} \\
        \Leftrightarrow &~ 2\cdot \hat{\gamma}_{1,2}^{\max}\left(\frac{B}{{n\choose k}}\right) < S_{1}^{\mathcal B} - S_{2}^{\mathcal B} \\
        \Leftrightarrow &~ B \ge {n \choose k} \left(\hat{\gamma}_{1,2}^{\max}\right)^{-1} \left(\frac{S_{1}^{\mathcal B} - S_{2}^{\mathcal B}}{2}\right).
    \end{align*}
    Thus, the necessary budget is $z_{RR}$ in this case concluding the claim.
\end{proof}

\subsection{Proofs of Theorem \ref{Thm_generalSufficient} and \ref{Thm:general_necessity}} \label{sec_proofs_general_framework}

\begin{proof}[Proof of Theorem \ref{Thm_generalSufficient}]    
	For the sake of convenience, let us abbreviate $[R] \coloneqq \{1,\dots,R\}$ and $\mathbb{A}_{rj} \coloneqq \mathbb{A}_{r,j}$ in the following.
    By possibly relabeling the arms and query sets queried by the algorithm, we can assume w.l.o.g.  $i^{\ast}=1$ and $\mathbb{A}_{r}(1) = \mathbb{A}_{r1}$ for all $r\in [R]$ in the following. In particular, we have $S_{1|\mathbb{A}_{r1}} = S_{(1)|\mathbb{A}_{r1}}$ for all $r\in [R]$.
    We prove the correctness of the algorithm indirectly. Thus, we start by assuming that the best arm is not contained in the last partition (i.e., the remaining active arm):
    \begin{align*}
        & \mathbb{A}_{R+1} \neq \{1\} \\
        \Leftrightarrow & \exists r\in [R] :  1 \notin \mathbb{A}_{r+1} \wedge 1\in \mathbb{A}_{r} \\
        \Rightarrow &\exists r\in [R] : \sum\limits_{i\in \mathbb{A}_{r1}} \id\{s_{i|\mathbb{A}_{r1}}(b_r) \geq s_{1|\mathbb{A}_{r1}}(b_r)\} > f(|\mathbb{A}_{r1}|) \\
        \Rightarrow &\exists r\in [R] :   \sum\limits_{i\in \mathbb{A}_{r1}} \id\{S_{1|\mathbb{A}_{r1}} - S_{i|\mathbb{A}_{r1}} \leq S_{1|\mathbb{A}_{r1}} - s_{1|\mathbb{A}_{r1}}(b_r) - S_{i|\mathbb{A}_{r1}} + s_{i|\mathbb{A}_{r1}}(b_r)\} > f(|\mathbb{A}_{r1}|) \\
        \Rightarrow &\exists r\in [R] : \sum\limits_{i\in \mathbb{A}_{r1}} \id\{S_{1|\mathbb{A}_{r1}} - S_{i|\mathbb{A}_{r1}} \leq |S_{1|\mathbb{A}_{r1}} - s_{1|\mathbb{A}_{r1}}(b_r)| + |S_{i|\mathbb{A}_{r1}} - s_{i|\mathbb{A}_{r1}}(b_r)|\} > f(|\mathbb{A}_{r1}|)  \\
        \Rightarrow &\exists r\in [R] : \sum\limits_{i\in \mathbb{A}_{r1}} \id\{S_{1|\mathbb{A}_{r1}} - S_{i|\mathbb{A}_{r1}} \leq 2 \bar{\gamma}_{\mathbb{A}_{r1}}(b_{r})\} > f(|\mathbb{A}_{r1}|) \\
        \Rightarrow &\exists r\in [R] : S_{1|\mathbb{A}_{r1}} - S_{(f(\mathbb{A}_{r1})+1)|\mathbb{A}_{r1}} \leq 2 \bar{\gamma}_{\mathbb{A}_{r1}}(b_{r})\\
        \Rightarrow &\exists r\in [R] : \left\lfloor \frac{B}{P_{r}R} \right\rfloor = b_{r} \leq \bar{\gamma}_{\mathbb{A}_{r1}}^{-1}\left( \frac{S_{1|\mathbb{A}_{r1}} - S_{(f(\mathbb{A}_{r1})+1)|\mathbb{A}_{r1}}}{2} \right) \\
        \Rightarrow &\exists r\in [R] : B \leq  P_{r} R \left\lceil \bar{\gamma}_{\mathbb{A}_{r1}}^{-1}\left(\frac{S_{1|\mathbb{A}_{r1}} - S_{(f(|\mathbb{A}_{r1}|)+1)|\mathbb{A}_{r1}}}{2}\right) \right\rceil  \\
        \Rightarrow &B \leq R \max\nolimits_{r\in [R]} P_{r}\left\lceil \bar{\gamma}_{\mathbb{A}_{r1}}^{-1}\left(\frac{S_{1|\mathbb{A}_{r1}} - S_{(f(|\mathbb{A}_{r1}|)+1)|\mathbb{A}_{r1}}}{2}\right) \right\rceil = z\left(f,R,\{P_{r}\}_{1\leq r\leq R}\right),
    \end{align*} 
    which contradicts the assumption we make on the budget $B.$
    Thus, it holds that the remaining active arm in round $R+1$ is $i^*=1.$ 
\end{proof}

\begin{Rem} \label{remark:rough_suff_budget}
    Using the definition of $\bar{\gamma}_Q(t)$ and $\bar{\gamma}(t)$ we can derive the following more coarser bounds on the sufficient budget:
    \begin{align*}
        z_1\left(f,R,\{P_{r}\}_{1\leq r\leq R}\right) &= R \max\nolimits_{r\in [R]} P_{r}\left\lceil \bar{\gamma}^{-1}\left(\frac{S_{1|\mathbb{A}_{r1}} - S_{(f(|\mathbb{A}_{r1}|)+1)|\mathbb{A}_{r1}}}{2}\right) \right\rceil, \\
        z_2\left(f,R,\{P_{r}\}_{1\leq r\leq R}\right) &= R \left(\max\nolimits_{r\in [R]} P_{r}\right) \max\nolimits_{Q \in \mathcal{Q}_{\leq k}}\left\lceil \bar{\gamma}^{-1}\left(\frac{S_{1|Q} - S_{(f(|Q|)+1)|Q}}{2}\right) \right\rceil .
    \end{align*}
\end{Rem}

\begin{proof}[Proof of Theorem~\ref{Thm:general_necessity}]
	After relabeling, we may suppose w.l.o.g. $i^{\ast} = 1$. Let $\beta:\N \ra (0,\infty)$ be an arbitrary strictly decreasing function with $\beta(t) \ra 0$ as $t\ra \infty$ and
	\begin{equation*}
		\left\{ \frac{S_{1|Q}-S_{j|Q}}{2} : Q\in \mathcal{Q}_{\leq k}, j\in Q\right\}  \subseteq \beta(\N).
	\end{equation*}
	Then, $\beta$ is invertible on $\beta(\N)$ and its inverse function $\beta^{-1} : \beta(\N) \ra \N$ trivially fulfills $\beta^{-1}(\alpha) = \min \{t\in \N \, : \, \beta(t) \leq \alpha\}$ for all $\alpha \in \beta(\N)$.
	Define for any $Q\in \mathcal{Q}_{\leq k}$ and $i\in Q$ the family of statistics by means of 
	\begin{equation*}
		s_{i|Q}(t) \coloneqq \begin{cases} S_{i|Q}-\beta(t), \quad &\text{if } i=\argmax_{j\in Q} S_{j|Q}, \\ S_{i|Q} + \beta(t), \quad &\text{otherwise,} \end{cases}
	\end{equation*}
	and note that $\ol{\gamma}_{Q}(t) = \beta(t)$ for all $Q\in \mathcal{Q}_{\leq k}$ and $t\in \N$. Writing $b_{r} = \left\lfloor \frac{B}{RP_{r}}\right\rfloor$ we obtain due to the choice of $\beta$ that   
	{\small 
	\begin{align*}
		&B <R \max\nolimits_{r\in [R]} P_{r} \bar{\gamma}_{\mathbb{A}_{r1}}^{-1}\left(\frac{S_{1|\mathbb{A}_{r1}} - S_{(f(|\mathbb{A}_{r1}|)+1)|\mathbb{A}_{r1}}}{2}\right) \\
		\Rightarrow & \exists r\in [R] : B < R P_{r} \min \left\{ t\in \N \, : \, \bar{\gamma}_{\mathbb{A}_{r1}}(t) \leq \frac{S_{1|\mathbb{A}_{r1}} - S_{(f(|\mathbb{A}_{r1}|)+1)|\mathbb{A}_{r1}}}{2} \right\}\\
		\Rightarrow & \exists r\in [R] : b_{r} < \min \left\{ t\in \N \, : \, \beta(t) \leq \frac{S_{1|\mathbb{A}_{r1}} - S_{(f(|\mathbb{A}_{r1}|)+1)|\mathbb{A}_{r1}}}{2} \right\}=\beta^{-1}\left( \frac{S_{1|\mathbb{A}_{r1}} - S_{(f(|\mathbb{A}_{r1}|)+1)|\mathbb{A}_{r1}}}{2} \right) \\
		\Rightarrow & \exists r\in [R] : 2\beta\left( b_{r} \right) > S_{1|\mathbb{A}_{r1}} - S_{(f(|\mathbb{A}_{r1}|)+1)|\mathbb{A}_{r1}} 
		= s_{1|\mathbb{A}_{r1}}(b_{r})+\beta(b_{r}) - \left( s_{(f(|\mathbb{A}_{r1}|)+1)|\mathbb{A}_{r1}}(b_{r}) - \beta(b_{r}) \right) \\		
		\Rightarrow & \exists r\in [R] :  s_{1|\mathbb{A}_{r1}}(b_{r}) <   s_{(f(|\mathbb{A}_{r1}|)+1)|\mathbb{A}_{r1}}(b_{r})\\	
		\Rightarrow & \exists r\in [R] :  1\not\in \mathbb{A}_{r+1}\\
		\Rightarrow &1\not\in \mathbb{A}_{R+1}.
	\end{align*}}
	This shows that $z\left(f,R,\{P_{r}\}_{1\leq r\leq R}\right)$ is the necessary budget for returning the best arm $i^{\ast}$ in this scenario. 
\end{proof}

\subsection{Proof of Corollary \ref{Cor:CSXsufficiency}} \label{sec_proofs_corollaries}
For sake of convenience, we provide the entire pseudo-code of CSWS in Algorithm \ref{alg:CSWS}, which results by using $f(x)=x-1$ as well as  $P_{r}^{\mathrm{CSWS}}$ and $R^{\mathrm{CSWS}}$ as defined in Section \ref{sec:gen_framework} in Algorithm \ref{alg:Framework}.
    \begin{algorithm}[!ht]
    \footnotesize
	\caption{Combinatorial Successive Winner Stays (CSWS)} \label{alg:CSWS}
	\hspace*{1pt} 
	\textbf{Input:} set of arms $[n]$, subset size $k\le n$, sampling budget $B$\\
	\hspace*{1pt} 
	\textbf{Initialization:}  
	For each $r\in \{1,\dots, \lceil \log_k(n) \rceil +1\}$ let $b_{r} \coloneqq \left\lfloor \frac{B}{\ \left\lceil \frac{n}{k^r} \right\rceil \cdot (\lceil \log_k(n) \rceil +1 ) } \right\rfloor$, 
	$\mathbb{A} \leftarrow [n]$, \\
	\hphantom{\textbf{Initialization:} \ } $r \leftarrow 1$
	\begin{algorithmic}[1]
	    \WHILE {$|\mathbb{A}_r| \ge k$}
	        \STATE $J = \lceil\frac{n}{k^r}\rceil$
	        \STATE $\mathbb{A}_{r1}, \mathbb{A}_{r2},\dots,\mathbb{A}_{r,J} \leftarrow Partition(\mathbb{A}_r, k)$
	        \IF {$|\mathbb{A}_{r,J}| < k $}
	            \STATE $\mathcal{R} \leftarrow \mathbb{A}_{r,J}$, $J \leftarrow J-1$
            \ELSE 
	            \STATE $\mathcal{R} \leftarrow \emptyset$
	       \ENDIF
	       \STATE $\mathbb{A}_{r+1} \leftarrow \emptyset$
	       \FOR {$j \in [J]$}
	            \STATE Play the set $\mathbb{A}_{r,j}$ for $b_{r}$ times
	            \STATE For all $i\in \mathbb{A}_{r,j}$, update $s_{i|\mathbb{A}_{r,j}}(b_r)$
	            \STATE Let $w \in \argmax_i~ s_{i|\mathbb{A}_{r,j}}(b_r)$
	            \STATE $\mathbb{A}_{r+1} \leftarrow \mathbb{A}_{r+1} \cup \{w\}$
	       \ENDFOR
	       \STATE $\mathbb{A}_{r+1} \leftarrow \mathbb{A}_{r+1} \cup \mathcal{R}$
	       \STATE $r \leftarrow r+1$
	   \ENDWHILE
	   \STATE $\mathbb{A}_{r+1} \leftarrow \emptyset$
	   \WHILE {$|\mathbb{A}_r| > 1$}
	        \STATE Play the set $\mathbb{A}_r $ for $b_{r}$ times
	        \STATE For all $i \in \mathbb{A}_r$, update $s_{i|\mathbb{A}_r}(b_r)$
	        \STATE Let $w \in \argmax_i~ s_{i|\mathbb{A}_{r}}(b_r)$
	        \STATE $\mathbb{A}_{r+1} \leftarrow \mathbb{A}_{r+1} \cup \{w\}$
	        \STATE $r \leftarrow r+1$
	   \ENDWHILE
	\end{algorithmic} 
	\hspace*{1pt} \textbf{Output:} The remaining item in $\mathbb{A}_r$
\end{algorithm}

    \begin{proof}[Proof of Corollary \ref{Cor:CSXsufficiency} (CSWS case)]
    Suppose $B >0$ to be arbitrary but fixed.
    First, note that there are at most $\lceil \log_k(n) \rceil$ rounds within the first while-loop and at most 1 in the second, so that we have at most $\lceil \log_k(n) \rceil +1 $ many rounds in total.
    The total number of partitions in round $r\in \{1,\ldots, \lceil \log_k(n) \rceil +1\}$ is at most $\left\lceil \frac{n}{k^r} \right\rceil.$ 
    Abbreviating $R\coloneqq R^{\mathrm{CSWS}}$ and $P_{r} \coloneqq P_{r}^{\mathrm{CSWS}}$ for the moment, the budget allocated to a partition in round $r$ is by definition $b_{r} = \lfloor \frac{B}{RP_{r}} \rfloor = \left\lfloor\frac{B}{\ \left\lceil \frac{n}{k^r} \right\rceil \cdot (\lceil \log_k(n) \rceil +1) }\right\rfloor$. Hence, the total budget used by CSWS is 
     \begin{align*}
        \sum\limits_{r=1}^{\lceil\log_k(n)\rceil+1} \#\{\text{partitions in round r}\} \cdot b_{r}
        = \sum\limits_{r=1}^{\lceil\log_k(n)\rceil+1} \left\lceil \frac{n}{k^r} \right\rceil \left\lfloor\frac{B}{\ \left\lceil \frac{n}{k^r} \right\rceil \cdot (\lceil \log_k(n) \rceil +1) }\right\rfloor
        \le B.
    \end{align*}
    Thus, the stated correctness of CSWS follows directly from Theorem \ref{Thm_generalSufficient}.

    \end{proof} 
    For sake of convenience, we provide the entire pseudo-code of CSR in Algorithm \ref{alg:CSR}, which results by using $f(x)=1$ as well as  $P_{r}^{\mathrm{CSR}}$ and $R^{\mathrm{CSR}}$ as defined in Section \ref{sec:gen_framework} in Algorithm \ref{alg:Framework}.
\begin{algorithm}[!ht]
\footnotesize
	\caption{Combinatorial Successive Reject (CSR)} \label{alg:CSR}
	\hspace*{1pt} \textbf{Input:} set of arms $[n]$, subset size $k\le n$, sampling budget $B$\\
	\hspace*{1pt} \textbf{Initialization:}  For each $r\in \{0,\dots,\lceil\log_{1-\frac{1}{k}}\left(\frac{1}{n}\right)\rceil \}$ let $b_{r} \coloneqq \left\lfloor\frac{B}{ \left\lceil\frac{n(1-\frac{1}{k})^{r-1}}{k}\right\rceil\left(\left\lceil\log_{1-\frac{1}{k}}\left(\frac{1}{n}\right)\right\rceil+k-1\right)}\right\rfloor$, \\
	\hphantom{\hspace*{1pt} \textbf{Initialization:}}$\mathbb{A} \leftarrow [n]$, $r \leftarrow 1$
	\begin{algorithmic}[1]
	    \WHILE {$|\mathbb{A}_r| \ge k$}
	        \STATE $J = \lceil\frac{n(1-\frac{1}{k})^{r-1}}{k}\rceil$
	        \STATE $\mathbb{A}_{r1}, \mathbb{A}_{r2},\dots,\mathbb{A}_{r,J} \leftarrow Partition(\mathbb{A}_r, k)$
	        \IF {$|\mathbb{A}_{r,J}| < k $}
	            \STATE $\mathcal{R} \leftarrow \mathbb{A}_{r,J}$, $J \leftarrow J-1$
            \ELSE 
	            \STATE $\mathcal{R} \leftarrow \emptyset$
	       \ENDIF
	       \STATE $\mathbb{A}_{r+1} \leftarrow \mathbb{A}_{r}$
	       \FOR {$j \in [J]$}
	            \STATE Play the set $\mathbb{A}_{r,j}$ for $b_{r}$ times
	            \STATE For all $i\in \mathbb{A}_{r,j}$, update $s_{i|\mathbb{A}_{r,j}}(b_r)$
	            \STATE Let $w \in \argmin_i~ s_{i|\mathbb{A}_{r,j}}(b_r)$
	            \STATE $\mathbb{A}_{r+1} = \mathbb{A}_{r+1} \backslash \{w\}$
	       \ENDFOR
	       \STATE $\mathbb{A}_{r+1} \leftarrow \mathbb{A}_{r+1} \cup \mathcal{R}$
	       \STATE $r \leftarrow r+1$
	   \ENDWHILE
	   \STATE$\mathbb{A}_{r+1} \leftarrow \mathbb{A}_{r}$
	   \WHILE {$|\mathbb{A}_r| > 1$}
	        \STATE Play the set $\mathbb{A}_r $ for $b_{r}$ times
	        \STATE For all $i \in \mathbb{A}_r$, update $s_{i|\mathbb{A}_r}(b_r)$
	        \STATE Let $w \in \argmin_i~ s_{i|\mathbb{A}_{r}}(b_r)$
	        \STATE $\mathbb{A}_{r+1} = \mathbb{A}_{r+1} \backslash \{w\}$
	        \STATE $r \leftarrow r+1$
	   \ENDWHILE
	\end{algorithmic} 
	\hspace*{1pt} \textbf{Output:} The remaining item in $\mathbb{A}_r$
\end{algorithm}
    \begin{proof}[Proof of Corollary \ref{Cor:CSXsufficiency} (CSR case)]
    Suppose $B >0$ to be arbitrary but fixed.
    First, note that there are at most $\left\lceil \log_{1-\frac{1}{k}}\left(\frac{1}{n}\right) \right\rceil$ rounds within the first while-loop and at most $k-1$ in the second, so that we have at most $\left\lceil \log_{1-\frac{1}{k}}\left(\frac{1}{n}\right) \right\rceil +k-1 $ many rounds in total.
    The total number of partitions in round $r\in \{1,\ldots, \left\lceil \log_{1-\frac{1}{k}}\left(\frac{1}{n}\right) \right\rceil +k-1 \}$ is at most $ \left\lceil \frac{n(1-\frac{1}{k})^{r-1}}{k} \right\rceil$. 
    The budget allocated to a partition in round $r$ (i.e., $b_r$) is by definition given by 
    \begin{equation*} 
    b_{r} = \lfloor B/(R^{\mathrm{CSR}}P_{r}^{\mathrm{CSR}})\rfloor = \left\lfloor \frac{B}{ \left\lceil\frac{n(1-\frac{1}{k})^{r-1}}{k}\right\rceil \left(\left\lceil\log_{1-\frac{1}{k}}\left(\frac{1}{n}\right)\right\rceil+k-1\right)} \right\rfloor.
    \end{equation*} 
    Consequently, the total budget used by CSR is 
     \begin{align*}
        &\sum\limits_{r=1}^{\left\lceil \log_{1-\frac{1}{k}}\left(\frac{1}{n}\right) \right\rceil +k-1 } \#\{\text{partitions in round r}\} \cdot b_{r}\\
        &= \sum\limits_{r=1}^{\left\lceil \log_{1-\frac{1}{k}}\left(\frac{1}{n}\right) \right\rceil +k-1 } \left\lceil \frac{n(1-\frac{1}{k})^{r-1}}{k} \right\rceil \left\lfloor \frac{B}{ \left\lceil\frac{n(1-\frac{1}{k})^{r-1}}{k}\right\rceil \left(\left\lceil\log_{1-\frac{1}{k}}\left(\frac{1}{n}\right)\right\rceil+k-1\right)} \right\rfloor \\
        &\le B.
    \end{align*}
    Therefore, the statement follows from Theorem \ref{Thm_generalSufficient}.

\end{proof}
    For sake of convenience, we provide the entire pseudo-code of CSH in Algorithm \ref{alg:CSH}, which results by using $f(x)=\lceil x/2 \rceil$ as well as  $P_{r}^{\mathrm{CSH}}$ and $R^{\mathrm{CSH}}$ as defined in Section \ref{sec:gen_framework} in Algorithm \ref{alg:Framework}.
\begin{algorithm}[!ht]
\footnotesize
	\caption{Combinatorial Successive Halving (CSH)} \label{alg:CSH}
	\hspace*{1pt} \textbf{Input:} set of arms $[n]$, subset size $k\le n$, sampling budget $B$\\
	\hspace*{1pt} \textbf{Initialization:}  For each $r\in \{0,\dots,\left\lceil \log_2(n) \right\rceil + \lceil \log_2(k) \rceil\}$ let $b_{r} \coloneqq \left\lfloor \frac{Bk}{\lceil \frac{n}{2^{r-1}} \rceil \left(\left\lceil \log_2(n) \right\rceil + \lceil \log_2(k) \rceil \right)}\right\rfloor$, \\
	\hphantom{\hspace*{1pt} \textbf{Initialization:}} $\mathbb{A} \leftarrow [n]$, $r \leftarrow 1$
	\begin{algorithmic}[1]
	    \WHILE {$|\mathbb{A}_r| \ge k$}
	        \STATE $J = \lceil\frac{n}{2^{r-1}k}\rceil$
	        \STATE $\mathbb{A}_{r1}, \mathbb{A}_{r2},\dots,\mathbb{A}_{r,J} \leftarrow Partition(\mathbb{A}_r, k)$
	        \IF {$|\mathbb{A}_{r,j}| < k $}
	            \STATE $\mathcal{R} \leftarrow \mathbb{A}_{r,j}$, $J \leftarrow J-1$
            \ELSE 
	            \STATE $\mathcal{R} \leftarrow \emptyset$
	       \ENDIF
	       \FOR {$j \in [J]$}
	            \STATE Play the set $\mathbb{A}_{r,j}$ for $b_{r}$ times
	            \STATE For all $i\in \mathbb{A}_{r,j}$, update $s_{i|\mathbb{A}_{r,j}}(b_r)$
	            \STATE Define $\bar{s} \leftarrow \text{Median}(\{s_{i|\mathbb{A}_{r,j}}(b_r)\}_{i\in \mathbb{A}_{r,j}})$
	            \STATE $\mathbb{A}_{r+1} \leftarrow \{i \in \mathbb{A}_{r,j} | s_{i|\mathbb{A}_{r,j}}(b_r) \le \bar{s}\}$
	       \ENDFOR
	       \STATE $\mathbb{A}_{r+1} \leftarrow \mathbb{A}_{r+1} \cup \mathcal{R}$
	       \STATE $r \leftarrow r+1$
	   \ENDWHILE
	   \STATE $\mathbb{A}_r \leftarrow \mathbb{A}_r \cup \{k - |\mathbb{A}_r|$ random elements from $[n]\backslash \mathbb{A}_r \}$
	   \WHILE {$|\mathbb{A}_r| > 1$}
	        \STATE Play the set $\mathbb{A}_r $ for $b_{r}$ times
	        \STATE For all $i \in \mathbb{A}_r$, update $s_{i|\mathbb{A}_r}(b_r)$
	        \STATE Define $\bar{s} \leftarrow \text{Median}(\{s_{i|\mathbb{A}_r}(b_r)\}_{i\in\mathbb{A}_r})$
	        \STATE $\mathbb{A}_{r+1} \leftarrow \{i \in \mathbb{A}_r | s_{i|\mathbb{A}_r}(b_r) \le \bar{s}\}$
	        \STATE $r \leftarrow r+1$
	   \ENDWHILE
	\end{algorithmic} 
	\hspace*{1pt} \textbf{Output:} The remaining item in $\mathbb{A}_r$
\end{algorithm}
\begin{proof}[Proof of Corollary \ref{Cor:CSXsufficiency} (CSH case)]
    Suppose $B >0$ to be arbitrary but fixed.
    First, note that there are at most $\lceil \log_2(n) \rceil$ rounds within the first while-loop and at most $\lceil \log_2(k) \rceil$ in the second, so that we have at most $\lceil \log_2(n) \rceil + \lceil \log_2(k) \rceil$ many rounds in total.
    The total number of partitions in round $r = 1,\ldots, \lceil \log_2(n) \rceil + \lceil \log_2(k) \rceil $ is at most $ \left\lceil \frac{n}{2^{r-1} k} \right\rceil.$ 
    The budget allocated to a partition in round $r$ is 
    \begin{equation*} 
        b_{r} = \lfloor B/(R^{\mathrm{CSH}}P_{r}^{\mathrm{CSH}}) \rfloor = \left\lfloor \frac{B}{\ \left\lceil \frac{n}{2^{r-1} k} \right\rceil \cdot (\lceil \log_2(n) \rceil + \lceil \log_2(k) \rceil) } \right\rfloor.
    \end{equation*}
    In particular, the total budget used by CSH is
    \begin{align*}
        &\sum\limits_{r=1}^{\lceil\log_2(n)\rceil + \lceil \log_2(k) \rceil} \#\{\text{partitions in round r}\} \cdot b_{r}\\
        &= \sum\limits_{r=1}^{\lceil\log_2(n)\rceil + \lceil \log_2(k) \rceil} \left\lceil \frac{n}{2^{r-1} k} \right\rceil \cdot \left\lfloor \frac{Bk}{\lceil \frac{n}{2^{r-1}} \rceil (\lceil \log_2(n) \rceil + \lceil \log_2(k) \rceil)}\right\rfloor\\
        &\le B.
    \end{align*}
    Once again, Theorem \ref{Thm_generalSufficient} allows us to conclude the proof. 
\end{proof}

\section{Proofs of Section \ref{sec:application_stochastic}}\label{Appendix:proofsAppl} 

\subsection{Stochastic Numerical Feedback: Proof of Corollary \ref{Cor:ApplReward}}

A rich class of statistics can be obtained by applying a linear functional $U(F)= \int r(x) \mathrm{d}F(x),$ where $F$ is a cumulative distribution function and $r: \R \rightarrow \mathbb{R}$ some measurable function, on the empirical distribution function \citep{wasserman2013all}, i.e.,  for any $x \in \R$ and any multiset of (reward) observations $O$ %
\begin{center}
    $\tilde s(O,x) = \frac{1}{|O|}\sum_{o \in O} \id\{ x \leq o \}.$
\end{center}
This leads to the statistics 
$$s_{i|Q}(t) = U(\tilde s(o_{i|Q}(1), \dots, o_{i|Q}(t),\cdot)) =  \sum_{s=1}^{t}  \frac{r(o_{i|Q}(s))}{t},$$ 
which converge to $S_{i|Q} = \E_{X\sim \nu_{i|Q}}[r(X)]$ by the law of large numbers, provided these expected values exist.
In this section we show the following result which generalizes Corollary \ref{Cor:ApplReward} for statistics of the above kind.
\begin{Cor} \label{Cor:ApplReward_others}
	Let $f$, $R$ and $\{P_{r}\}_{r\in [R]}$ be as in Theorem \ref{Thm_generalSufficient} and suppose that $r(o_{i|Q}(t))$ are $\sigma$-sub-Gaussian and such that their means $S_{i|Q} := \E_{X\sim \nu_{i|Q}}[r(X)]$ satisfy (A2). 
	Then, there is a function 
	\begin{equation*} 
		C(\delta,\eps,k,R, \sigma)\in  \mathcal{O}\left(\sigma^{2} \eps^{-2}\ln\left( \nicefrac{kR}{\delta}\ln\left(\nicefrac{kR\sigma}{\eps \delta}\right)\right)\right)
	\end{equation*}
	with the following property:
	If $i^{\ast}$ is the GCW and  $\sup_{Q\in \mathcal{Q}_{\leq k}(i^{\ast})} \Delta_{(f(|Q|)+1)|Q} \leq \eps$, then Algorithm \ref{alg:Framework} used with a budget $B$ larger than $  C(\delta,\eps,k,R,\sigma) \cdot R \max\nolimits_{r\in [R]} P_{r} $
	returns $i^{\ast}$ with probability at least $1-\delta$.
\end{Cor}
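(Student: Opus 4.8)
The plan is to deduce \ref{Cor:ApplReward_others} from the deterministic sufficiency bound of Theorem~\ref{Thm_generalSufficient} by exhibiting a high-probability event on which the anytime confidence radius furnished by the law of the iterated logarithm acts as a legitimate worst-case speed-of-convergence function $\bar\gamma$. Concretely, for each pair $(i,Q)$ the transformed sequence $(r(o_{i|Q}(t)))_{t}$ is i.i.d.\ $\sigma$-sub-Gaussian with mean $S_{i|Q}=\E_{X\sim\nu_{i|Q}}[r(X)]$, and $s_{i|Q}(t)$ is its running average; applying the LIL anytime bound of \cite{Jamieson14} to this sequence gives, for a parameter $\delta'$ chosen below, an event of probability at least $1-\delta'$ on which $|s_{i|Q}(t)-S_{i|Q}|\le c_{\delta'}(t)$ holds for \emph{all} $t$, with $c_{\delta'}(t)=\mathcal{O}\big(\sigma\sqrt{\ln(\ln(t)/\delta')/t}\big)$. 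This transfer to arbitrary $r$ is the only step that differs from the empirical-mean special case of Corollary~\ref{Cor:ApplReward}; everything downstream is identical.

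Next I would intersect these events only along the elimination path of $i^{\ast}$, i.e.\ over the pairs $(i,\mathbb{A}_{r}(i^{\ast}))$ with $r\in[R]$ and $i\in\mathbb{A}_{r}(i^{\ast})$, of which there are at most $kR$. Choosing $\delta'=\delta/(kR)$ and taking a union bound produces an event $E$ with $\P(E)\ge 1-\delta$ on which $c_{\delta'}$ simultaneously dominates $|s_{i|Q}(t)-S_{i|Q}|$ for every pair that can influence whether $i^{\ast}$ survives. On $E$ the realized statistics satisfy the hypotheses of Theorem~\ref{Thm_generalSufficient} with $\bar\gamma_{\mathbb{A}_{r}(i^{\ast})}$ replaced throughout by the single function $c_{\delta'}$, so that theorem guarantees the algorithm outputs $i^{\ast}$ as soon as $B> R\max_{r\in[R]}P_{r}\cdot\max_{r\in[R]}\lceil c_{\delta'}^{-1}(\Delta_{(f(|\mathbb{A}_{r}(i^{\ast})|)+1)|\mathbb{A}_{r}(i^{\ast})}/2)\rceil$, where $c_{\delta'}^{-1}(\alpha)=\min\{t:c_{\delta'}(t)\le\alpha\}$.

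It then remains to bound the inverse. The hypothesis $\sup_{Q\in\mathcal{Q}_{\leq k}(i^{\ast})}\Delta_{(f(|Q|)+1)|Q}\le\eps$ fixes the relevant separation at scale $\eps$, so it suffices to force $c_{\delta'}(b_{r})\le \eps/2$; by monotonicity this amounts to bounding $c_{\delta'}^{-1}(\eps/2)$. Inverting the LIL expression $c_{\delta'}(t)\le\eps/2$ is a self-referential inequality of the form $t\ge a\ln(bt)$ with $a=\Theta(\sigma^{2}\eps^{-2})$ and $1/\delta'=kR/\delta$; the standard solution lemma yields $c_{\delta'}^{-1}(\eps/2)=\mathcal{O}\big(\sigma^{2}\eps^{-2}\ln(\tfrac{kR}{\delta}\ln(\tfrac{kR\sigma}{\eps\delta}))\big)=:C(\delta,\eps,k,R,\sigma)$, where the inner logarithm arises from substituting the leading estimate $t^{\ast}\approx\sigma^{2}\eps^{-2}\ln(kR/\delta)$ back into the $\ln\ln t$ term. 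Setting the budget above $C\cdot R\max_{r}P_{r}$ then meets the threshold of Theorem~\ref{Thm_generalSufficient} on $E$, and since $\P(E)\ge1-\delta$ this proves the claim.

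The main obstacle is the second step: the query sets $\mathbb{A}_{r}(i^{\ast})$ are data-dependent, so naively union-bounding the anytime guarantee over all sets in $\mathcal{Q}_{\leq k}$ would inflate the confidence correction by $\ln|\mathcal{Q}_{\leq k}|=\Theta(k\ln n)$ rather than the claimed $\ln(kR)$. The key point to make rigorous is that only the $R$ sets traversed by $i^{\ast}$, carrying at most $kR$ arm--set pairs, are relevant for correctness, and that each such set is queried a fixed number $b_{r}$ of times; I would justify restricting the union bound to this random-but-small collection by conditioning round-by-round on the surviving active set, so that $\mathbb{A}_{r}(i^{\ast})$ is determined by statistics from earlier rounds while the fresh LIL event for round $r$ remains independent of that conditioning. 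The remaining inversion step is routine once the explicit LIL constants of \cite{Jamieson14} are carried through.
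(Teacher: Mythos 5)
Your proposal is correct and follows essentially the same route as the paper's proof: an anytime LIL-type concentration bound for the i.i.d.\ sub-Gaussian sequence $(r(o_{i|Q}(t)))_t$ (re-derived for general $r$ by adapting the argument of \cite{Jamieson14}), a union bound over the at most $kR$ arm--set pairs along the elimination path of $i^{\ast}$ with $\delta'=\delta/(kR)$, conditioning on the realized partition sequence (the paper does this via the law of total probability over all sequences $(E_r)_{r\in[R]}$, which is the same resolution of the data-dependence issue you flag), and a standard inversion of the self-referential inequality to obtain $C(\delta,\eps,k,R,\sigma)$ before invoking Theorem~\ref{Thm_generalSufficient} on the good event. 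No gaps.
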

Note that we immediately obtain the proof for Corollary \ref{Cor:ApplReward} as a special case of  Corollary \ref{Cor:ApplReward_others} by using the the identity function $r(x) = x.$

The following two lemmata serve as a preparation for the proof of Corollary \ref{Cor:ApplReward_others}. The proof of Lemma~\ref{AnytimeBoundUtility} is an adaptation of the proof of Lemma 3 in \cite{Jamieson14}.

\begin{Le} \label{AnytimeBoundUtility}
    Let $X_1, X_2, \ldots \sim \mathcal{X}$ be iid real-valued random variables and $r: \R \rightarrow \mathbb{R}$ such that $r(\mathcal{X})$ is $\sigma^2$-sub-Gaussian. 
    For any $\epsilon \in (0,1)$ and $\delta \in (0, \log(1+\epsilon)/e)$ one has with probability at least $1 - \frac{(2+\epsilon)}{\epsilon}\left( \frac{\delta}{\log(1+\epsilon)}\right)^{(1+\epsilon)}$ for any $t\geq 1$
    \begin{align*}
        &\sum_{i=1}^t r(X_i) - t \cdot \mathbb{E}_{X\sim \mathcal{X}}[r(X)] \leq (1 + \sqrt{\epsilon}) \sqrt{2 \sigma^2(1+\epsilon)t\log\left(\frac{\log((1+\epsilon)t)}{\delta}\right)}.
    \end{align*}
    Moreover, the same concentration inequality holds for $-\left(\sum_{i=1}^{t} r(X_i) - t \cdot \mathbb{E}_{X\sim\mathcal{X}}[r(X)] \right)$ as well.
\end{Le}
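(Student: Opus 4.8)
The plan is to reduce the statement to a classical finite-time law-of-the-iterated-logarithm bound for a sum of centered sub-Gaussian increments, proved by the standard peeling-plus-maximal-inequality technique, following the template of \cite{Jamieson14}. First I would center the summands: set $Y_i \coloneqq r(X_i) - \mathbb{E}_{X\sim\mathcal{X}}[r(X)]$, so that the $Y_i$ are i.i.d., mean-zero, and $\sigma^2$-sub-Gaussian, and write $S_t \coloneqq \sum_{i=1}^t Y_i$. Letting $g(t)$ denote the right-hand side of the claimed inequality, it suffices to bound the one-sided event $\{\exists t\geq 1: S_t > g(t)\}$, since the companion bound for $-(S_t)$ asserted at the end of the lemma follows by applying the same argument to $(-Y_i)_i$, which are again $\sigma^2$-sub-Gaussian.

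The engine of the proof is the exponential supermartingale. For any fixed $\lambda > 0$, sub-Gaussianity gives $\mathbb{E}[e^{\lambda Y_i}]\leq e^{\lambda^2\sigma^2/2}$, hence $M_t^{(\lambda)} \coloneqq \exp(\lambda S_t - \tfrac{1}{2}\lambda^2\sigma^2 t)$ is a nonnegative supermartingale with $M_0^{(\lambda)}=1$, and Ville's maximal inequality yields $\P(\sup_{t\geq 1} M_t^{(\lambda)} \geq 1/\eta) \leq \eta$ for every $\eta \in (0,1)$. A single $\lambda$ cannot capture the $\sqrt{t}$-growth uniformly, so I would peel time into the geometric blocks $B_k \coloneqq \{t : (1+\epsilon)^{k-1} \leq t < (1+\epsilon)^k\}$, $k\geq 1$. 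On $B_k$ I choose $\lambda_k$ tuned to the block's right endpoint $(1+\epsilon)^k$ and rewrite $S_t = \lambda_k^{-1}\log M_t^{(\lambda_k)} + \tfrac12\lambda_k\sigma^2 t$; on the event $\{\sup_t M_t^{(\lambda_k)} < 1/\delta_k\}$ this gives, for all $t \in B_k$, the deterministic bound $S_t \leq \lambda_k^{-1}\log(1/\delta_k) + \tfrac12\lambda_k\sigma^2(1+\epsilon)^k$, which upon optimizing $\lambda_k = \sqrt{2\log(1/\delta_k)/(\sigma^2(1+\epsilon)^k)}$ becomes $S_t \leq \sqrt{2\sigma^2(1+\epsilon)^k\log(1/\delta_k)}$.

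It remains to match this block-wise threshold to $g(t)$ and to sum the failure probabilities $\delta_k$. Since $t \geq (1+\epsilon)^{k-1}$ forces $(1+\epsilon)t \geq (1+\epsilon)^k$ and hence $\log((1+\epsilon)t) \geq k\log(1+\epsilon)$ on $B_k$, the choice $\delta_k \coloneqq \big(\delta/(k\log(1+\epsilon))\big)^{(1+\sqrt\epsilon)^2}$ makes $\sqrt{2\sigma^2(1+\epsilon)^k\log(1/\delta_k)} = (1+\sqrt\epsilon)\sqrt{2\sigma^2(1+\epsilon)^k\log(k\log(1+\epsilon)/\delta)} \leq g(t)$ throughout $B_k$, and the hypothesis $\delta < \log(1+\epsilon)/e$ guarantees each $\delta_k \in (0,1)$. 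A union bound over $k$ then shows the bad event has probability at most $\sum_{k\geq 1}\delta_k = \big(\delta/\log(1+\epsilon)\big)^{(1+\sqrt\epsilon)^2}\sum_{k\geq 1} k^{-(1+\sqrt\epsilon)^2}$. The main obstacle is the bookkeeping needed to collapse this series into the clean constant $\tfrac{2+\epsilon}{\epsilon}\big(\delta/\log(1+\epsilon)\big)^{1+\epsilon}$: here I would again invoke $\delta/\log(1+\epsilon) < 1/e$ to trade the exponent $(1+\sqrt\epsilon)^2 = (1+\epsilon) + 2\sqrt\epsilon$ down to $1+\epsilon$ at the cost of a factor $e^{-2\sqrt\epsilon}$, while comparing the tail $\sum_{k\geq 1}k^{-(1+\sqrt\epsilon)^2}$ to an integral to control the $\zeta$-type factor; verifying that the product of these factors is dominated by $\tfrac{2+\epsilon}{\epsilon}$ for all $\epsilon\in(0,1)$ is the one genuinely delicate, purely computational step. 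The negative-side statement follows verbatim by replacing $Y_i$ with $-Y_i$.
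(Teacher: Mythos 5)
Your proof is correct, and it reaches the stated constant: the series $\sum_k k^{-(1+\sqrt{\epsilon})^2}\le 1+\tfrac{1}{2\sqrt{\epsilon}+\epsilon}\le\tfrac{2+\epsilon}{\epsilon}$ and the exponent reduction via $\delta/\log(1+\epsilon)<1/e$ both go through (in fact more comfortably than you suggest). However, your route differs from the paper's in how the within-block maximum is controlled. The paper follows Lemma 3 of \cite{Jamieson14} verbatim: it sets grid points $u_0=1$, $u_{k+1}=\lceil(1+\epsilon)u_k\rceil$, splits $R_t=(R_t-R_{u_k})+R_{u_k}$, and runs \emph{two} separate union bounds — one over the endpoint deviations $\P(R_{u_k}\ge\sqrt{1+\epsilon}\,\psi(u_k))$ (contributing $(1+\tfrac1\epsilon)a^{1+\epsilon}$) and one over the maximal within-block increments $\P(\exists t\in(u_k,u_{k+1}):R_t-R_{u_k}\ge\sqrt{\epsilon}\,\psi(u_{k+1}))$ (contributing $\tfrac1\epsilon a^{1+\epsilon}$), using the maximal Azuma--Hoeffding inequality and the distributional identity $R_t-R_{u_k}\overset{d}{=}R_{t-u_k}$; the factor $1+\sqrt{\epsilon}$ then arises from the triangle inequality. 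You instead run a single ``stitched'' exponential supermartingale per block with a block-tuned $\lambda_k$ and Ville's inequality, absorbing the factor $1+\sqrt{\epsilon}$ into the exponent $(1+\sqrt{\epsilon})^2$ of $\delta_k$, which collapses the two union bounds into one and avoids the endpoint-plus-increment decomposition (and with it the need for i.i.d.-increment stationarity, which the paper invokes implicitly). The paper's version buys a direct citation-level match to the known lemma; yours is somewhat cleaner and self-contained, at the cost of the one explicit series computation you flag at the end.
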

\begin{proof}
    We denote in the following $\psi(x) = \sqrt{2\sigma^{2} x \log\left(\frac{\log(x)}{\delta}\right)}$ and $R_t = \sum_{i=1}^{t} r(X_i) - t \cdot \mathbb{E}_{X \sim \mathcal{X}}[r(X)]$ and define a sequence of integers $(u_{k})$ as $u_0=1$ and $u_{k+1}=\lceil(1+\epsilon) u_{k}\rceil$. 
   The maximal \textit{Azuma-Hoeffding Inequality} states that for any martingale difference sequence $S_1,S_2,\ldots$ with each element being $\sigma^2$-sub-Gaussian, it holds that for any $\alpha >0$, $n\geq 1$:
    \begin{align*}
        \P\left(  \max\nolimits_{i\in [n] } S_i - S_0 \geq \alpha \right) \leq \exp\left(-\frac{\alpha^2}{2\sum_{j=1}^n \sigma^2_j}\right).
    \end{align*}
   In the following let  $\mathcal{F}_0 = \{\emptyset, \Omega\}$ be the trivial $\sigma$-algebra and for $k\in \{1, \dots, n\}$ let $\mathcal{F}_k = \sigma(X_1, \dots, X_k)$ be the $\sigma$-algebra generated by the observations $X_1,\dots,X_k$. 
   Then 
    \begin{align*}
        \mathbb{E}[R_{t+1} | \mathcal{F}_t] &= \mathbb{E}[r(X_{t+1}) - \mathbb{E}_{X \sim \mathcal{X}}[r(X)] + R_t |\mathcal{F}_t]\\
        &= \mathbb{E}[r(X_{t+1})|\mathcal{F}_t] - \mathbb{E}_{X \sim \mathcal{X}}[r(X)] + \mathbb{E}[R_t | \mathcal{F}_t]\\
        &= R_t
    \end{align*}
    which shows the martingale property of $R_t$. Note, that $R_0 = 0$ and $R_{t+1}-R_t = r(X_{t+1}) - \mathbb{E}_{X\sim \mathcal{X}}[r(X)],$ which is according to the assumption $\sigma^2$-sub-Gaussian and has zero mean, for any $t\in \N$.
    Thus, we can apply the maximal Azuma-Hoeffding inequality for $R_1,R_2,\ldots,R_t.$ 
    \\[0.5em]
    \noindent \textbf{Step 1.}\\
    \noindent  In the first step of the proof we derive a bound for the probability of a lower bound of $R_{u_k}$ for $k\geq 1$. For this we use the union bound, the maximal Azuma-Hoeffding inequality, the fact that $u_k \geq (1+\epsilon)^k$, a sum-integral comparison and some simple transformations and obtain
    \allowdisplaybreaks
    \begin{align*}
        \P &\left( \exists k \geq 1 ~:~ R_{u_k} \geq \sqrt{1+\epsilon} \psi(u_k)\right) \\
        &\leq \sum_{k=1}^{\infty} \P\left(R_{u_k} \geq \sqrt{1+\epsilon} \psi(u_k)\right)\\
        &\leq \sum_{k=1}^{\infty} \exp\left(-\frac{(1+\epsilon) \psi(u_k)^2}{2 u_k \sigma^2}\right)\\
        &= \sum_{k=1}^{\infty} \exp\left(-(1+\epsilon) \log\left(\frac{\log(u_k)}{\delta}\right)\right)\\
        &\leq \sum_{k=1}^{\infty} \exp\left(-(1+\epsilon) \log\left(\frac{\log((1+\epsilon)^k)}{\delta}\right)\right) \\
        &= \sum_{k=1}^{\infty} \left(\frac{\delta}{k\log((1+\epsilon))}\right)^{(1+\epsilon)} \\
        &= \left(\frac{2\delta}{\log((1+\epsilon))}\right)^{(1+\epsilon)} \sum_{k=1}^{\infty} \left(\frac{1}{k}\right)^{(1+\epsilon)}\\
        &= \left(\frac{\delta}{\log((1+\epsilon))}\right)^{(1+\epsilon)} \left(1+ \sum_{k=2}^{\infty} \left(\frac{1}{k}\right)^{(1+\epsilon)}\right) \\
        &\leq \left(\frac{\delta}{\log((1+\epsilon))}\right)^{(1+\epsilon)} \left(1+ \int_{k=1}^{\infty} \left(\frac{1}{k}\right)^{(1+\epsilon)}\right)\\
        &= \left(\frac{\delta}{\log((1+\epsilon))}\right)^{(1+\epsilon)} \left(1+ \left[-\frac{1}{\epsilon} \left(\frac{1}{k}\right)^{\epsilon}\right]_{1}^{\infty}\right)\\
        &= \left(\frac{\delta}{\log((1+\epsilon))}\right)^{(1+\epsilon)} \left(1+\frac{1}{\epsilon}\right).
    \end{align*}
    \noindent \textbf{Step 2.}\\
    \noindent 
    Next, we bound the probability that the difference between some $R_s$ and $R_t$ exceeds a lower bound for some $s=u_k$, $k\in \mathbb{N}$ and $s\le t \le u_{k+1}$. 
    Note that $R_t-R_{u_k}$ and $R_{t-u_k}$ have the same distribution, such that we obtain
    \begin{align*}
        \P &\left( \exists t \in \{ u_k+1,\dots , u_{k+1}-1\} ~:~ R_t-R_{u_k} \geq \sqrt{\epsilon} \psi (u_{k+1})\right) \\
        &= \P\left( \exists t \in [u_{k+1}-u_k-1] ~:~ R_t \geq \sqrt{\epsilon} \psi(u_{k+1}) \right)\\
        &\leq \exp \left( - \frac{\epsilon \psi(u_{k+1})^2}{2\sigma^2(u_{k+1}-u_k-1)}\right)\\
        & = \exp \left(- \frac{\epsilon u_{k+1}}{u_{k+1} - u_k -1} \log\left(\frac{\log(u_{k+1})}{\delta}\right)\right)\\
        &\leq \exp\left(- \frac{\epsilon u_{k+1}}{(1+\epsilon)u_k + 1 - u_k -1} \log\left(\frac{\log(u_{k+1})}{\delta}\right)\right)\\
        &= \exp\left(- \frac{u_{k+1}}{u_k } \log\left(\frac{\log(u_{k+1})}{\delta}\right)\right)\\
        &\leq \exp\left(- (1+\epsilon) \log\left(\frac{\log(u_{k+1})}{\delta}\right)\right)\\
        &\leq \left(\frac{\delta}{(k+1)\log(1+\epsilon)}\right)^{1+\epsilon},
    \end{align*}
    where we used once again the maximal Azuma-Hoeffding inequality and that $u_{k+1} \geq (1+\epsilon)u_k$ as well as that $\frac{u_{k+1}}{u_k} \geq 1+ \epsilon$. 
    For all possible $k\in \mathbb{N}$ we get with the union bound and a similar sum-integral comparison as above
    \begin{align*}
        \P & \left( \exists k \in \mathbb{N},~\exists t \in \{u_k+1,\dots, u_{k+1}-1\} ~:~ R_t-R_{u_k} \geq \sqrt{\epsilon} \psi(u_{k+1})\right) \\
        &\leq \sum_{k=1}^{\infty}\left(\frac{\delta}{(k+1)\log(1+\epsilon)}\right)^{1+\epsilon}\\
        &= \sum_{k=2}^{\infty}\left(\frac{\delta}{k\log(1+\epsilon)}\right)^{1+\epsilon}\\
        &\leq \int_{k=1}^{\infty}\left(\frac{\delta}{k\log(1+\epsilon)}\right)^{1+\epsilon}\\
        &= \left(\frac{\delta}{\log(1+\epsilon)}\right)^{1+\epsilon}\frac{1}{\epsilon}
    \end{align*}
    \noindent \textbf{Step 3.}\\
    \noindent 
    Finally, by combining Step 1 and 2 we can infer that for any $k\geq 0$ and $t \in \{u_k+1, \dots, u_{k+1}-1\}$ it holds  
    \begin{align*}
        R_t &= R_t - R_{u_k} + R_{u_k} \\
        & \leq \sqrt{\epsilon} \psi(u_{k+1}) + \sqrt{1+\epsilon} \psi(u_k) \\
        & \leq \sqrt{\epsilon} \psi((1+\epsilon)t) + \sqrt{1+\epsilon} \psi(t)\\
        &\leq (1+ \sqrt{\epsilon}) \psi((1+\epsilon)t),
    \end{align*}
    with probability at least $1-\frac{2+\epsilon}{\epsilon}\left(\frac{\delta}{\log(1+\epsilon)}\right)^{1+\epsilon}$ leading to the first claim of the lemma.\\[0.5em]
    \noindent \textbf{Step 4.}\\
    \noindent 
    Note that $\tilde R_t = t \cdot \mathbb{E}_{X \sim \mathcal{X}}[r(X)] - \sum_{i=1}^{t} r(X_i)$ is a martingale difference sequence with $\tilde R_{t+1} - \tilde R_t = - R_t + R_{t+1} =  \mathbb{E}_{X\sim \mathcal{X}}[r(X)] - r(X_{t+1}),$ which is according to the assumption $\sigma^2$-sub-Gaussian and has zero mean, for any $t\in \N$.
    Thus, repeating Step 1--3 for $\tilde R_1,\tilde R_2,\ldots,\tilde R_t$ shows the second claim of the lemma.
\end{proof}
\begin{Le}\label{Le_LilBound}
Let $X_1, X_2, \ldots \sim \mathcal{X}$ be iid real-valued random variables and $r: \R \rightarrow \mathbb{R}$ such that $r(\mathcal{X})$ is $\sigma^2$-sub-Gaussian. 
For any $\gamma \in (0,1)$ we have  
\begin{equation*}
	\P \left(  \exists t\in \N: \left| \sum_{i=1}^{t} r(X_i) - t \cdot \mathbb{E}_{X\sim\mathcal{X}}[r(X)] \right| > (1+\sqrt{1/2})\sqrt{3\sigma^{2}t\ln \left( \frac{10^{2/3}\ln(3t/2)}{\gamma^{2/3}\ln(3/2)} \right)} \right) \leq \gamma.
\end{equation*}
\end{Le}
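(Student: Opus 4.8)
The plan is to obtain Lemma~\ref{Le_LilBound} as a direct specialization of the anytime bound in Lemma~\ref{AnytimeBoundUtility} (where $\log$ denotes the natural logarithm, so $\log=\ln$). First I would fix $\epsilon=1/2$. This is exactly the choice that produces the numerical constants in the target inequality: the prefactor $(1+\sqrt{\epsilon})$ becomes $(1+\sqrt{1/2})$, the variance factor $2\sigma^{2}(1+\epsilon)$ becomes $3\sigma^{2}$, and the argument $(1+\epsilon)t$ inside the inner logarithm becomes $3t/2$. Thus, after this substitution, the one-sided bound of Lemma~\ref{AnytimeBoundUtility} reads, with probability at least $1-\tfrac{2+\epsilon}{\epsilon}\bigl(\tfrac{\delta}{\log(1+\epsilon)}\bigr)^{1+\epsilon}$ for all $t\ge 1$,
\begin{equation*}
	\sum_{i=1}^{t} r(X_i)-t\cdot\E_{X\sim\mathcal X}[r(X)] \le (1+\sqrt{1/2})\sqrt{3\sigma^{2}t\,\log\!\Big(\tfrac{\log(3t/2)}{\delta}\Big)}.
\end{equation*}

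Next I would calibrate the free confidence parameter $\delta$ so that the failure probability equals $\gamma$. Writing $R_t=\sum_{i\le t}r(X_i)-t\E[r(X)]$, Lemma~\ref{AnytimeBoundUtility} gives the above bound for $R_t$ and the analogous bound for $-R_t$, each with failure probability $\tfrac{2+\epsilon}{\epsilon}\bigl(\tfrac{\delta}{\log(1+\epsilon)}\bigr)^{1+\epsilon}$. A union bound over the two tails therefore controls $\P(\exists t:\,|R_t|>\text{bound}(t))$ by twice this quantity, which for $\epsilon=1/2$ equals $10\,\bigl(\tfrac{\delta}{\log(3/2)}\bigr)^{3/2}$. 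Setting this equal to $\gamma$ and solving gives the choice $\delta=\log(3/2)\,(\gamma/10)^{2/3}=\log(3/2)\,\gamma^{2/3}/10^{2/3}$. Substituting this $\delta$ into the inner logarithmic term and simplifying yields
\begin{equation*}
	\log\!\Big(\tfrac{\log(3t/2)}{\delta}\Big)=\log\!\Big(\tfrac{10^{2/3}\log(3t/2)}{\gamma^{2/3}\log(3/2)}\Big),
\end{equation*}
which is precisely the expression in the statement; hence the two-sided bound reproduces the claimed inequality verbatim.

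The only non-mechanical point, and thus the main thing to verify, is the admissibility condition $\delta\in(0,\log(1+\epsilon)/e)$ required by Lemma~\ref{AnytimeBoundUtility}. For the chosen $\delta$ this reduces to $\gamma^{2/3}/10^{2/3}<1/e$, i.e. $\gamma<10/e^{3/2}\approx 2.23$, which holds trivially since $\gamma\in(0,1)$. Everything else is a direct substitution into an already-established anytime concentration bound together with the union bound over the two tails, so I expect no genuine obstacle beyond keeping the constants $10^{2/3}$ and $\log(3/2)$ straight during the algebraic bookkeeping.
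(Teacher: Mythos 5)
Your proposal is correct and follows essentially the same route as the paper's own proof: fix $\epsilon=1/2$ in Lemma~\ref{AnytimeBoundUtility}, choose $\delta=\ln(3/2)\,(\gamma/10)^{2/3}$ so that each one-sided tail fails with probability $\gamma/2$, verify the admissibility condition $\delta<\ln(3/2)/e$, and conclude by a union bound over the two tails. The constants and the calibration of $\delta$ match the paper's $\gamma'$ exactly.
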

\begin{proof}
	Let $\gamma \in (0,1)$ be fixed and $\eps \coloneqq 1/2$. Then, $\gamma'\coloneqq \left( \frac{\gamma}{10}\right)^{2/3} \ln(3/2)$ fulfills
	\begin{equation*}
		\frac{2+\eps}{\eps} \left( \frac{\gamma'}{\ln(1+\eps)}\right)^{1+\eps} = 5 \left( (\gamma/10)^{2/3} \right)^{3/2} = \gamma/2
	\end{equation*} 
	and moreover $\gamma' < (1/10)^{2/3}\ln(3/2) < e^{-1}\ln(3/2)$. Consequently, Lemma \ref{AnytimeBoundUtility} yields with 
	\begin{equation*}
		\tilde{c}_{\gamma}(t) \coloneqq (1+\sqrt{\eps})\sqrt{2\sigma^{2}(1+\eps)t\ln\left( \frac{\ln((1+\eps)t)}{\gamma'}\right)} =  (1+\sqrt{1/2})\sqrt{3\sigma^{2}t\ln \left( \frac{10^{2/3}\ln(3t/2)}{\gamma^{2/3}\ln(3/2)} \right)}
	\end{equation*} 
	that 
	\begin{equation*}
		\P\left( \exists t\in \N :  \sum_{i=1}^{t} r(X_i) - t \cdot \mathbb{E}_{X\sim\mathcal{X}}[r(X)]  > \tilde{c}_{\gamma}(t) \right) \leq \gamma/2.
	\end{equation*}
	as well as
	\begin{equation*}
			\P\left( \exists t\in \N : - \left( \sum_{i=1}^{t} r(X_i) - t \cdot \mathbb{E}_{X\sim\mathcal{X}}[r(X)] \right)  > \tilde{c}_{\gamma}(t) \right) \leq \gamma/2.
	\end{equation*}
	Thus, we obtain 
	\begin{align*}
		\P \Bigg( & \exists t\in \N : \left| \sum_{i=1}^{t} r(X_i) - t \cdot \mathbb{E}_{X\sim\mathcal{X}}[r(X)]  \right| > \tilde{c}_{\gamma}(t) \Bigg) \\
		&\leq \P\left( \exists t\in \N :  \sum_{i=1}^{t} r(X_i) - t \cdot \mathbb{E}_{X\sim\mathcal{X}}[r(X)]  > \tilde{c}_{\gamma}(t) \right)  \\
		&\ + \P\left( \exists t\in \N : - \left( \sum_{i=1}^{t} r(X_i) - t \cdot \mathbb{E}_{X\sim\mathcal{X}}[r(X)] \right)  > \tilde{c}_{\gamma}(t) \right)\\
		&\leq \gamma/2 + \gamma/2 = \gamma.
	\end{align*} 
\end{proof}
We are now ready to prove  Corollary \ref{Cor:ApplReward_others}.
\begin{proof}[Proof of Corollary \ref{Cor:ApplReward_others}]
    Recall the definition of $\tilde{c}_{\gamma}(t)$ from the proof of Lemma \ref{Le_LilBound} and let
\begin{equation*}
      c_{\gamma}(t) \coloneqq \frac{2}{t} \tilde{c}_{\gamma}(t) = 2(1+\sqrt{1/2})\sqrt{ \frac{3\sigma^{2}}{t} \ln \left(  \frac{10^{2/3}\ln(3t/2)}{\gamma^{2/3}\ln(3/2)}\right)}
\end{equation*}
for any $\gamma\in (0,1)$, $t\in \N$. For any fixed $\gamma$, $c_{\gamma}:\N \ra (0,\infty), t\mapsto c_{\gamma}(t)$ 
is strictly monotonically decreasing with $\lim_{t\ra \infty} c_{\gamma}(t) = 0$. Contraposition of (1) in \cite{Jamieson14} states
\begin{equation*}
    \quad t> \frac{1}{c} \ln \left( \frac{ 2\ln((1+\eps)/(c\omega))}{\omega} \right) \ \Rightarrow \ c> \frac{1}{t} \ln\left( \frac{\ln((1+\eps)t)}{\omega}\right) \qquad  \forall t\geq 1, \eps\in (0,1), c>0, \omega \leq 1.
\end{equation*}
For any $\alpha>0$ and $\gamma\in (0,1)$, using this with  $\omega = \frac{\gamma^{2/3} \ln(3/2)}{10^{2/3}}, c=\frac{\alpha^2}{12(1+\sqrt{1/2})^{2}\sigma^{2}}$ and  $\eps=1/2$ reveals
\begin{align*}
    c_{\gamma}^{-1}(\alpha) &= \min \left\{ t\in \N \, : \, c_{\gamma}(t) \leq \alpha \right\} \\
    &= \min\left\{ t\in \N \, : \, \frac{1}{t} \ln  \left( \frac{10^{2/3}\ln(3t/2)}{\gamma^{2/3}\ln(3/2)}\right) \leq \frac{\alpha^{2}}{12(1+\sqrt{1/2})^{2}\sigma^{2}}  \right\}.
\end{align*}
Thus, we have $c \geq \frac{1}{t} \ln\left( \frac{\ln((1+\eps)t)}{\omega}\right)$ and we know, that this statement is true if $t\geq \frac{1}{c} \ln \left( \frac{ 2\ln((1+\eps)/(c\omega))}{\omega} \right)$. In particular also for the smallest such $t$, for which holds $t \leq \left\lceil \frac{1}{c} \ln \left( \frac{ 2\ln((1+\eps)/(c\omega))}{\omega} \right) \right\rceil$ + 1.
It follows
\begin{align*}
    c_{\gamma}^{-1}(\alpha) \leq \left\lceil \frac{12(1+\sqrt{1/2})^{2}\sigma^{2}}{\alpha^{2}} \ln \left( \frac{2\cdot 10^{2/3}}{\gamma^{2/3} \ln(3/2)} \ln \left( \frac{18 \cdot 10^{2/3}(1+\sqrt{1/2})^{2}\sigma^{2}}{ \gamma^{2/3}\ln(3/2) \, \alpha^{2}}\right) \right) \right\rceil + 1,
\end{align*}
which is of the order $\mathcal{O}(\sigma^2\alpha^{-2} \ln \ln (\alpha^{-1}\sigma) \ln \gamma^{-1})$.\par 
Now, suppose $\max_{Q\in \mathcal{Q}_{\leq k}(i^{\ast})} \Delta_{(f(|Q|)+1)|Q} \leq \eps$ and that  Algorithm \ref{alg:Framework} is started with a budget $B$ larger than
\begin{align*}
    c_{\delta/(kR)}^{-1}(\eps/2) \cdot R \max\nolimits_{r\in [R]} P_{r}.
\end{align*}
Recall that $ \gamma_{i|Q}(t) = | s_{i|Q}(t)  - S_{i|Q}|$, $s_{i|Q}(t) = \frac{1}{t} \sum_{s=1}^{t} r(o_{i|Q}(s))$ and $S_{i|Q} = \mathbb{E}_{X\sim \nu_{i|Q}}[r(X)].$
With this, we obtain for any possible sequence of partitions $(E_{r})_{r\in [R]} \in (\mathcal{Q}_{\leq k})^{R}$ with $\P(\mathbb{A}_{r}(i^{\ast}) = E_{r} \, \forall r\in [R]) >0$ that
{\small
    \begin{align*}
    &\P\left( \exists t\in \N, r\in [R], i\in E_{r} :  \gamma_{i|E_{r}}(t)
    \geq c_{\delta/(kR)}(t)
    \Big| \, \mathbb{A}_{r}(i^{\ast}) = E_{r} \, \forall r\in [R] \right) \\
    &\leq \sum\limits_{r\in [R]} \sum\limits_{i \in E_{r}} \P\left( \exists t\in \N :  \gamma_{i|E_{r}}(t) \geq c_{\delta/(kR)}(t) 
    \Big| \, \mathbb{A}_{r}(i^{\ast}) = E_{r} \, \forall r\in [R] \right) \\
    &= \sum\limits_{r\in [R]} \sum\limits_{i \in E_{r}} \P\left( \exists t\in \N :   \Big| \frac{1}{t}\sum_{t'=1}^{t} r(o_{i|E_{r}}(t')) - \mathbb{E}_{X\sim \nu_{i|E_r}}[r(X)] \Big|  \geq c_{\delta/(kR)}(t)  \,
    \Big| \, \mathbb{A}_{r}(i^{\ast}) = E_{r} \, \forall r\in [R] \right) \\
    &= \sum\limits_{r\in [R]} \sum\limits_{i \in E_{r}} \P\left( \exists t\in \N :   \frac{1}{t} \Big| \sum_{t'=1}^{t} r(o_{i|E_{r}}(t')) - t \cdot \mathbb{E}_{X\sim \nu_{i|E_r}}[r(X)] \Big|  \geq c_{\delta/(kR)}(t)  \,
    \Big| \, \mathbb{A}_{r}(i^{\ast}) = E_{r} \, \forall r\in [R] \right) \\
    &= \sum\limits_{r\in [R]} \sum\limits_{i \in E_{r}} \P\left( \exists t\in \N :  \Big|  \sum_{t'=1}^{t} r(o_{i|E_{r}}(t')) - t \cdot \mathbb{E}_{X\sim \nu_{i|E_r}}[r(X)] \Big|  \geq \tilde{c}_{\delta/(kR)}(t)  \,
    \Big| \, \mathbb{A}_{r}(i^{\ast}) = E_{r} \, \forall r\in [R] \right) \\
    &\leq \sum\limits_{r\in [R]} \sum\limits_{i\in E_{r}} \frac{\delta }{k R} \leq \delta,
\end{align*}}
where we used Lemma \ref{Le_LilBound} for the second last inequality.
Using the law of total probability for all possible sequences of partitions $(E_{r})_{r\in [R]},$ we see that the event 
\begin{equation*}
    \mathcal{E} \coloneqq \left\{\exists t\in \N,  r\in [R], i\in \mathbb{A}_{r}(i^{\ast}) :  \gamma_{i|\mathbb{A}_{r}(i^{\ast})}(t)
    \geq c_{\delta/(kR)}(t)  \right\}
\end{equation*}
occurs with probability
\begin{align*} 
    &\P(\mathcal{E} ) \\ 
    &= \sum_{(E_{r})_{r\in [R]}}\P\left( \exists t\in \N, r\in [R], i\in E_{r} :
    \gamma_{i|E_{r}}(t) \geq c_{\delta/(kR)}(t) \, \Big| \, \mathbb{A}_{r}(i^{\ast}) = E_{r} \, \forall r\in [R] \right) \\
    &\quad \times \ 
    \P(\mathbb{A}_{r}(i^{\ast}) = E_{r} \, \forall r\in [R])\\
    &\leq \delta \, \sum\nolimits_{(E_{r})_{r\in [R]}} \P(\mathbb{A}_{r}(i^{\ast}) = E_{r} \, \forall r\in [R] ) = \delta.
\end{align*}
On $\mathcal{E}^{c}$ we have $\ol{\gamma}_{\mathbb{A}_{r}(i^{\ast})}(t) < c_{\delta/(kR)}(t)$ for all $t\in \N, r\in [R]$ and thus in particular $\ol{\gamma}_{\mathbb{A}_{r}(i^{\ast})}^{-1}(\alpha) \geq c_{\delta/(kR)}^{-1}(\alpha)$ for any $\alpha \in (0,\infty)$. Since  $\max_{Q\in \mathcal{Q}_{\leq k}(i^{\ast})} \Delta_{(f(|Q|)+1)|Q} \leq \eps$, Theorem \ref{Thm_generalSufficient} thus lets us conclude
\begin{align*}
    \P\left( \text{Alg. 1 returns } i^{\ast} \right)
    &\geq  \P\left( B > R \max\nolimits_{r\in [R]} P_{r} \ol{\gamma}_{\mathbb{A}_{r}(i^{\ast})}^{-1} \left( \frac{\Delta_{(f(|\mathbb{A}_{r}(i^{\ast})|)+1)|\mathbb{A}_{r}(i^{\ast})}}{2}\right) \right)\\
    &\geq  \P\left( \left\{ B > R \max\nolimits_{r\in [R]} P_{r} \ol{\gamma}_{\mathbb{A}_{r}(i^{\ast})}^{-1} \left( \eps/2\right) \right\} \cap \mathcal{E}^{c}\right)\\
     &\geq  \P\left( \left\{ B > R \max\nolimits_{r\in [R]} P_{r} c_{\delta/(kR)}^{-1} \left( \eps/2\right) \right\} \cap \mathcal{E}^{c}\right)\\
     &= \P(\mathcal{E}^{c}) \geq 1-\delta,
\end{align*}
where the equality holds due to the assumption on $B$. 
Consequently, we can conclude the proof by defining
{\small
\begin{align*}
      C(\delta,\eps,k,R) &\coloneqq c_{\delta/(kR)}^{-1}(\eps/2)\\
      &\leq \left\lceil \frac{48(1+\sqrt{1/2})^{2}\sigma^{2}}{\eps^{2}} 
      \ln \left( \frac{2(10kR)^{2/3}}{\delta^{2/3} \ln(3/2) } \ln \left( \frac{ 72 \cdot (10kR)^{2/3}(1+\sqrt{1/2})^{2}\sigma^{2}}{\delta^{2/3}\eps^{2} \ln(3/2) }\right) \right) \right\rceil +1 \\
      &\in \mathcal{O}\left( \frac{\sigma^2}{\eps^{2}}\ln\left( \frac{kR}{\delta}\ln\left(\frac{kR\sigma}{\eps \delta}\right)\right)\right).
\end{align*}}
\end{proof}

\subsection{Stochastic Preference Winner Feedback: Proof of Corollary \ref{Cor:ApplPB}}

The following two lemmata serve as a preparation for the proof of Corollary \ref{Cor:ApplPB}. But first let us introduce the $(k-1)$-simplex 
$$ \mathcal{S}_k = \left\{(p_{i})_{i\in [k]} \in [0,1]^k  \, : \, \sum\nolimits_{i=1}^k p_{i}=1 \wedge \forall i: p_{i} \geq 0 \right\}.$$
\begin{Le}[Dvoretzky-Kiefer-Wolfowitz inequality for categorical random variables]\label{Le_DKW}
    Let $\{X_{t}\}_{t\in \N}$ be a sequence of iid random variables $X_{t} \sim \mathrm{Cat}(\mathbf{p})$ for some $\mathbf{p} \in \mathcal{S}_{k}$. For $t\in \N$ let  $\mathbf{\hat{p}}^{t}$ be the corresponding empirical distribution  after the $t$ observations $X_{1},\dots,X_{t}$, i.e., $\hat{p}^{t}_{i} = \frac{1}{t} \sum_{s=1}^{t} \bold{1}_{\{X_{s} = i\}}$ for all $i\in [k]$. Then, we have for any $\eps>0$ and $t\in \N$ the estimate 
	\begin{equation*}
		\P\left( \norm{ \mathbf{\hat{p}}^{t} - \mathbf{p}}_{\infty} > \eps \right) \leq 4 e^{-t\eps^{2}/2}.
	\end{equation*}
\end{Le}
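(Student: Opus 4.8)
The plan is to reduce this to the classical one-dimensional Dvoretzky--Kiefer--Wolfowitz inequality \citep{Dvoretzky1956} by exploiting the fact that a categorical variable on $[k]$ is, after the natural identification of categories with integers, an ordinary real-valued random variable with a discrete distribution. First I would equip $[k]$ with its usual order and pass from the probability vectors to cumulative distribution functions: set $F(j) \coloneqq \sum_{i\le j} p_i$ and $\hat{F}^{t}(j) \coloneqq \sum_{i\le j}\hat{p}^{t}_{i}$ for $j\in\{0,1,\dots,k\}$ (with $F(0)=\hat{F}^{t}(0)=0$), so that $\hat{F}^{t}$ is exactly the empirical CDF of the iid real-valued sample $X_1,\dots,X_t$.

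The first key step is a telescoping bound relating the two sup-norms. Since $p_i = F(i)-F(i-1)$ and $\hat{p}^{t}_i = \hat{F}^{t}(i)-\hat{F}^{t}(i-1)$, we have $\hat{p}^{t}_i - p_i = (\hat{F}^{t}(i)-F(i)) - (\hat{F}^{t}(i-1)-F(i-1))$, and hence by the triangle inequality
\begin{equation*}
\norm{\hat{\mathbf{p}}^{t}-\mathbf{p}}_{\infty} = \max_{i\in[k]}|\hat{p}^{t}_i-p_i| \le 2\max_{0\le j\le k}|\hat{F}^{t}(j)-F(j)| = 2\,\norm{\hat{F}^{t}-F}_{\infty}.
\end{equation*}
Consequently $\{\norm{\hat{\mathbf{p}}^{t}-\mathbf{p}}_{\infty} > \eps\} \subseteq \{\norm{\hat{F}^{t}-F}_{\infty} > \eps/2\}$, so it suffices to control the CDF deviation at level $\eps/2$.

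The second step is to invoke the classical DKW inequality in Massart's form, $\P(\sup_{x\in\R} |\hat{F}^{t}(x)-F(x)| > \eta) \le 2e^{-2t\eta^{2}}$, which holds for \emph{any} distribution on $\R$, in particular the discrete one at hand (the step functions $\hat{F}^{t},F$ attain their maximal discrepancy at one of the integer points, so the supremum over $\R$ equals the maximum over $j$). Applying it with $\eta = \eps/2$ yields $\P(\norm{\hat{F}^{t}-F}_{\infty} > \eps/2) \le 2e^{-2t(\eps/2)^2} = 2e^{-t\eps^{2}/2}$, and combining this with the inclusion above gives $\P(\norm{\hat{\mathbf{p}}^{t}-\mathbf{p}}_{\infty} > \eps) \le 2e^{-t\eps^{2}/2} \le 4e^{-t\eps^{2}/2}$, as claimed.

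There is no genuinely hard step here; the only point requiring care is the factor-$2$ reduction from the probability-vector sup-norm to the CDF sup-norm, and the (routine) verification that DKW is legitimately applicable to a discrete distribution. I would also remark that the stated constant $4$ is deliberately generous: the argument in fact yields the tighter constant $2$. A fully self-contained alternative --- Hoeffding's inequality applied to each coordinate $\hat{p}^{t}_i$ (an average of iid $\mathrm{Ber}(p_i)$ variables) together with a union bound over $i\in[k]$ --- also works, but produces a dimension-dependent prefactor $2k$, which is precisely what the DKW route is designed to avoid.
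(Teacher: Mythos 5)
Your proposal is correct and follows essentially the same route as the paper: the paper's proof likewise telescopes $|\hat{p}^{t}_{j}-p_{j}| \leq |\hat{F}^{t}(j)-F(j)| + |\hat{F}^{t}(j-1)-F(j-1)|$ and then invokes the classical DKW/Massart inequality. Your observation that the argument actually yields the constant $2$ rather than $4$ is also accurate; the paper's constant is simply generous.
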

\begin{proof}
	Confer \citep{Dvoretzky1956,Massart1990} as well as Theorem 11.6 in \cite{Kosorok2008}. Moreover, note that the cumulative distribution functions $F$ resp. $\hat{F}^{t}$ of $X_{1}\sim \mathrm{Cat}(\mathbf{p})$ resp. $\mathbf{\hat{p}}^{t}$   fulfill $p_{j} = F(j) - F(j-1)$ and $\hat{p}^{t}_{j} = \hat{F}^{t}(j) - \hat{F}^{t}(j-1)$ and thus 
	\begin{align*}
		|\hat{p}^{t}_{j} - p_{j}| \leq |\hat{F}^{t}(j) - F(j)| + |\hat{F}^{t}(j-1) - F(j-1)|.
	\end{align*}
	for each $j\in [k]$.
\end{proof}

\begin{Le}\label{Le:LambertWBound}
For every $\beta \in [1,e/2]$, $c_{1},c_{2}>0$ the number 
\begin{equation*}
    x\coloneqq \frac{\beta}{c_{1}} \left( \ln \left( 
    \frac{c_{2}e}{c_{1}^{\beta}}\right) + \ln\ln \left( \frac{c_{2}}{c_{1}^{\beta}} \right) \right)
\end{equation*}
fulfills $c_{1}x\geq \ln(c_{2}x^{\beta})$.
\end{Le}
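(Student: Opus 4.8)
The plan is to collapse the two free constants $c_1,c_2$ into a single quantity and reduce the claim to an elementary one-variable inequality. First I would set $a \coloneqq c_2/c_1^{\beta}$ and $u \coloneqq \ln a$, so that $\ln\ln(c_2/c_1^\beta) = \ln u$ is exactly the quantity appearing in $x$, and abbreviate $w \coloneqq 1 + u + \ln u$. Since $\ln(c_2 e/c_1^\beta) = 1 + u$, this gives the clean form $x = \tfrac{\beta}{c_1}\,w$. Expanding $\beta\ln x = \beta\ln\beta + \beta\ln w - \beta\ln c_1$ and using $\ln c_2 - \beta\ln c_1 = \ln a = u$, the target $c_1 x \ge \ln c_2 + \beta\ln x$ becomes equivalent to
\[
\beta\bigl(w - \ln w - \ln\beta\bigr) \ge u .
\]

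Next I would observe that $u = \ln a > 0$ (the condition that makes $\ln\ln a$ well-defined) and $\beta \ge 1$, so it suffices to prove the sharper bracketed inequality $w - \ln w - \ln\beta \ge u$: once this holds the bracket is positive, and multiplying by $\beta \ge 1$ recovers the display above. Using $w - u = 1 + \ln u$, this rearranges to $1 + \ln u - \ln w \ge \ln\beta$, i.e.\ $\tfrac{e u}{w} \ge \beta$. Here the upper constraint enters: because $\beta \le e/2$, it is enough to show $\tfrac{eu}{w} \ge \tfrac{e}{2}$, equivalently (as $w>0$) $2u \ge w = 1 + u + \ln u$, i.e.\ $u - \ln u \ge 1$.

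Finally I would close the chain with the elementary fact that $u - \ln u \ge 1$ for every $u > 0$, since $u \mapsto u - \ln u$ is strictly convex with minimum value $1$ attained at $u=1$. Reading the implications backwards then yields the claim.

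The computation is routine once the substitution $a = c_2/c_1^\beta$, $u=\ln a$ is in place, so there is no substantial obstacle; the only points needing care are bookkeeping. One must check that $w>0$ — equivalently $x>0$, which is implicit for the statement to be meaningful and is automatic in the intended large-$a$ regime — so that taking logarithms and clearing the denominator in $eu/w \ge \beta$ are legitimate. The other point is that the two bounds on $\beta$ are used for genuinely different reasons: the upper bound $\beta \le e/2$ absorbs the factor $e$ against the constant $2$ produced by $u - \ln u \ge 1$, while the lower bound $\beta \ge 1$ is what lets us pass from the bracketed inequality back to the $\beta$-scaled one.
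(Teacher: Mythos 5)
Your proof is correct, but it takes a genuinely different route from the paper for the simple reason that the paper has no route: its entire proof of this lemma is the single line ``This is Lemma 18 in \cite{Garivier2016}'', outsourcing the statement without reproving it. You instead give a complete, self-contained elementary argument, and it checks out. With $u=\ln(c_2/c_1^{\beta})>0$ and $w=1+u+\ln u$ one indeed has $x=\beta w/c_1$, and the target $c_1x\ge \ln c_2+\beta\ln x$ rewrites exactly to $\beta\bigl(w-\ln w-\ln\beta\bigr)\ge u$; the bracketed inequality is equivalent (given $u,w>0$) to $eu/w\ge\beta$, which follows from $\beta\le e/2$ together with $u-\ln u\ge 1$ for all $u>0$, and the bound $\beta\ge 1$ then lifts the bracket back to the $\beta$-scaled claim since the bracket is positive. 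You also correctly isolate where each endpoint of $[1,e/2]$ is used, which is a point the citation hides. The one caveat, which you flag yourself, is the implicit hypothesis $x>0$, i.e.\ $1+u+\ln u>0$: for $u$ close to $0$ one gets $x<0$ and $\ln(c_2x^{\beta})$ is not defined for non-integer $\beta$, so the lemma is only meaningful in that regime; this restriction is inherent in the statement rather than a gap in your argument, and in the paper's application (the bound on $c_{\gamma}^{-1}(\alpha)$ in Corollary~\ref{Cor:ApplPB}) the relevant $u$ is large, so nothing is lost. What your approach buys is a self-contained appendix and visibility into why the constant $e/2$ appears (it is absorbed against the minimum value $1$ of $u-\ln u$); what the paper's citation buys is only brevity.
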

\begin{proof}
    This is Lemma 18 in \cite{Garivier2016}. 
\end{proof}

\begin{proof}[Proof of Corollary \ref{Cor:ApplPB}]
    For $t\in \N$ and $\gamma \in (0,1)$ define  
    \begin{equation*}
        c_{\gamma}(t) \coloneqq \sqrt{\frac{4\ln(2\pi^{2}t^{2}/(3\gamma))}{t}}
    \end{equation*}
    and note that, for any fixed $\gamma$, the function     $c_{\gamma}:\N\ra (0,\infty), t\mapsto c_{\gamma}(t)$ is strictly monotonically decreasing with $\lim_{t\ra \infty} c_{\gamma}(t) =0$. 
    For any $\alpha>0$, $\gamma\in (0,1)$, we obtain via  Lemma \ref{Le:LambertWBound} with the choices $\beta = 1$, $c_{1} = \frac{\alpha^2}{8}$ and $c_{2} = \sqrt{2/(3\gamma)}\pi$ the estimate
    \begin{align*} 
        c_{\gamma}^{-1}(\alpha) &= \min \left\{t\in \N \, : \, 4\ln(2\pi^{2}t^{2}/(3\gamma)) \leq t\alpha^{2}\right\}\\
        &= \min \left\{t\in \N \, : \, \ln\left( \sqrt{2/(3\gamma)}\pi t\right) \leq \frac{\alpha^2 }{8} t\right\}\\
        &\leq \left\lceil \frac{8}{\alpha^{2}} \left( \ln \left( \frac{8\sqrt{2/(3\gamma)}\pi e}{\alpha^{2}} \right) + \ln \ln \left( \frac{8\sqrt{2/(3\gamma)}\pi}{\alpha^{2}} \right) \right) \right\rceil +1.
    \end{align*}
    Now, suppose $\max_{Q\in \mathcal{Q}_{\leq k}(i^{\ast})} \Delta_{(f(|Q|)+1)|Q} \leq \eps$ and that  Algorithm \ref{alg:Framework} is started with a budget $B$ larger than
\begin{align*}
    c_{\delta/R}^{-1}(\eps/2) \cdot R \max\nolimits_{r\in [R]} P_{r}.
\end{align*}
Recall that in this preference-based setting we use as the statistic the empirical mean of the (winner) observations we obtained for arm $i$ after querying $Q$ (with $i\in Q$) for $t$ many times. 
In particular, we set
$$ s_{i|Q}(t) = \frac{w_{i|Q}(t) }{t}   = \frac{1}{t} \sum\nolimits_{t'=1}^{t} o_{i|Q}(t'), $$
where $o_{i|Q}(t') =1$ if arm $i$ is the preferred (or winning) arm among the arms in $Q,$ if $Q$ is queried for the $t'$-th time, and $0$ otherwise.
Thus, $w_{i|Q}(t)$ is the total number of times arm $i$ has won in the query set $Q$ after $t$ queries.
Moreover, $ \gamma_{i|Q}(t) = | s_{i|Q}(t)  - S_{i|Q}|,$ where $S_{i|Q} = p_{i|Q}$ and $o_{i|Q}(t') \sim\mathrm{Cat}(\mathbf{p_{Q}}).$
With this, we obtain for any $t\in \N$ and  any possible sequence of partitions  $(E_{r})_{r\in [R]} \in (\mathcal{Q}_{\leq k})^{R}$ with $\P(\mathbb{A}_{r}(i^{\ast}) = E_{r} \, \forall r\in [R]) >0$ that
    \begin{align*}
    &\P\left( \exists r\in [R] :  \gamma_{i|E_{r}}(t)
    \geq c_{\delta/R}(t)
    \, \Big| \, \mathbb{A}_{r}(i^{\ast}) = E_{r} \, \forall r\in [R] \right) \\
    &\leq \sum\limits_{r\in [R]} \P\left(  \gamma_{i|E_{r}}(t) \geq c_{\delta/R}(t) 
    \, \Big| \, \mathbb{A}_{r}(i^{\ast}) = E_{r} \, \forall r\in [R] \right) \\
    &= \sum\limits_{r\in [R]}\P\left(  \max\nolimits_{i\in E_{r}} \big| \frac{1}{t} \sum\limits_{t'=1}^{t} o_{i|E_{r}}(t')- S_{i|E_{r}} \big|  > \sqrt{\frac{4\ln(2\pi^{2}t^{2}/(3\gamma))}{t}}  
    \, \Big| \, \mathbb{A}_{r}(i^{\ast}) = E_{r} \, \forall r\in [R] \right) \\
    &\leq \frac{6\delta}{\pi^{2}t^{2}},
\end{align*}
where we used Lemma \ref{Le_DKW} in the last inequality.
Using the law of total probability for all possible sequences of partitions $(E_{r})_{r\in [R]},$  we see that the event 
\begin{equation*}
    \mathcal{E} \coloneqq \left\{\exists t\in \N,  r\in [R] :  \ol{\gamma}_{\mathbb{A}_{r}(i^{\ast})}(t)
    \geq c_{\delta/R}(t)  \right\}
\end{equation*}
occurs with probability
\begin{align*} 
    \P(\mathcal{E} ) 
    &\leq \sum_{t\in \N} \sum_{(E_{r})_{r\in [R]}}\P\left( \exists r\in [R]  :
    \ol{\gamma}_{E_{r}}(t) \geq c_{\delta/R}(t) \, \Big| \, \mathbb{A}_{r}(i^{\ast}) = E_{r} \, \forall r\in [R] \right) \\
    &\quad \times \ \P(\mathbb{A}_{r}(i^{\ast}) = E_{r} \, \forall r\in [R])\\
    &\leq \sum\nolimits_{t\in \N} \frac{6\delta}{\pi^{2}t^{2}} \sum\nolimits_{(E_{r})_{r\in [R]}} \P(\mathbb{A}_{r}(i^{\ast}) = E_{r} \, \forall r\in [R] ) \leq \delta.
\end{align*}
On $\mathcal{E}^{c}$ we have $\ol{\gamma}_{\mathbb{A}_{r}(i^{\ast})}(t) < c_{\delta/R}(t)$ for all $t\in \N, r\in [R]$ and thus in particular $\ol{\gamma}_{\mathbb{A}_{r}(i^{\ast})}^{-1}(\alpha) \geq c_{\delta/R}^{-1}(\alpha)$ for any $\alpha \in (0,\infty)$. Since  $\max_{Q\in \mathcal{Q}_{\leq k}(i^{\ast})} \Delta_{(f(|Q|)+1)|Q} \leq \eps$, Theorem \ref{Thm_generalSufficient} thus lets us conclude
\begin{align*}
    \P\left( \text{Alg. 1 returns } i^{\ast} \right)
    &\geq  \P\left( B > R \max\nolimits_{r\in [R]} P_{r} \ol{\gamma}_{\mathbb{A}_{r}(i^{\ast})}^{-1} \left( \frac{\Delta_{(f(|\mathbb{A}_{r}(i^{\ast})|)+1)|\mathbb{A}_{r}(i^{\ast})}}{2}\right) \right)\\
    &\geq  \P\left( \left\{ B > R \max\nolimits_{r\in [R]} P_{r} \ol{\gamma}_{\mathbb{A}_{r}(i^{\ast})}^{-1} \left( \eps/2\right) \right\} \cap \mathcal{E}^{c}\right)\\
     &\geq  \P\left( \left\{ B > R \max\nolimits_{r\in [R]} P_{r} c_{\delta/R}^{-1} \left( \eps/2\right) \right\} \cap \mathcal{E}^{c}\right)\\
     &= \P(\mathcal{E}^{c}) \geq 1-\delta,
\end{align*}
where the equality holds due to the assumption on $B$. Consequently, the statement holds with
\begin{align*}
      C(\delta,\eps,k,R) &\coloneqq c_{\delta/R}^{-1}(\eps/2)\\
      &\leq \left\lceil \frac{32}{\eps^{2}} \left( \ln \left( \frac{32\sqrt{2R/(3\delta)}\pi e}{\eps^{2}} \right) + \ln \ln \left( \frac{32\sqrt{2R/(3\delta)}\pi}{\eps^{2}} \right) \right) \right\rceil +1 \\
      &\in \mathcal{O}\left( \frac{1}{\eps^{2}}\ln\left( \frac{R}{\delta \eps^{4}}\right)\right).
\end{align*}
\end{proof}

\clearpage

\section{Comparisons of the Algorithms} \label{sec_comparison}
In the following we summarize the theoretical results obtained for our proposed algorithms in a concise way. 
First of all, we give an overview of the individual key quantities of each algorithm in Table \ref{tab:AlgoComparison2}, where we assume w.l.o.g.\ that ${n \choose k}$ is a divisor of $B$ in \textsc{RoundRobin} to make the assignments of $R$, $P_r$ and $f(s)$ for \textsc{RoundRobin} well-defined.
The maximal number of different query sets is derived in Section \ref{sec:diff_query_sets}.
\begin{table*} [!ht]
    \caption{Comparison of the maximal number of rounds, the maximal number of partitions per round, the amount of retained arms from each partition and the maximal number of query sets for \textsc{RoundRobin} and our proposed algorithms CSWS, CSR and CSH.}
    \label{tab:AlgoComparison2}
    \centering
	\resizebox{0.99\textwidth}{!}{
    \begin{tabular}{lllll}
        \toprule[1pt]
        Alg. & $R$ & $P_{r}$ & $f(x)$ & $\max \#$query\_sets \\
        \midrule[0.5pt]
        \textsc{RoundRobin} 
        & $1$ 
        & ${n \choose k}$ 
        & $x \mapsto x$ 
        & ${n \choose k}$ \\
        \textsc{CSWS} 
        & $\left\lceil \log_k(n) \right\rceil +1$ 
        & $\left\lceil \frac{n}{k^r} \right\rceil$ 
        & $x \mapsto 1$ 
        & $ R^{CSWS} + n \cdot \left(\frac{1 - 1/k^{\lceil\log_k(n)\rceil + 1}}{k-1 } \right) $ \\
        \textsc{CSR} 
        & $\left\lceil\log_{1-\frac{1}{k}}\left(\frac{1}{n}\right)\right\rceil+k-1$ 
        & $\left\lceil\frac{n(1-\frac{1}{k})^{(r-1)}}{k}\right\rceil$ 
        & $x \mapsto x-1$ 
        & $  R^{CSR} + n\left(1-\left(1-\frac{1}{k}\right)^{\lceil\log_{1-\frac{1}{k}}\left(\frac{1}{n}\right)\rceil +k-1}\right) $ \\
        \textsc{CSH} 
        & $\left\lceil \log_2(n) \right\rceil + \left\lceil \log_2(k) \right\rceil$ 
        & $\left\lceil \frac{n}{2^{r-1} k} \right\rceil$ 
        & $x \mapsto \left\lceil \frac{x}{2} \right\rceil$ 
        & $R^{CSH} + \frac{2n}{k}\left( 1-1/2^{\lceil \log_2(n) \rceil + \lceil \log_2(k) \rceil} \right) $ \\
        \bottomrule[1pt]
    \end{tabular}}
\end{table*}

Using Remark \ref{remark:rough_suff_budget} we can derive the following sufficient budgets of the algorithms summarized in the following table, where $\pi(Q) \in Q$ be the $\lfloor \frac{|Q|}{2} \rfloor +1 $- th best arm with respect to $(S_{i |Q})_{i \in Q}.$ 

\begin{table*}   [!ht]
\caption{Comparison of the sufficient budget for \textsc{RoundRobin} and our proposed algorithms CSWS, CSR and CSH.}
  \label{tab:AlgoComparison}
      \centering
      
	\resizebox{0.99\textwidth}{!}{
    \begin{tabular}[t]{ll}
        \toprule[1pt]
        Algorithm & Sufficient budget \\
        \midrule[0.5pt]
        \rule{0pt}{20pt} \textsc{RoundRobin}
        & ${n \choose k} \max_{i\in \mathcal{A}, i \neq i_{\mathcal B}^{\ast}  } \left(\hat{\gamma}_{i_{\mathcal B}^{\ast}  ,i}^{\max}\right)^{-1} \left(\frac{S_{i_{\mathcal B}^{\ast}  }^{\mathcal B} - S_{i}^{\mathcal B}}{2}\right)$ \\
        \rule{0pt}{20pt} CSWS 
        & $ \left\lceil \frac{n}{k} \right\rceil (\lceil \log_k(n) \rceil +1)  \cdot \max_{Q \in \mathcal{Q}_{\leq k} : i^{\ast}\in Q} \max_{i\in Q \setminus \{i^{\ast}\}} \left\lceil \bar{\gamma}^{-1} \left( \frac{S_{i^{\ast}|Q} - S_{i|Q}}{2}\right) \right\rceil$  \\
        \rule{0pt}{20pt} CSR 
        & $\left\lceil\frac{n}{k}\right\rceil \left(\left\lceil\log_{1-\frac{1}{k}}\left(\frac{1}{n}\right)\right\rceil+k-1\right)$
        $ \cdot \max_{Q \in \mathcal{Q}_{\leq k}: i^{\ast}\in Q}  \min_{i\in Q\backslash \{i^{\ast}\}} \left\lceil \bar{\gamma}^{-1} \left( \frac{S_{i^{\ast}|Q} - S_{i|Q}}{2}\right) \right\rceil$\\
        \rule{0pt}{20pt} CSH 
        & $ \lceil \frac{n}{k} \rceil \left(\left\lceil \log_2(n) \right\rceil + \lceil \log_2(k) \rceil \right)$
        $\cdot\max_{Q \in \mathcal{Q}_{\leq k}: i^{\ast}\in Q} \left\lceil \bar{\gamma}^{-1} \left( \frac{S_{i^{\ast}|Q} - S_{\pi(Q)|Q}}{2}\right) \right\rceil$  \\
         \bottomrule[1pt]
    \end{tabular}}
\end{table*}

In Section \ref{sec:comp_suff_budget} we compare these quantities for the special case, in which the gaps $\Delta_{i|Q} = S_{i^{\ast}|Q} - S_{i|Q}$ are all equal to some $\Delta>0,$ while in Section \ref{sec:application_stoch} we derive the sufficient budgets resulting from Corollaries \ref{Cor:ApplReward} and \ref{Cor:ApplPB} for the reward setting and preference-based setting, respectively, to return the best arm with high probability in the stochastic setting.
Note that if $\gamma_{i|Q}(t) = \gamma(t)$ and $S_{(2)|Q} = \dots = S_{(|Q|)|Q}$ are fulfilled for all $Q\in \mathcal{Q}_{\leq k}, i\in Q$ and $t\in \N$, then 
the lower bound in Theorem \ref{Thm_Main_LB_GCW} (i) matches the above upper bound for CSWS up to a factor $C = \lceil \log_{k}(n)\rceil +1$.

\subsection{Maximal Number of Different Query Sets} \label{sec:diff_query_sets}
The maximal number of required query sets for each algorithm is $\sum_{r=1}^R P_r$. Note that this is a geometric series and thus the partial sum can easily be computed for each of our proposed algorithms.
\paragraph{CSWS}
By using the specified valued of $R$ and $P_r$ for CSWS, we obtain that the number of different query set is at most
\begin{align*}
    \sum_{r=1}^{R^{CSWS}} P_r^{CSWS} &= \sum_{r=1}^{\lceil\log_{k}(n)\rceil +1} \left\lceil \frac{n}{k^r} \right\rceil \\
    &\le  \lceil \log_{k}(n) \rceil + 1 + \sum_{r=1}^{\lceil\log_{k}(n)\rceil +1} \frac{n}{k^r} \\
    &=  \lceil \log_{k}(n) \rceil + 1 + n \cdot \left(\sum_{r=0}^{\lceil\log_{k}(n)\rceil +1} \left(\frac{1}{k} \right)^{r} -  1 \right) \\
    &=  \lceil \log_{k}(n) \rceil + 1 + n \cdot \left(\frac{1 - 1/k^{\lceil\log_k(n)\rceil + 2}}{1- 1/k } -1\right)  \\
    &= \lceil \log_{k}(n) \rceil + 1 + n \cdot \left(\frac{1 - 1/k^{\lceil\log_k(n)\rceil + 1}}{k-1 } \right) ,
\end{align*}
where we used for the inequality that $\lceil x \rceil \leq x+1$ for any $x\in \R.$

\paragraph{CSR} For CSR we get as an upper bound on the number of different query sets:
\begin{align*}
    \sum_{r=1}^{R^{CSR}} P_r^{CSR} &= \sum_{r=1}^{\lceil\log_{1-\frac{1}{k}}\left(\frac{1}{n}\right)\rceil +k-1} \left\lceil \frac{n\left(1-\frac{1}{k}\right)^{r-1}}{k} \right\rceil \\
    & \le  \left\lceil\log_{1-\frac{1}{k}}\left(\frac{1}{n}\right)\right\rceil +k-1 + \sum_{r=1}^{\lceil\log_{1-\frac{1}{k}}\left(\frac{1}{n}\right)\rceil +k-1} \frac{n\left(1-\frac{1}{k}\right)^{r-1}}{k}  \\
    &= \left\lceil\log_{1-\frac{1}{k}}\left(\frac{1}{n}\right)\right\rceil +k-1 + \frac{n}{k} \sum_{r=0}^{\lceil\log_{1-\frac{1}{k}}\left(\frac{1}{n}\right)\rceil +k-2} \left(1-\frac{1}{k}\right)^{r} \\
    &= \left\lceil\log_{1-\frac{1}{k}}\left(\frac{1}{n}\right)\right\rceil +k-1 + \frac{n}{k} \frac{ \left(1-\left(1-\frac{1}{k}\right)^{\lceil\log_{1-\frac{1}{k}}\left(\frac{1}{n}\right)\rceil +k-1}\right)}{\left(1- \left(1 - \frac{1}{k}\right)\right)} \\
    &= \left\lceil\log_{1-\frac{1}{k}}\left(\frac{1}{n}\right)\right\rceil +k-1 + n\left(1-\left(1-\frac{1}{k}\right)^{\lceil\log_{1-\frac{1}{k}}\left(\frac{1}{n}\right)\rceil +k-1}\right) .
\end{align*}

\paragraph{CSH}
Similarly, we can obtain for CSH the following maximum number of different query sets:
\begin{align*}
    \sum_{r=1}^{R^{CSH}} P_r^{CSH}
    &= \sum_{r=1}^{\lceil \log_2(n) \rceil + \lceil \log_2(k) \rceil} \left\lceil \frac{n}{2^{r-1}k} \right\rceil \\
    &\le \lceil \log_2(n) \rceil + \lceil \log_2(k) \rceil + \sum_{r=1}^{\lceil \log_2(n) \rceil + \lceil \log_2(k) \rceil} \frac{n}{2^{r-1}k}   \\
    &= \lceil \log_2(n) \rceil + \lceil \log_2(k) \rceil + \frac{n}{k} \sum_{r=0}^{\lceil \log_2(n) \rceil + \lceil \log_2(k) \rceil - 1} \left(\frac{1}{2}\right)^{r} \\
    &= \lceil \log_2(n) \rceil + \lceil \log_2(k) \rceil + \frac{2n}{k}\left( 1-1/2^{\lceil \log_2(n) \rceil + \lceil \log_2(k) \rceil} \right).
\end{align*}

These upper bounds on the maximum number of different query sets are summarized in Table \ref{tab:AlgoComparison2}.
Note that \textsc{RoundRobin} by design queries the possible query sets of $\mathcal{Q}_{=k}$ in a round-robin fashion, so that the number of different query sets is indeed  $ |\mathcal{Q}_{=k}| =  {n \choose k}.$

\subsection{Comparison of Sufficient Budgets} \label{sec:comp_suff_budget}

In order to compare the derived sufficient budgets of the different algorithms (see Table \ref{tab:AlgoComparison}), we consider in the following the setting where the generalized Condorcet winner coincides with the generalized Borda winner. In addition we assume that the limit statistic $S_{i|Q}$ for each arm $i \in \mathcal{A}$ has always the same difference to the limit of the optimal arm $S_{i^{\ast}|Q}$ if $i^{\ast} \in Q$. 
More precisely, for each arms $i \in \mathcal{A}$ and each query set $Q \in \mathcal{Q}_{\le k}$ we have $\Delta_{i|Q} = \Delta$ for some fixed $\Delta > 0$. 
In this way, the $\gamma$-dependent term present in the sufficient budget for each algorithm  is simply $\left\lceil \bar{\gamma}^{-1} \left( \frac{\Delta}{2}\right) \right\rceil$. 
As a consequence, we can neglect this term as it has no influence on the differences in the desired budgets for the various algorithms and the remaining term based on the product of the number of rounds, i.e.\ $R,$ and the number of partitions in round $1$, i.e. $P_1,$ is driving the (rough) sufficient budget bounds (see Table \ref{tab:AlgoComparison2}).
However, the number of partitions in round $1$ is the same for all algorithms, so that we can neglect this term as well.
With a slight abuse of denotation, we refer to this remainder term simply as the sufficient budget in the following.
With these considerations, it is easy to see that \textsc{RoundRobin} requires the highest sufficient budget even for moderate sizes of $n$ if $k$ is sufficiently lower than $n$. 
To get an impression how the sufficient budget behaves for the more sophisticated algorithms based on the successive elimination strategy, we plot these in Figure \ref{fig:SuffBudgetComparison} as curves depending on the number of arms $n$ for different  subset sizes $k.$
Note, that in contrast to CSWS and CSH, the sufficient budget of CSR is higher for bigger subset sizes $k$, since only a smaller proportion of all arms is discarded after each round.
In the case $k=2$ the number of rounds are all the same, so that consequently the sufficient budget is the same for all three algorithms.

\begin{figure}[!ht]
    \centering
    \includegraphics[width=0.9\linewidth]{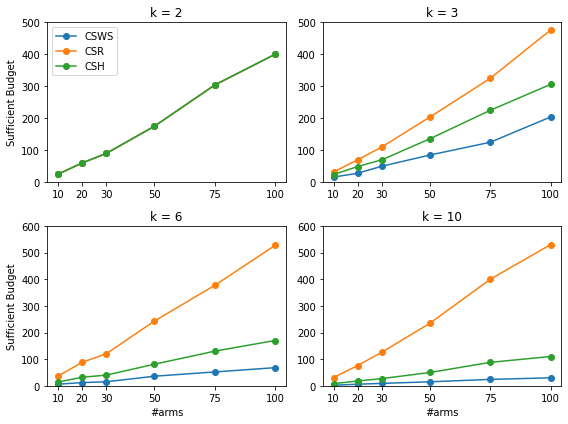}
    \caption{Comparison of required budget for our proposed algorithms for different values of the number of arms $n$ and the subset size $k$.}
    \label{fig:SuffBudgetComparison}
\end{figure}

\clearpage
\subsection{Applications to Stochastic Settings} \label{sec:application_stoch}
In Table \ref{tab:application_budgets} the sufficient budgets for our proposed algorithms in the stochastic setting with reward feedback and preference-based feedback are listed. 
Note, that these results are simply derived by applying Corollary \ref{Cor:ApplReward} and resp.\ Corollary \ref{Cor:ApplPB} with the specific instantiations of $R$ and $P_r$ for our algorithms (see Tables \ref{tab:AlgoComparison2} and \ref{tab:AlgoComparison}).
\begin{table*}[!ht]
    \caption{Comparison of the sufficient budgets for our proposed algorithms CSWS, CSR and CSH in the reward and preference-based setting.}
    \label{tab:application_budgets}
    \centering
	\resizebox{0.99\textwidth}{!}{
    \begin{tabular}[t]{ll}
        \toprule[1pt]
        Alg. & Budget in reward setting  \\
        \midrule[0.5pt]
        \textsc{CSWS} 
        & $\frac{1}{\epsilon^2} \ln\left( \frac{k\left(\left\lceil \log_k(n) \right\rceil +1\right)}{\delta} \ln\left( \frac{k \left(\left\lceil \log_k(n) \right\rceil +1\right)}{\epsilon\delta} \right)\right) \cdot \left(\left\lceil \log_k(n) \right\rceil +1\right) \left\lceil \frac{n}{k} \right\rceil$ \\
        \textsc{CSR} 
        & $\frac{1}{\epsilon^2} \ln\left( \frac{k \left(\left\lceil\log_{1-\frac{1}{k}}\left(\frac{1}{n}\right)\right\rceil+k-1\right) }{\delta} \ln\left( \frac{k \left(\left\lceil\log_{1-\frac{1}{k}}\left(\frac{1}{n}\right)\right\rceil+k-1\right)}{\epsilon\delta} \right)\right) \cdot \left(\left\lceil\log_{1-\frac{1}{k}}\left(\frac{1}{n}\right)\right\rceil+k-1\right) \left\lceil \frac{n}{k} \right\rceil $ \\
        \textsc{CSH} 
        & $\frac{1}{\epsilon^2} \ln\left( \frac{k \left(\left\lceil \log_2(n) \right\rceil + \left\lceil \log_2(k) \right\rceil \right)}{\delta} \ln\left( \frac{k \left(\left\lceil \log_2(n) \right\rceil + \left\lceil \log_2(k) \right\rceil \right)}{\epsilon\delta} \right)\right) \cdot \left(\left\lceil \log_2(n) \right\rceil + \left\lceil \log_2(k) \right\rceil \right) \left\lceil \frac{n}{k} \right\rceil$ \\
        \hline
        \rule{0pt}{15pt} Alg. & Budget in preference-based setting \\
        \hline
        \textsc{CSWS}
        & $\frac{1}{\epsilon^2} \ln \left( \frac{\left\lceil \log_k(n) \right\rceil +1}{\delta \epsilon^4} \right) \cdot \left(\left\lceil \log_k(n) \right\rceil +1\right) \left\lceil \frac{n}{k} \right\rceil$ \\
        \textsc{CSR}
        & $\frac{1}{\epsilon^2} \ln \left( \frac{\left\lceil\log_{1-\frac{1}{k}}\left(\frac{1}{n}\right)\right\rceil+k-1}{\delta \epsilon^4} \right) \cdot \left(\left\lceil\log_{1-\frac{1}{k}}\left(\frac{1}{n}\right)\right\rceil+k-1\right) \left\lceil \frac{n}{k} \right\rceil $\\
        \textsc{CSH}
        & $\frac{1}{\epsilon^2} \ln \left( \frac{\left\lceil \log_2(n) \right\rceil + \left\lceil \log_2(k) \right\rceil}{\delta \epsilon^4} \right) \cdot \left(\left\lceil \log_2(n) \right\rceil + \left\lceil \log_2(k) \right\rceil \right) \left\lceil \frac{n}{k} \right\rceil$ \\
        \bottomrule[1pt]
    \end{tabular}}
\end{table*}

\clearpage
\section{Further Experiments} \label{sec:further_exp}
In the following, we present some further experiments comparing our proposed algorithms with each other on synthetic data including a detailed description of the data generation and the experiment setting.
\subsection{Synthetic Data} \label{sec:syntheticdata}
For each $Q \in \mathcal{Q}_{\leq k}$ with $Q=\{i_1,\ldots,i_{|Q|}\}$ we consider the case where the observation vector $\mathbf{o}_{Q}$ is a random sample from a multivariate Gaussian distribution with mean $\mu_Q =( \mu_{i_1|Q}, \ldots,      \mu_{i_{|Q|}|Q} )^\top$ and a diagonal covariance matrix $\mathrm{diag}(\sigma_{i_1|Q}, \ldots,\sigma_{i_{|Q|}|Q} ).$
Here, $\mu_{i_j|Q}$ are values in $[0,1]$ for $i_j \neq i^*$ and $\sigma_{i|Q}$ in $[0.05, 0.2]$ (all randomly sampled).
For any $Q$ with $i^* \in Q$ we set $\mu_{i^*|Q} = \max_{j \in Q, j\neq i^*} \mu_{j|Q} + \varepsilon$ for some $\varepsilon>0,$ which ensures (A2) to hold for the expected values.  In our experiments we always use a value of $\varepsilon=0.1$. 
In the following we vary the values of $n \in \{50,100\}$, $k \in \{2,4,6,8,10\}$ and $B \in \{50, 100, 200, 300, 500\}$.

We consider a reward setting and use the empirical mean as the statistic (see Section \ref{sec:application_stochastic}). 
We do not force the generalized Borda winner to be the same as the generalized Condorcet winner, but they naturally coincidence in most of the runs by sampling the observation vector as defined above. \par
\begin{figure*}[ht]
\centering
    \includegraphics[width=0.99\linewidth]{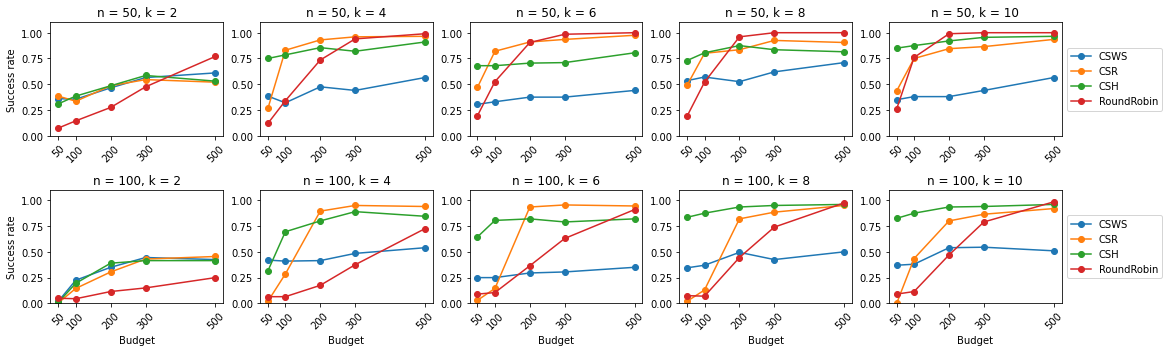}
    \caption{Success rates of our proposed algorithms for varying $n$, $k$ and budget $B$ in the reward setting.}
    \label{fig:SuccessRateRewardSetting}
\end{figure*}
The success rates of our proposed algorithms for identifying $i^*$ given a budget $B$ are shown in Figure \ref{fig:SuccessRateRewardSetting}. 
It is visible, that in particular for the challenging scenario, where the budget $B$ and the subset size $k$ are small and the number of arms $n$ is large, both CSH and CSR perform well.
Especially CSH has overall a solid performance.

\paragraph{Reward setting.} In contrast to the experiments with reward feedback shown in the main paper, we try in the following experiments to force the generalized Borda winner to be different from the generalized Condorcet winner. For this purpose, we fix one random arm $i_{\mathcal B}^{\ast}   \in [n]\backslash \{i^{\ast}\}$ as the prospective generalized Borda winner and set its expected value to $\mu_{i_{\mathcal B}^{\ast}  |Q} = \max_{j\in Q, j\neq i_{\mathcal B}^{\ast}  } \mu_{j|Q} + 2\epsilon$ for any $Q \in \mathcal{Q}_{\leq k}$ with $i_{\mathcal B}^{\ast}   \in Q $ and $i^{\ast} \notin Q$. Thus, $i_{\mathcal B}^{\ast}  $ is likely the generalized Borda winner and is different from the generalized Condorcet winner. Since our goal is to find the generalized Condorcet winner $i^{\ast}$, \textsc{RoundRobin} will probably fail most of the times in finding $i^{\ast}.$ 
This is due to the fact that \textsc{RoundRobin} focuses on identifying $i_{\mathcal B}^{\ast}  ,$ i.e., the the generalized Borda winner, which, however, does not coincidence with the generalized Condorcet winner $i^{\ast}$.

This suspicion is confirmed by the results of the experiments shown in Figure \ref{fig:SuccessRewardBordaCondorcetDifferent} illustrating the empirical success rates for finding the generalized Condorcet winner in the setting described above.
Except for some cases where the subset size $k$ is relatively large in comparison to the total number of arms, such that the generalized Condorcet winner is already contained in most of the seen subsets and hence is automatically also the generalized Borda winner, \textsc{RoundRobin} performs poorly in finding the generalized Condorcet winner and is always outperformed by the algorithms based on the combinatorial successive elimination strategy in Section \ref{sec:gen_framework}.

\begin{figure}[!ht]
    \centering
    \includegraphics[width=\linewidth]{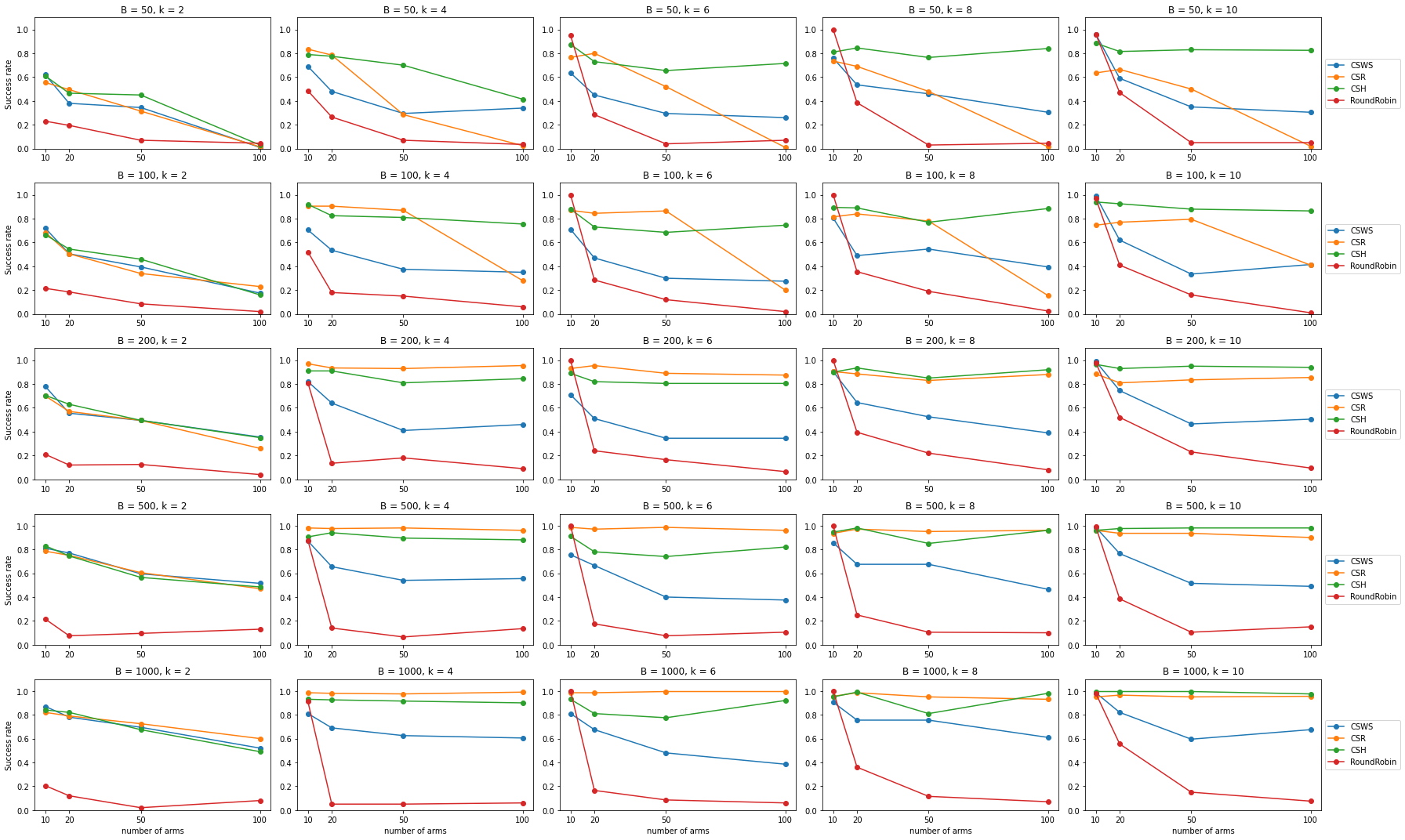}
    \caption{Success rates of our proposed algorithms for varying $n$, $k$ and budget $B$ in the reward setting with different generalized Condorcet winner and generalized Borda winner.}
    \label{fig:SuccessRewardBordaCondorcetDifferent}
\end{figure}

\paragraph{Preference-based setting with different GCW and GBW.}
In the preference-based setting we ignore the explicit numerical values of the observation vector and only use the information which arm was (not) the winner, i.e., which had (not) the highest observation value in the query set used , formally $s_{i_{j}|Q}(t) = \frac{1}{t} \sum_{s=1}^{t} \mathds{1}\{o_{i_{j}|Q}(s) = \max_{i = i_1, \dots i_{|Q|}} o_{i|Q}(s)\}.$ 
Additionally, we fix one arm $i_{\mathcal B}^{\ast}   \in [n] \backslash \{i^{\ast}\}$ and set $\mu_{i_{\mathcal B}^{\ast}  |Q} = \max_{j \in Q, j\neq i_{\mathcal B}^{\ast}  } \mu_{j|Q} + 2\varepsilon$ for any $Q$ with $i_{\mathcal B}^{\ast}   \in Q$ and $i^{\ast} \notin Q.$
In this way, $i_{\mathcal B}^{\ast}  $ is the generalized Borda winner and different from $i^{\ast}.$
\par
The success rates of our proposed algorithms for identifying $i^*$ in this setting are shown in Figure \ref{fig:SuccessRatePBSetting}. 
As expected our methods outperform \textsc{RoundRobin} in all scenarios.

\begin{figure*}[t]
    \includegraphics[width=\linewidth]{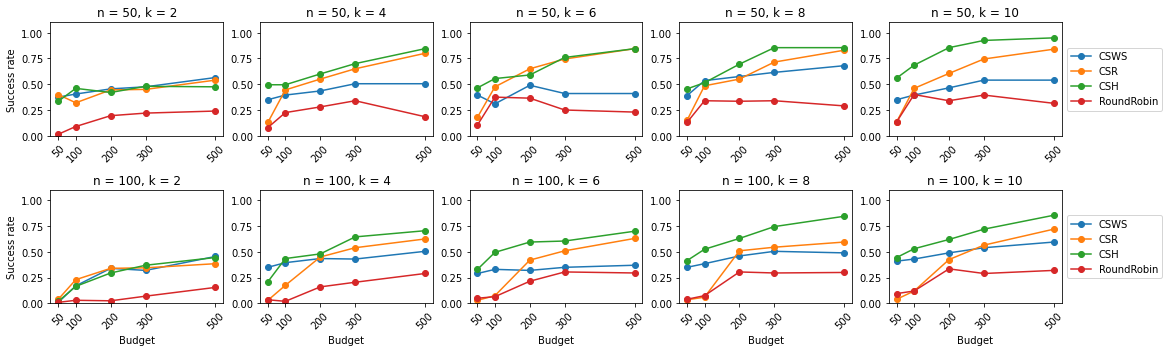}
    \caption{Success rates of our proposed algorithms for varying $n$, $k$ and budget $B$ in the preference-based setting with different generalized Condorcet winner and generalized Borda winner.}
    \label{fig:SuccessRatePBSetting}
\end{figure*}

\paragraph{Preference-based setting.} 
We now investigate the case, in which we do not force the generalized Borda winner and the generalized Condorcet winner to be different, thus they will naturally coincidence in most of the cases. 
This is achieved by considering the problem configuration as in the reward setting specified in Section \ref{sec:experiments}, and ignoring the explicit numerical values (as in the preference-setting above).

The resulting success rates for finding the generalized Condorcet winner illustrated in Figure \ref{fig:SuccessPBWinnerEqual} are similar to the results in the reward setting for matching generalized Condorcet winner and generalized Borda winner.
This means that, in particular, when the budget is small, the number of arms is large and the  subset size is  small, the algorithms following the combinatorial successive elimination strategy outperform \textsc{RoundRobin}.
Note that this setting is arguably the most relevant setup for practical applications.
Moreover, Figure \ref{fig:SuccessPBWinnerEqual} illustrates the natural effect one would expect for the number of arms $n$ on success rates, namely that success rates decrease with a larger number of arms.

\begin{figure}[ht]
    \centering
    \includegraphics[width=\linewidth]{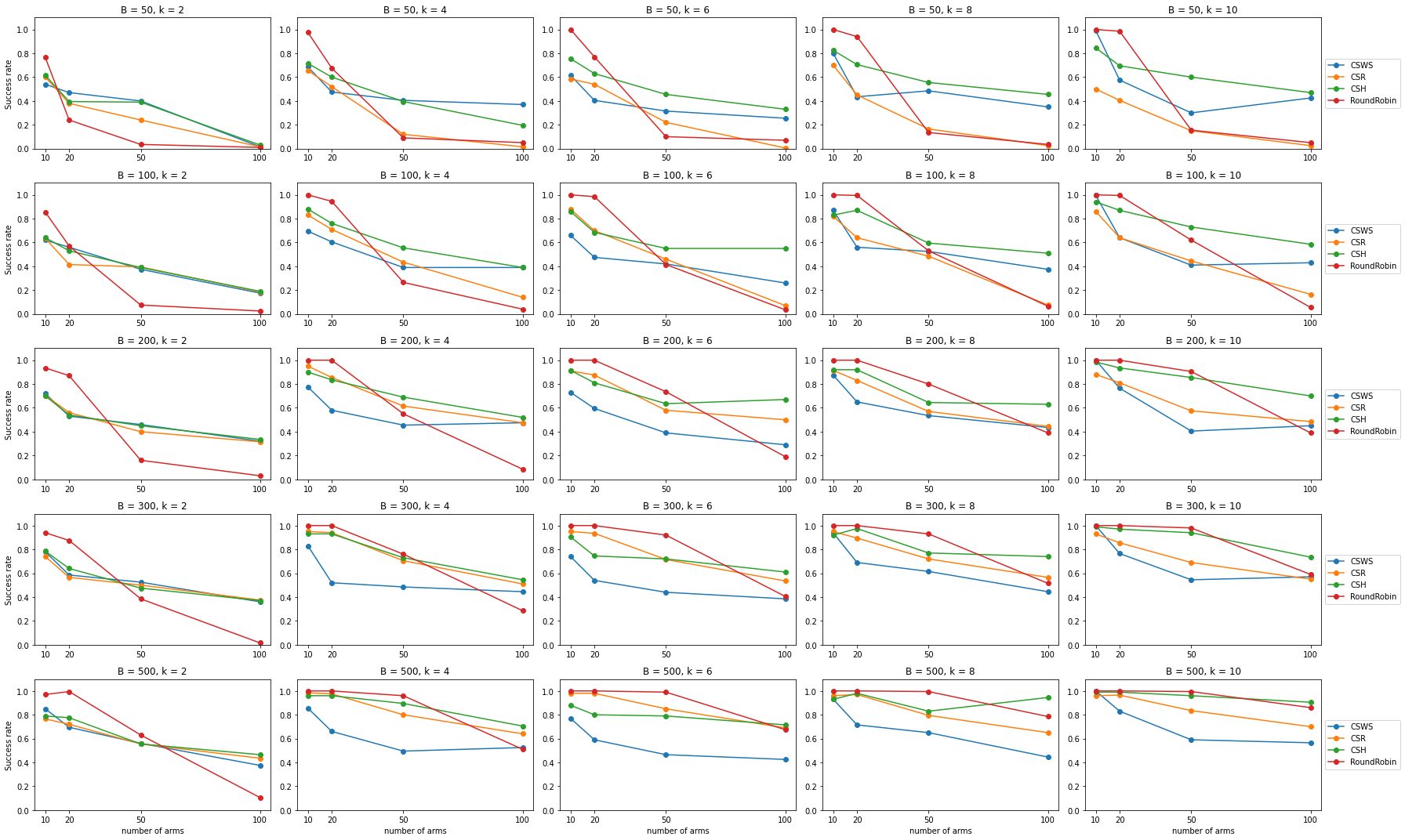}
    \caption{Success rates of our proposed algorithms for varying $n$, $k$ and budget $B$ in the preference-based setting with (mostly) matching generalized Condorcet winner and generalized Borda winner.}
    \label{fig:SuccessPBWinnerEqual}
\end{figure}

\clearpage

\subsection{Statistics beyond the Arithmetic Mean}
We consider in the following the reward setting, where each observation is random sampled from the following distribution
\begin{align*}
    \bm{o}_Q(t) \sim \mathcal{N}\left( \left( \begin{array}{c}
         \mu_{1|Q}  \\
         \vdots \\
         \mu_{|Q| | Q}
    \end{array} \right), \left( \begin{array}{c}
         \sigma_{1|Q} \\
         \vdots \\
         \sigma_{|Q| | Q}
    \end{array} \right)
    \right)
\end{align*}
for $\mu_{i|Q}$ is sampled randomly from $[0,1]$ and $\sigma_{i|Q}$ from $[0.05, 0.2]$ for each arm $i \in Q$.
\paragraph{Median}
An alternative to the arithmetic mean would be to measure the quality of the arms by the median of the seen observations. In particular, when the observations are prone to outliers, the median provides a more robust statistic: $s_{i|Q}(t) = \textsc{Median}(o_{i|Q}(1), \dots, o_{i|Q}(t))$ for each arm $i \in Q$.
The results for this setting are illustrated in Figure \ref{Median}.

\begin{figure}[h]
    \centering
    \includegraphics[width=\linewidth]{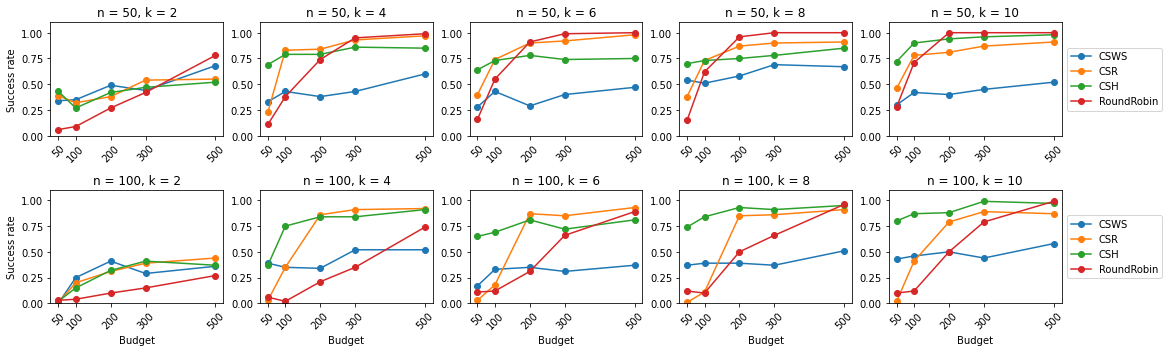}
    \caption{Success rates of our proposed algorithms for varying $n$, $k$ and budget $B$ in the rewards setting with (mostly) matching generalized Condorcet winner and generalized Borda winner and using the median as the statistic.}
    \label{Median}
\end{figure}

\paragraph{Power-Mean}
Another possibility is to use the so called power-mean, which is a compromise between the maximum and the arithmetic mean for a (multi)set of observations. 
Since the arithmetic mean is known to underestimate the true quality of an arm, while the maximum overestimates it, the power mean is often a good compromise, as it lies between the two.
It is defined by $s_{i|Q}(t) = \left(\frac{1}{t}\sum_{t'=1}^t o_{i|Q}(t')^q\right)^{1/q}$ for each arm $i \in Q$ and a fixed $q \in \mathbb{N}$. We use in the following $q = 2$.
The results for this setting are illustrated in Figure \ref{PowerMean}.

\begin{figure}[h]
    \centering
    \includegraphics[width=\linewidth]{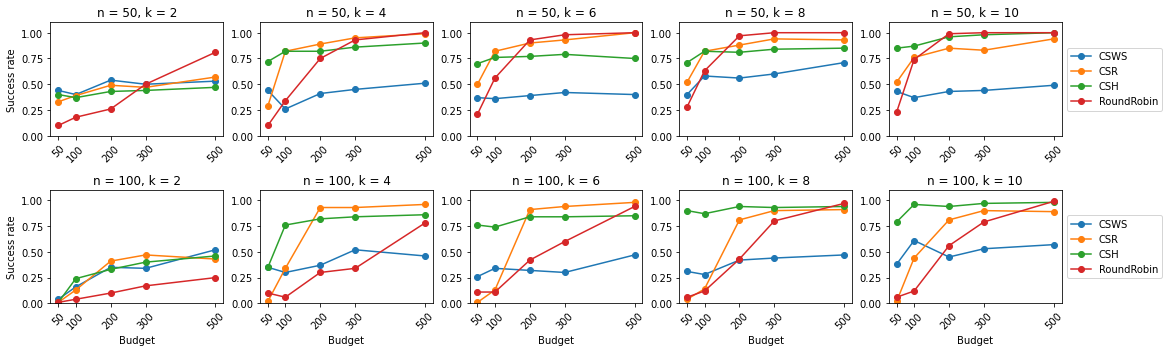}
     \caption{Success rates of our proposed algorithms for varying $n$, $k$ and budget $B$ in the rewards setting with (mostly) matching generalized Condorcet winner and generalized Borda winner and using the power mean as the statistic.}
    \label{PowerMean}
\end{figure}

\end{document}